\newtheorem{obs}{Observation}
\newtheorem{prop}{Proposition}
\newtheorem{lem}{Lemma}
\newtheorem{cor}{Corollary}
\newtheorem{remark}{Remark}
\newcommand{\R}{\mathbb{R}}
\newcommand{\E}{\mathbb{E}}
\newcommand{\bigO}{\mathcal{O}}
\newcommand{\nnmu}{\hat{\mu}_\theta}
\newcommand{\nnLambda}{\hat{\Lambda}_\phi}
\newcommand{\ftmu}{\hat{\mu}}
\newcommand{\ftmud}{\vec{\mu}}
\newcommand{\ftp}{\hat{\Lambda}}
\newcommand{\ftpd}{\vec{\Lambda}}
\newcommand{\sep}{\!\;|\;\!}
\crefname{obs}{Observation}{Observations}
\Crefname{obs}{Observation}{Observations}
\crefname{prop}{Prop.}{Props.}
\Crefname{prop}{Prop.}{Props.}
\crefname{lem}{Lemma}{Lemmas}
\Crefname{lem}{Lemma}{Lemmas}
\crefname{cor}{Cor.}{Cors.}
\Crefname{cor}{Cor.}{Cors.}
\crefname{rem}{Remark}{Remarks}
\Crefname{rem}{Remark}{Remarks}
\crefname{app}{Appendix}{Appendices}
\Crefname{app}{Appendix}{Appendices}
\title{On the Effect of Regularization on Nonparametric Mean-Variance~Regression}
\author{%
\begin{tabular}{cc}
Eliot Wong-Toi$^{1}$ & Alex Boyd$^{2}$ \\
\texttt{ewongtoi@uci.edu} & \texttt{alex.boyd@gehealthcare.com} \\[0.8em]
Vincent Fortuin$^{3,4}$ & Stephan Mandt$^{1}$ \\
\texttt{vincent.fortuin@tum.de} & \texttt{mandt@uci.edu}
\end{tabular}
}
\date{}  
\begin{document}
\maketitle

\begin{center}
{\small
$^{1}$University of California, Irvine\\
$^{2}$GE Healthcare\\
$^{3}$Technical University of Munich\\
$^{4}$Helmholtz AI
}
\end{center}
\maketitle

\begin{abstract}%
Uncertainty quantification is vital for decision-making and risk assessment in machine learning. Mean-variance regression models, which predict both a mean and residual noise for each data point, provide a simple approach to uncertainty quantification.
However, overparameterized mean-variance models struggle with signal-to-noise ambiguity, deciding whether prediction targets should be attributed to signal (mean) or noise (variance). 
At one extreme, models fit all training targets perfectly with zero residual noise, while at the other, they provide constant, uninformative predictions and explain the targets as noise. 
We observe a sharp phase transition between these extremes, driven by model regularization. 
Empirical studies with varying regularization levels illustrate this transition, revealing substantial variability across repeated runs.
To explain this behavior, we develop a statistical field theory framework, which captures the observed phase transition in alignment with experimental results.
This analysis reduces the regularization hyperparameter search space from two dimensions to one, significantly lowering computational costs. 
Experiments on UCI datasets and the large-scale ClimSim dataset demonstrate robust calibration performance, effectively quantifying predictive uncertainty.
\end{abstract}

\begin{quotation}
\noindent\textbf{Keywords:}
mean-variance model, heteroskedastic regression, phase transition, overfitting, regularization
\end{quotation}

\section{Introduction}\label{sec:introduction}

Deep learning models are now routinely trained in massively overparameterized regimes, where the number of parameters far exceeds the number of data points \citep{belkin_reconciling_2019,zhang_understanding_2021}. 
In many applications, such models perform reliably despite their overcapacity: they interpolate, generalize, and exhibit robust behavior under mild regularization \citep{krizhevsky_imagenet_2012,mathew_deep_2021}. 
However, there exists a growing class of models and objectives for which overparameterization interacts in delicate ways with the underlying likelihood, exposing structural instabilities that persist even when standard architectural heuristics such as batch normalization, dropout, or classical weight decay are applied. 
In these settings, the model is not failing to optimize its objective; rather, it is faithfully optimizing an objective that is intrinsically ill-conditioned \citep{nix_estimating_1994,seitzer_pitfalls_2022}.

Such behavior is not well captured by conventional statistical learning theory.
Classical analyses often rely on bounded-capacity assumptions, convexity, Lipschitz continuity, or uniform generalization bounds \citep{goos_rademacher_2001,mohri_foundations_2012,pmlr-v128-vapnik20a}, none of which adequately describe the collective behavior that emerges when many coupled degrees of freedom co-adapt during training. 
Instead, the phenomena resemble those studied in statistical physics, where abrupt qualitative changes in solution structure, sensitivity to small perturbations in hyperparameters, and competing ``forces'' between different components of the model are common \citep{landau_statistical_2013,altland_condensed_2010}. 
These parallels suggest that tools from the study of interacting systems, particularly phase transitions and variational principles, provide a more suitable lens \citep{ringel_applications_2025}.

An illustrative example of these ideas is overparameterized mean--variance regression, in which a model learns both a predictive mean and an input-dependent noise level \citep{nix_estimating_1994,nix_learning_1994}. 
In principle, this formulation should enable calibrated uncertainty quantification \citep{skafte_reliable_2019,fortuin_deep_2022}. 
In practice, it often displays striking instabilities \citep{seitzer_pitfalls_2022}. 
On one extreme, the mean network can nearly interpolate the data, pushing residuals and predicted variances toward zero. 
On the other, even modest regularization of the mean can cause the model to flatten its predictions and attribute nearly all structure to the variance \citep{stirn_faithful_2023, immer_effective_2023}. 
These behaviors arise across a range of architectures and optimization settings, and they appear as sharp transitions between qualitatively distinct regimes. 
As we show later, these extremes correspond closely to limits in which either the mean or variance channel is effectively unregularized. 
In such cases the objective becomes unbounded.

To move beyond empirical observations and toward an explanatory framework, we analyze mean--variance regression from a field-theoretic perspective. 
In this view, the learned mean and log-precision functions are treated as smooth fields governed by a free-energy-like functional \citep{altland_condensed_2010,landau_statistical_2013}. 
This continuum formulation abstracts away architectural details while preserving the essential interactions between data fit and regularization. 
Taking variational derivatives yields coupled Euler--Lagrange equations. 
These partial differential equations describe the stationary configurations of the model and explicitly reveal how the likelihood and regularization act as competing forces that redistribute prediction error across the input domain.

This field-theoretic analysis reveals several key properties that are not evident from neural experiments alone. 
Most notably, we show that the unregularized maximum likelihood regime admits no finite stationary solution, providing a mathematical explanation for the extreme overconfidence observed when the mean network interpolates the data \citep{zhang_understanding_2021}. 
We further show that one-sided regularization, in which only the mean or only the variance is penalized, renders the objective unbounded. 
This implies that two-sided regularization is not merely empirically helpful but structurally necessary. 
More broadly, the PDE structure predicts that solutions organize into qualitatively distinct regimes separated by sharp or smooth transitions, forming a \emph{phase diagram} over the space of regularization strengths.

We validate these predictions experimentally using synthetic datasets, standard UCI benchmarks \citep{kelly_uci_nodate}, and a large-scale climate simulation dataset \citep{yu_climsim_2023}. 
Across all settings, we observe a consistent qualitative picture: regimes of variance collapse, mean collapse, underfitting plateaus, and an intermediate region where both fields adapt stably to the data. 
Importantly, the solutions obtained from the neural models align closely with those predicted by the field theory, even though the latter is solved in a continuum limit.

Finally, our analysis suggests a natural reparameterization of the regularization strengths in which the effective balance between likelihood and smoothing is captured by a one-dimensional quantity.
This substantially simplifies hyperparameter tuning and provides a principled method for avoiding pathological regimes. 
The next section reviews prior work on mean-variance regression, statistical-physics perspectives on overparameterized models, and Bayesian formulations that motivate our field-theoretic approach.

A preliminary version of this work appeared as a conference paper \citep{wong-toi2024understanding}. 
The present manuscript substantially extends that version by developing a full field-theoretic formulation, establishing new well-posedness results, and introducing a Bayesian extension.

This work makes the following contributions:
\begin{enumerate}

    \item We develop a field-theoretic formulation of verparameterized mean--variance regression, treating learned predictors as continuous fields governed by a variational principle.

    \item We derive the coupled Euler--Lagrange equations that characterize stationary solutions and illustrate how likelihood and regularization act as competing forces on the mean and precision fields.
    
    \item We prove that the unregularized variational MLE functional has no finite minimizer, clarifying the mathematical source of the collapse behavior seen in neural mean--variance regression.

    \item We show that one-sided regularization renders the objective unbounded, demonstrating that both the mean and variance must be jointly regularized for the learning problem to be well-posed.

    \item We introduce a reparameterized regularization scheme that reduces the effective hyperparameter search to one dimension, yielding practical advantages for tuning and stability.

    \item We characterize the resulting phase transition structure of the solution space, identifying the main qualitative regimes and describing the boundaries that separate them.

    \item We introduce a Bayesian Field Theory (BFT) perspective by placing Gaussian process--like priors on the predictor fields, recovering the deterministic field theory as the posterior mode and providing a principled pathway for capturing epistemic uncertainty.

    \item We show that the Bayesian Field Theory connects naturally to the classical statistics literature on penalized likelihood, Gaussian process priors, and spline-based smoothing, placing overparameterized neural mean--variance regression within a unified framework that bridges modern deep learning and traditional nonparametric methods.
    
    \item We validate the theoretical predictions across neural implementations and numerical solutions of the field theory, observing strong agreement in both topology and transition structure.
\end{enumerate}

The code for all neural network and field-theoretic experiments 
is available at \url{https://github.com/ewongtoi/deep-heteroskedastic-regression}.
\section{Related Work}
\label{sec:related_work}
At a high level, our analysis connects several strands of prior work. Classical and early neural formulations of mean--variance regression identified degeneracies such as variance collapse and proposed architectural or weight-based regularization schemes to mitigate them \citep{nix_estimating_1994,bishop_mixture_1994,bishop_regression_1996,hjorth_regularisation_1999,li_degeneracy_2000}. 
Modern deep-learning approaches revisit these models in highly overparameterized settings, highlighting practical instabilities and exploring regularization grids, architectural constraints, and optimization strategies to stabilize maximum-likelihood training \citep{seitzer_pitfalls_2022,skafte_reliable_2019,stirn_faithful_2023,sluijterman_optimal_2024,immer_effective_2023}. 
Bayesian perspectives introduce smoothness priors or approximate posterior inference to improve uncertainty calibration \citep{yau_estimation_2003,yuan_doubly_2004,lakshminarayanan_simple_2017,stirn_variational_2020,pmlr-v119-wenzel20a,pmlr-v139-izmailov21a}. 
Our contribution differs in focus: we study the fully overparameterized regime at the level of function space, using tools from statistical physics and variational calculus to characterize when and why these instabilities arise and how regularization induces distinct solution phases.

\paragraph{Uncertainty and Mean-Variance Regression.}
Uncertainty is commonly divided into epistemic (model) and aleatoric (data) components \citep{hullermeier2021aleatoric}, with the latter decomposed into homoskedastic and heteroskedastic noise. Handling input-dependent noise has long been an active area in statistics \citep{huber_behavior_1967,eubank_detecting_1993,le_heteroscedastic_2005,uyanto_monte_2022} and machine learning \citep{abdar_review_2021}, but remains comparatively uncommon in modern deep learning \citep{kendall_what_2017,fortuin_deep_2022} due to the training instabilities studied in this work. 
Modeling heteroskedasticity can be interpreted as reweighting data points by their predictive uncertainty, a principle shown to improve robustness under label noise and imbalance \citep{mandt_variational_2016,wang_robust_2017,khosla_neural_2022}.

\paragraph{Connections to Statistical Physics.}
There is increasing interest in applying tools from statistical physics to machine learning, particularly for understanding generalization, loss landscapes, and phase transitions \citep{ringel_applications_2025}. 
Analogies to spin glasses and jamming phenomena reveal transitions between underfitting and overfitting and the emergence of multimodal energy surfaces \citep{franz_simplest_2016,geiger_jamming_2019,ros_complex_2019}. 
Related work on symmetry breaking and continuous symmetries in optimization \citep{bamler_improving_2018} highlights how physical principles can influence the structure of learning dynamics and the geometry of model families.
Sharp generalization transitions have been documented in regression and classification, including interpolation thresholds and double descent \citep{belkin_reconciling_2019,wu2023precise,george2023training,veiga_phase_2023}. 
Similar transition-like behavior appears in deep generative models \citep{wang_deep_2025}. 
Statistical mechanics approaches to inference \citep{Zdeborova_statistical_2016,antenucci_glassy_2019} further illuminate regimes where inference becomes computationally challenging. 
These perspectives motivate the field-theoretic description developed in our work.

\paragraph{Classical and Early Neural MVR.}
Neural mean--variance regression dates to the 1990s, when \citet{nix_estimating_1994} and \citet{bishop_mixture_1994} introduced Gaussian models that learn both mean and variance, noting degeneracies such as variance collapse. 
Subsequent work proposed unbiased variance estimators \citep{bishop_regression_1996}, Bayesian regularization via Gaussian weight priors \citep{hjorth_regularisation_1999}, and analyses of heteroskedastic GLMs exhibiting similar instabilities \citep{li_degeneracy_2000}. 
These studies show that ill-posedness is not unique to deep networks but arises whenever the likelihood can be increased by attributing residual structure to noise.

\paragraph{Modern Deep MVR and Regularization.}
Recent work has revisited these issues in overparameterized deep networks. 
\citet{seitzer_pitfalls_2022} analyze gradient blow-up as variances approach zero and propose reweighting schemes. 
\citet{skafte_reliable_2019,stirn_faithful_2023} decouple or constrain the variance pathway, while \citet{sluijterman_optimal_2024} map regularization grids and show that both channels must be regularized. 
New work on covariance and correlation learning \citep{pmlr-v235-shukla24a,shukla2025towards} further expands the space of heteroskedastic models.

\paragraph{Bayesian Perspectives.}
Bayesian neural networks \citep{mackay_practical_1992,neal2012bayesian,blundell2015weight,gal2016uncertainty} and approximate inference methods \citep{blundell2015weight,Welling2011BayesianLV,mackay_practical_1992,pmlr-v119-wenzel20a,pmlr-v139-izmailov21a} provide mechanisms for epistemic uncertainty. 
Bayesian formulations of MVR place priors on mean and variance functions \citep{yau_estimation_2003,yuan_doubly_2004} or infer precision parameters variationally \citep{stirn_variational_2020}. 
These works motivate the Bayesian Field Theory developed later, which places priors directly on the predictor functions and recovers the deterministic field theory as its posterior mode.

\section{Pitfalls of Overparameterized Mean--Variance Regression}
\label{sec:pitfall}

This section introduces heteroskedastic mean–variance regression, explains why the maximum-likelihood objective becomes ill-posed in overparameterized settings, and motivates the regularization schemes used throughout. We conclude by describing the qualitative \emph{phases} that arise across the regularization space.

\subsection{Heteroskedastic Mean--Variance Regression}

We consider independent data points $\mathcal D=\{(x_i,y_i)\}_{i=1}^N$ with covariates
$x_i \in \mathcal X \subset \mathbb R^d$ drawn from $p(x)$ and responses $y_i \in \mathbb R$.  
Throughout, we adopt the heteroskedastic Gaussian model
\begin{equation}
    p(y \mid x; \mu, \Lambda)
    = \mathcal N\!\big(y \sep \mu(x),\,\Lambda(x)^{-1}\big),
    \label{eq:hetero-gaussian}
\end{equation}
with mean function $\mu:\mathcal X\to\mathbb R$ and precision function $\Lambda:\mathcal X\to\mathbb R_{>0}$.

To develop the continuum (field-theoretic) formulation, we interpret the dataset as a single realization of the process described by~\eqref{eq:hetero-gaussian}. 
No smoothness is assumed of the true $(\mu,\Lambda)$; the analysis depends only on the observed realization $y(\cdot)$.  
For analytic convenience we assume $y\in H^1(\mathcal X)$. 
Later, after introducing the predictor functions $(\ftmu,\ftp)$, we impose $(\ftmu,\ftp)\in H^1(\mathcal X)^2$ and $\ftp(x)\ge c>0$ a.e.

\subsection{Overparameterized Neural Regression Models}

A common approach is to parameterize $\mu$ and $\Lambda$ using neural networks, whose universal approximation properties make them suitable for modeling flexible mean and precision functions \citep{hornik_approximation_1991}.  
Let $\nnmu:\mathcal X\to\mathbb R$ and $\nnLambda:\mathcal X\to\mathbb R_{>0}$ denote overparameterized feed-forward networks with parameters $\theta$ and $\phi$.  
We assume they do not share parameters, although shared encoders have been explored in prior work \citep[e.g.,][]{skafte_reliable_2019,seitzer_pitfalls_2022,stirn_faithful_2023}.  
For each $x_i$, these networks produce the values $\nnmu(x_i)$ and $\nnLambda(x_i)$ that define the likelihood in~\eqref{eq:hetero-gaussian}.

The population cross-entropy between $p(x,y)$ and the predictive distribution $\hat p(y\sep x)p(x)$ is
\begin{align}
\ell(\theta,\phi)
= H(p,\hat{p})
= -\E_{p(x,y)}\!\left[\log \hat{p}(y\sep x)\right].
\end{align}
Because the dataset $\mathcal D$ is a single Monte Carlo draw from $p(x,y)$, empirical 
training corresponds to replacing this expectation with its sample average. Replacing 
the expectation with this Monte Carlo estimate yields
\begin{align}
\ell_{MLE}(\theta,\phi)
= \frac{1}{2N}\sum_{i=1}^N
\Big[\nnLambda(x_i)\hat r_\theta(x_i)^2 - \log\nnLambda(x_i)\Big].
\label{eq:mle}
\end{align}

\subsection{Why Maximum Likelihood Fails}

Although~\eqref{eq:mle} is well defined for finite models, it becomes fundamentally ill-posed in the overparameterized setting \citep{nix_estimating_1994,bishop_mixture_1994,bishop_regression_1996,seitzer_pitfalls_2022}.  
The two terms in~\eqref{eq:mle} place contradictory pressures on $\nnLambda$: the residual term favors $\nnLambda\!\to\!0$, while the $-\log\nnLambda$ term favors $\nnLambda\!\to\!\infty$. This competition alone does not stabilize $\nnLambda$.  

At the same time, any sufficiently expressive mean network can interpolate at least one data point, driving $\hat r_\theta(x_i)\!\to\!0$. 
Once a residual vanishes, the first term in~\eqref{eq:mle} no longer restricts $\nnLambda(x_i)$, allowing it to diverge and driving the objective unbounded below. 
A related degeneracy arises in finite-dimensional heteroskedastic generalized linear models \citep{li_degeneracy_2000} whenever exact interpolation is possible.
Across both settings, the root cause is the same: the heteroskedastic Gaussian likelihood provides no mechanism to prevent collapse or explosion in $\Lambda$, and overparameterization exposes this structural weakness fully. 
This perspective explains why architectural heuristics such as weight decay or shared encoders may alleviate but do not eliminate the degeneracy. 
To better understand the interaction between likelihood and regularization, we next introduce explicit regularization schemes.

\subsection{Regularization in Mean--Variance Regression}

A natural remedy is to add $L_2$ penalties to both networks, as suggested by \citet{hjorth_regularisation_1999}, who also interpret these penalties as Gaussian weight priors:
\begin{align}
\ell_{\alpha,\beta}(\theta, \phi) 
:= \ell_{MLE}(\theta, \phi) + \alpha\|\theta\|_2^2 + \beta\|\phi\|_2^2,
\label{alphabeta}
\end{align}
where $\alpha,\beta \in \R_{\ge 0}$. 
Regularizing $\theta$ prevents mean overfitting, while regularizing $\phi$ prevents precision collapse and controls the complexity of the predicted uncertainty. 
As $\alpha\!\to\!\infty$, the mean becomes constant; as $\beta\!\to\!\infty$, the model becomes homoskedastic.
\footnote{Assuming an unpenalized bias term in the final layer or standardized data.}

Although intuitive, $(\alpha,\beta)$ spans an unbounded domain, so we adopt the bounded reparameterization
\begin{align}
\ell_{\rho,\gamma}(\theta, \phi) :=
\rho\, \ell(\theta, \phi) + \bar{\rho}\left[\gamma\,\|\theta\|_2^2 + \bar{\gamma}\,\|\phi\|_2^2\right],
\label{rhogamma}
\end{align}
with $\rho,\gamma\in(0,1)$ and $\bar{\rho}=1-\rho$, $\bar{\gamma}=1-\gamma$.  
This parameterization is one-to-one with $(\alpha,\beta)$, and both formulations yield proportional gradients.  
Here $\rho$ determines the trade-off between data fit and total smoothness, whereas $\gamma$ specifies whether smoothness is allocated primarily to the mean network or the precision network. 
The limits $\gamma=1$ and $\gamma=0$ correspond to unregularized precision and unregularized mean, respectively, and $\rho=1$ recovers pure MLE.

\begin{figure*}
\centering
\includegraphics[width=\textwidth]{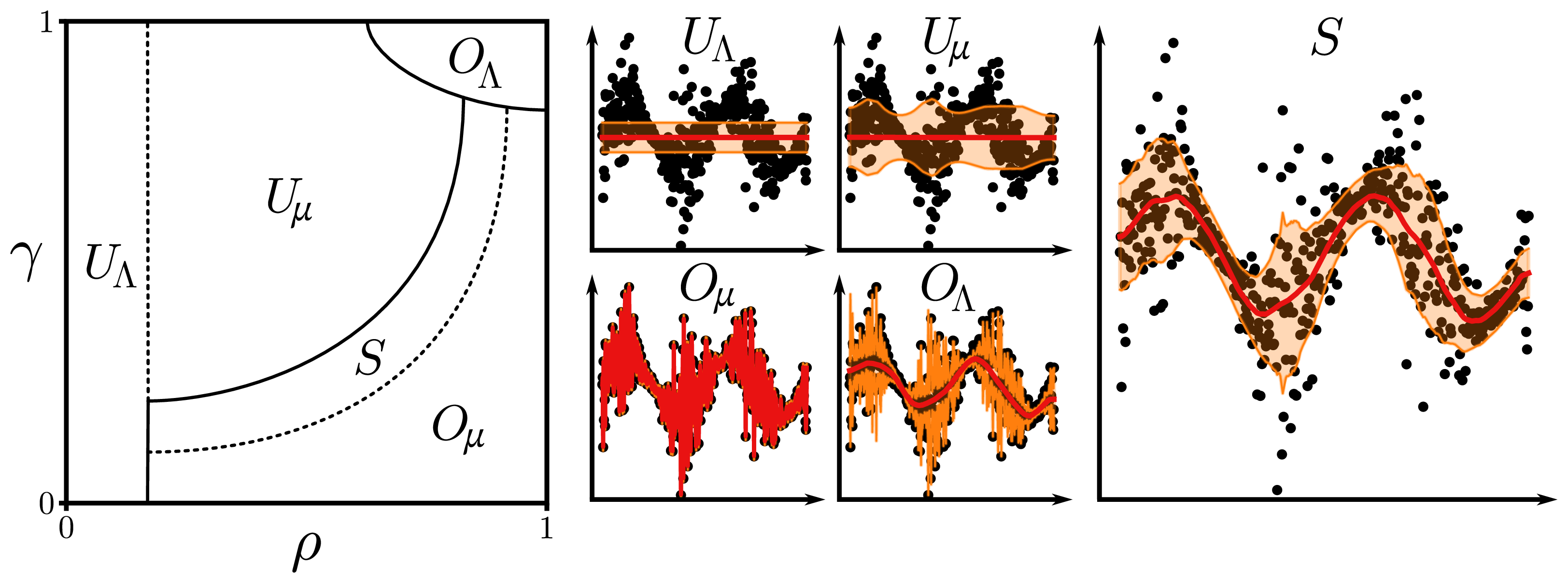}
\caption{
Phase-diagram sketch of mean–variance regression in the $(\rho,\gamma)$ plane (\emph{left}). 
Here $\rho$ controls the data-fit vs.\ smoothness trade-off, and $\gamma$ allocates smoothness between the mean and precision functions. 
Labeled regions indicate mean collapse ($O_\mu$), variance collapse ($O_\Lambda$), underfitting ($U_\mu$, $U_\Lambda$), and the stable regime $S$. 
Solid and dotted curves mark sharp and smooth transitions. 
Representative FT mean fits (red, with pointwise $\pm$~s.d.\ in orange) illustrate each regime 
(\emph{middle}, \emph{right}).
}
\label{fig:cartoonphases}
\end{figure*}

\subsection{Qualitative Phase Behavior}

Across the $(\rho,\gamma)$ space, the learned mean and precision functions exhibit a small number of recurring qualitative behaviors.  
These behaviors appear consistently across datasets, architectures, and both neural and field-theoretic solvers, suggesting an underlying structural organization.  
It is therefore useful to view the $(\rho,\gamma)$ plane as a \emph{phase diagram}, with representative solutions shown in \cref{fig:cartoonphases}.

\paragraph{Underfitting regimes.}
When $\rho$ is small, the solution is dominated by the regularization terms.  
In Region $U_{\Lambda}$, both functions remain close to constants, effectively ignoring the data.  
As $\rho$ increases and regularization is allocated primarily to the mean (Region $U_{\mu}$), the precision becomes more responsive, and the model explains variation through noise rather than signal.

\paragraph{Overfitting regimes.}
Collapse occurs when one network is effectively unregularized.  
In Region $O_{\mu}$, the mean nearly interpolates the data, leaving the precision largely irrelevant.  
In Region $O_{\Lambda}$, the precision adapts pointwise to fit residuals, producing spiky uncertainty estimates that mirror the data.

\paragraph{Stable regime.}
A comparatively narrow region $S$ exists between these extremes, where neither learned network collapses.  
Here the mean captures the dominant structure and the precision reflects heteroskedastic variation rather than residual interpolation.  
This region typically yields the most stable and well-calibrated solutions.

As we show in \cref{sec:field_theory}, the field-theoretic formulation exhibits analogous qualitative phases and offers partial insight into the behavior of their boundaries in several limiting regimes.

\section{Theoretical Considerations}
\label{sec:field_theory}

We now develop a theoretical description of how regularization strengths shape the behavior of heteroskedastic regression models. 
This framework captures the limiting behavior of neural networks in the fully overparameterized regime and allows us to analyze edge cases of regularization settings. 
In particular, it yields necessary conditions that any optimal pair of mean and standard deviation functions must satisfy, independent of architectural details. 
Numerical solutions of the resulting \emph{field theory}, introduced below, show strong qualitative agreement with practical neural network implementations.

Several of the structural transitions observed empirically in overparameterized networks resemble phase-transition phenomena analyzed in statistical physics. 
Prior work has documented analogous transitions in loss landscapes, jamming behavior, and generalization curves \citep{franz_simplest_2016, geiger_jamming_2019, ros_complex_2019, belkin_reconciling_2019}. 
Our field-theoretic formulation makes these parallels explicit by showing how competing likelihood and regularization terms give rise to sharp and smooth transitions in the learned predictors.

\subsection{Field Theory}
Having discussed the qualitative effects of regularization on a mean-variance model, we examine whether the extent to which these effects depend on a specific neural network architecture. 
Furthermore, we explore whether some of these effects can be characterized at the function level, independent of neural networks. 
To address these points, we develop \emph{field theories} inspired by statistical mechanics.

Field theories are statistical descriptions of random functions, rather than discrete or continuous random variables~\citep{altland_condensed_2010}. 
A \emph{field} is a function from spatial coordinates to scalar values (or vectors). 
Examples include electric charge density or a surface height. 
In field theory, we often seek the configuration (function) that minimizes an energy functional.
Low-energy configurations of fields can display recurring patterns (e.g., waves) or undergo phase transitions (e.g., magnetism) upon varying model parameters.  
Since we can think of a function as an infinite-dimensional vector, field theory requires the usage of \emph{functional analysis} over plain calculus. 
For example, we frequently ask for the field that minimizes a free energy functional that we obtain by calculating a functional derivative that we set to zero. 
The advantage of moving to a function-space description is that all details about neural architectures are abstracted away as long as the neural network is sufficiently over-parameterized. 

Firstly, we abstract the neural networks $\nnmu$ and $\nnLambda$ by smooth nonparametric functions $\ftmu$ and $\ftp$ in the continuum limit; the exact Sobolev regularity conditions are specified in Appendix~\ref{app:theoretical_details}. 
Though an ``$L_2$'' penalty does exist on a functional level (and could be applied to the FT), it penalizes the magnitude of the output of the function which is an inherently different aspect of the function. 
This would encourage predictions near zero rather than simple behaviors. 

A comparable alternative is to directly penalize the output ``complexity'' of the models, measured via the \emph{Dirichlet energies}, 
$\!\int\! p(x)\|\nabla \ftmu(x)\|_2^2\,dx$ and 
$\!\int\! p(x)\|\nabla \ftp(x)\|_2^2\,dx$, 
corresponding to the respective mean and precision functions which we can weight via $\rho, \gamma$ (or $\alpha, \beta$ as we did in the earlier neural network objective.
A similar approach is taken by \citet{yuan_doubly_2004} where they apply separate roughness penalties to mean and noise functions. 
These quantities can be computed without any assumption on the particular parameterization of the functions.
Note that these specific penalizations induce similar limiting behaviors for resulting solutions---$\rho \!\to\! 0$ implies functions that are quickly changing direction (overfitting) while $\rho \to\! \infty$ implies constant functions (underfitting). 
The discrete analogue to \emph{Dirichlet energy} is the \emph{geometric complexity (GC)}
\begin{align}
    GC(f, \mathcal{D}) = 
    \frac{1}{|\mathcal{D}|} \sum_{i=1}^{|\mathcal{D}|} \|\nabla_x f(x)\|^2_F
\end{align}
where $\|\nabla_x f(x)\|^2_F$ is the Frobenius norm of the network Jacobian. 
The geometric complexity of a neural network has been found to be lower in the presence of stronger $L_2$ penalties \citep{dherin_why_2022} and in the case where $\nnmu$ and $\nnLambda$ are linear models, this gradient penalty is equivalent to an $L_2$ penalty. 

Using the assumptions outlined above and the same reparameterization of 
$(\alpha, \beta)$ to $(\rho, \gamma)$ as in the neural network setting, 
the penalized cross-entropy can be interpreted as the action functional of a 
corresponding two-dimensional field theory (FT):
\begin{align}
\mathcal{S}_{\rho,\gamma}[\ftmu,\ftp]
&= 
\int_{\mathcal{X}} p(x) \left\{\rho\!
   \int_{\mathcal{Y}} p(y\sep x)[-\log \hat{p}(y\sep x)]\,dy
+ \,\bar{\rho}\!\left[
   \gamma\lVert\nabla\ftmu(x)\rVert_2^2
 + \bar{\gamma}\lVert\nabla\ftp(x)\rVert_2^2
 \right]\right\}dx,%
\label{eq:ft_def}
\end{align}
where $\hat{p}(y\sep x)=\mathcal{N}(y\sep\ftmu(x),\ftp(x)^{-1})$. 
This formulation assumes continuous densities $p(x)$ and $p(y\sep x)$ 
and continuous predictor functions $\ftmu(x)$ and $\ftp(x)$.

\subsubsection{Empirical (sampled) field theory}

Let $y(\cdot)=\{y(x)\}_{x\in\mathcal{X}}$ denote a realization of the stochastic process $y(x)\sim\mathcal N(\mu(x),\Lambda(x)^{-1})$, and assume that the realized data field satisfies $y\in H^1(\mathcal X)$.  
In keeping with standard statistical and machine learning practice, all inference is performed \emph{conditional on a single observed dataset}.
Accordingly, we work with one realization $y(\cdot)$ rather than taking an expectation over multiple draws.
This also avoids the computational burden of integrating over repeated noise realizations, yielding the \emph{empirical} field-theoretic functional:
\begin{align}
    \mathbf{S}_{\rho,\gamma}[\ftmu,\ftp]
    &=
    \int_{\mathcal{X}} p(x)\,
    \Big\{
    \rho\!\left[\tfrac{1}{2}\ftp(x)\hat{r}^2(x)
    -\tfrac{1}{2}\log\ftp(x)\right]
    +\bar{\rho}\!\left[
    \gamma\lVert\nabla\ftmu(x)\rVert_2^2
    +\bar{\gamma}\lVert\nabla\ftp(x)\rVert_2^2
    \right]
    \Big\}\,dx,
\end{align}
where $\hat{r}(x)=\ftmu(x)-y(x)$ is the pointwise residual field.
By construction,
\begin{equation}
    \mathbf{S}_{\rho,\gamma}[\ftmu,\ftp]
    \;\approx\;
    \mathcal{S}_{\rho,\gamma}[\ftmu,\ftp] + K,
    \qquad
    \text{where $K$ is independent of $(\ftmu,\ftp)$}.
\end{equation}
The approximation arises solely from replacing the population expectation $\mathbb{E}_{p(y\mid x)}[\cdot]$ by its empirical counterpart under the observed realization $y(\cdot)$. 
This mirrors the finite-sample replacement used in the maximum-likelihood formulation: just as the empirical MLE objective approximates the population cross-entropy, the empirical functional $\mathcal{S}_{\rho,\gamma}$ approximates the population-level field-theoretic functional $\mathbf{S}_{\rho,\gamma}$.

\textbf{Remark.}
The conditional Gaussian model
$p(y\mid x)=\mathcal{N}(y\mid\mu(x),\Lambda(x)^{-1})$
is an \emph{exact} specification of the data–generating process.
Thus the field-theoretic functional itself remains exact under conditional normality; the symbol ``$\approx$'' reflects only the finite-sample (Monte Carlo) replacement of the population expectation by the single observed dataset, which is both conceptually standard and computationally advantageous.

\subsubsection{Function-space setting}
\label{sec:function_space_setting}

We briefly record the regularity assumptions under which the functional $\mathbf{S}_{\rho,\gamma}[\ftmu,\ftp]$ is well defined and its variational derivatives can be computed. Throughout, $\mathcal X\subset\mathbb R^d$ denotes a bounded Lipschitz domain and $p\in C^1(\overline{\mathcal X})$ is strictly positive on $\overline{\mathcal X}$, so that $p$ is bounded above and below by positive constants.

We restrict the admissible fields to the usual Sobolev space $H^1(\mathcal X)$, requiring square-integrability of both the functions and their weak gradients:
\[
    (\ftmu,\ftp) \in H^1(\mathcal X) \times H^1(\mathcal X),
    \qquad\text{with}\qquad \ftp(x) \ge c > 0 \ \ \text{a.e.\ in }\mathcal X.
\]
The lower bound on the precision ensures non-degeneracy of the variance $\ftp^{-1}$ and, in particular, guarantees that the data-fidelity term $-\log \ftp$ remains finite. 
Under these assumptions, the weighted Dirichlet energies
\[
\int_{\mathcal X} p(x)\,\|\nabla f(x)\|_2^2\,dx
\]
are finite for all admissible $f\in H^1(\mathcal X)$, since the positivity and essential boundedness of $p$ imply the equivalence of the weighted and unweighted $L^2$ norms on $\mathcal X$.

From the perspective of ML and statistics, these conditions prevent degenerate behaviors such as vanishing or exploding variances and exclude functions with unbounded roughness. 
Consequently, the regularization term behaves as a well-posed smoothness penalty, while the likelihood contribution remains stable and well defined.

Taken together, these assumptions ensure that the functional $\mathbf{S}_{\rho,\gamma}[\ftmu,\ftp]$ is finite for all admissible pairs $(\ftmu,\ftp)$ and that its associated Euler--Lagrange equations are meaningful in the usual weak sense. 
Equivalently, the variational problem admits well-defined first-order optimality conditions that correspond to bona fide partial differential equations rather than ill-posed distributional identities.

\subsection{Field-Theoretic Stationarity and Insights}
\label{sec:ft_insights}
Taking variational derivatives of the field-theory objective with respect to the mean and precision functions and imposing homogeneous Neumann boundary conditions yields the following \emph{stationary conditions}, which hold almost everywhere with respect to the data density~$p(x)$:
\begin{subequations}\label{eq:pdes_revised}
\begin{align}
\rho\,p(x)\,\ftp^*(x)\,\hat r^*(x)
&= 2\bar\rho\,\gamma\,\mathcal{L}_p \ftmu^*(x),\\[0.4em]
\tfrac{\rho}{2}\,p(x)\!\left[\hat r^{*}(x)^2 - \tfrac{1}{\ftp^*(x)}\right]
&= 2\bar\rho\,\bar\gamma\,\mathcal{L}_p \ftp^*(x),
\end{align}
\end{subequations}
where $\hat r^{*}(x)=\ftmu^{*}(x)-y(x)$ and $\mathcal{L}_p f := -\,\nabla\!\cdot\!\big(p(x)\nabla f\big)$ denotes the (unnormalized) weighted Laplacian introduced in \Cref{lem:weighted_greens}.  
When $p(x)$ is constant, $\mathcal{L}_p$ reduces to the standard Laplacian $\Delta$ \citep{engel_density_2011}, recovering the uniform-density setting studied in earlier work~\citep{wong-toi2024understanding}.  
These equations coincide with the Euler--Lagrange conditions formalized in \Cref{prop:generalFT} of Appendix~\ref{app:theoretical_details}.  
Below we summarize the theoretical implications obtained by taking limits over $(\rho,\gamma)$.

\begin{prop}[Combined analytical structure of MVR]
\label{prop:summary_main}
Assume that $\mathcal X \subset \mathbb R^d$ is a bounded, connected Lipschitz domain, that $p \in C^1(\overline{\mathcal X})$ is strictly positive on $\overline{\mathcal X}$, and that the data field satisfies $y \in H^1(\mathcal X)$. 
Let $(\ftmu,\ftp) \in H^1(\mathcal X) \times H^1(\mathcal X)$ with $\ftp(x) \ge c > 0$ for a.e.\ $x \in \mathcal X$. 

Then any stationary point of the functional $\mathbf S_{\rho,\gamma}[\ftmu,\ftp]$ satisfies the Euler--Lagrange system~\eqref{eq:pdes_revised} under homogeneous Neumann (zero-flux) boundary conditions $p \nabla \ftmu \cdot \mathbf n = 0$ and $p \nabla \ftp \cdot \mathbf n = 0$ on $\partial\mathcal X$.
Furthermore:
\begin{enumerate}[label=(\roman*)]
    \item For $\rho = 1$ (no regularization), the EL equations admit no stationary solution.
    
    \item For $\rho = 0$ (no data term), the minimizer is non-unique: any constant 
    pair $(\ftmu,\ftp)$ satisfies the Neumann equations on a connected domain.
    
    \item For $\gamma = 0$ or $\gamma = 1$ and $\rho>0$, the functional 
    $\mathbf S_{\rho,\gamma}$ is unbounded below; hence both regularization channels 
    must be strictly positive for well-posedness.
    
    \item If in addition the precision is uniformly bounded 
    $0 < c \le \ftp(x) \le \lambda_{\max} < \infty$ a.e.\ in $\mathcal X$, 
    then for all $\rho,\gamma \in (0,1)$ the variational problem admits at least one 
    solution $(\ftmu^\ast,\ftp^\ast) \in H^1(\mathcal X)^2$; see 
    \Cref{rem:bounded_prec} for discussion of these additional assumptions.
\end{enumerate}
Hence, well-posed formulations require $\rho\!\in\!(0,1)$ and $\gamma\!\in\!(0,1)$, or equivalently two-sided positive penalties $\alpha,\beta>0$.
\end{prop}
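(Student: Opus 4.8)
The plan is to first record the Euler--Lagrange (EL) system---the common backbone of all four claims---and then dispatch the parts in turn, the last being the real work. The stationarity conditions \eqref{eq:pdes_revised} together with the zero-flux boundary conditions come from the first variation of $\mathbf{S}_{\rho,\gamma}$: perturb $\ftmu \mapsto \ftmu + \epsilon\psi$ and $\ftp \mapsto \ftp + \epsilon\chi$ with $\psi,\chi \in H^1(\mathcal X)$, differentiate at $\epsilon = 0$, and integrate the Dirichlet terms by parts via the weighted Green's identity of \Cref{lem:weighted_greens}; requiring the result to vanish for all $\psi,\chi$ yields the interior equations, and since the boundary integrals must vanish for arbitrary traces it forces $p\nabla\ftmu\cdot\mathbf n = p\nabla\ftp\cdot\mathbf n = 0$ on $\partial\mathcal X$. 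This is precisely \Cref{prop:generalFT}, which I would cite. For part (i), setting $\rho = 1$ makes $\bar\rho = 0$, so the first EL equation reads $p\,\ftp^{*}\hat{r}^{*} = 0$; as $p > 0$ and $\ftp^{*} \ge c > 0$ this forces $\hat{r}^{*} \equiv 0$, and then the second equation gives $-\tfrac12 p/\ftp^{*} = 0$, impossible for a finite precision---hence no stationary point (equivalently, $\mathbf{S}_{1,\gamma}[y,M] = -\tfrac12(\log M)\int_{\mathcal X} p\,dx \to -\infty$). For part (ii), setting $\rho = 0$ leaves $\mathbf{S}_{0,\gamma}[\ftmu,\ftp] = \int_{\mathcal X} p\,[\gamma\|\nabla\ftmu\|_2^2 + \bar\gamma\|\nabla\ftp\|_2^2]\,dx \ge 0$, which every constant pair with $\ftp \ge c$ minimizes; by connectedness of $\mathcal X$ the Neumann equations $\mathcal{L}_p\ftmu^{*} = \mathcal{L}_p\ftp^{*} = 0$ have exactly the constants as $H^1$ solutions, so a minimizer exists but is non-unique.

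For part (iii) I would exhibit an escaping sequence. Since $y \in H^1(\mathcal X)$, choosing $\ftmu = y$ is admissible and makes the residual field vanish identically, so $\tfrac12\ftp\hat{r}^2 \equiv 0$; taking $\ftp \equiv M$ (admissible once $M \ge c$) kills the $\ftp$-Dirichlet term, leaves the $\ftmu$-Dirichlet term equal to the finite constant $\bar\rho\gamma\int_{\mathcal X} p\|\nabla y\|_2^2\,dx$ (zero when $\gamma = 0$), and gives $\mathbf{S}_{\rho,\gamma}[y,M] = -\tfrac{\rho}{2}(\log M)\int_{\mathcal X} p\,dx + \mathrm{const} \to -\infty$ as $M \to \infty$ because $\rho > 0$. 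This works whenever $\gamma \in \{0,1\}$, so the two penalty weights $\bar\rho\gamma$ and $\bar\rho\bar\gamma$ must both be strictly positive; combined with (i)--(ii) this gives the claimed dichotomy $\rho \in (0,1)$ and $\gamma \in (0,1)$ (equivalently $\alpha,\beta > 0$).

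Part (iv) is the substantive step, handled by the direct method. The admissible class is nonempty with $\mathbf{S}_{\rho,\gamma}$ finite on it (e.g.\ at $(y,c)$), and it is bounded below: $\tfrac\rho2\ftp\hat{r}^2 \ge 0$, both Dirichlet terms are $\ge 0$, and the new hypothesis $\ftp \le \lambda_{\max}$ gives $-\tfrac\rho2\log\ftp \ge -\tfrac\rho2\log\lambda_{\max}$, so $\mathbf{S}_{\rho,\gamma} \ge -\tfrac\rho2(\log\lambda_{\max})\int_{\mathcal X} p\,dx > -\infty$---and this is exactly where that hypothesis is needed, since without it the sequence of part (iii) already pushes the infimum to $-\infty$. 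For a minimizing sequence $(\ftmu_n,\ftp_n)$, the positive weights $\bar\rho\gamma,\bar\rho\bar\gamma$ (using $\rho,\gamma \in (0,1)$) bound $\|\nabla\ftmu_n\|_{L^2}$ and $\|\nabla\ftp_n\|_{L^2}$; the bound $\tfrac\rho2\int p\,\ftp_n\hat{r}_n^2 \ge \tfrac{\rho c}{2}\int p\,\hat{r}_n^2$ bounds $\|\ftmu_n - y\|_{L^2}$; and $c \le \ftp_n \le \lambda_{\max}$ bounds $\|\ftp_n\|_{L^2}$; hence $(\ftmu_n,\ftp_n)$ is bounded in $H^1(\mathcal X)^2$. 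By weak compactness and Rellich--Kondrachov on the bounded Lipschitz domain, a subsequence satisfies $\ftmu_n \rightharpoonup \ftmu^{*}$, $\ftp_n \rightharpoonup \ftp^{*}$ in $H^1$ and $\ftmu_n \to \ftmu^{*}$, $\ftp_n \to \ftp^{*}$ in $L^2$ and a.e.; the a.e.\ limits preserve $c \le \ftp^{*} \le \lambda_{\max}$, so $(\ftmu^{*},\ftp^{*})$ is admissible. For lower semicontinuity, the Dirichlet terms are convex hence weakly l.s.c.; $\int p\,\ftp_n\hat{r}_n^2 \to \int p\,\ftp^{*}(\hat{r}^{*})^2$ because $\hat{r}_n^2 \to (\hat{r}^{*})^2$ in $L^1$ while $p\ftp_n$ stays uniformly bounded and converges a.e.; and $-\int p\log\ftp_n$ is controlled by Fatou using $-\log\ftp_n \ge -\log\lambda_{\max}$ and $\log\ftp_n \to \log\ftp^{*}$ a.e. Therefore $\mathbf{S}_{\rho,\gamma}(\ftmu^{*},\ftp^{*}) \le \liminf_n \mathbf{S}_{\rho,\gamma}(\ftmu_n,\ftp_n) = \inf \mathbf{S}_{\rho,\gamma}$, so $(\ftmu^{*},\ftp^{*})$ is a minimizer, completing the proof.

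The main obstacle is the compactness and lower-semicontinuity bookkeeping in part (iv): the data term is nonconvex and couples $\ftp$ multiplicatively with $\hat{r}^2$, so weak $H^1$ convergence alone is not enough. One needs the compact embedding $H^1 \hookrightarrow\hookrightarrow L^2$ to upgrade $\hat{r}_n \to \hat{r}^{*}$ to strong $L^2$ convergence (hence $\hat{r}_n^2 \to (\hat{r}^{*})^2$ in $L^1$), and the uniform bound $\ftp_n \le \lambda_{\max}$ both to pass to the limit in the product $\int p\,\ftp_n\hat{r}_n^2$ and to obtain Fatou control on $-\log\ftp_n$; this is precisely why the extra boundedness hypothesis appears in (iv) and cannot be dropped, with part (iii) showing the infimum is otherwise $-\infty$. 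A secondary point needing care is making the first-variation computation rigorous when $\ftp^{*}$ is merely $H^1$ and bounded below, which is exactly what \Cref{lem:weighted_greens} supplies.
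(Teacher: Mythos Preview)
Your proposal is correct and follows essentially the same approach as the paper: the EL system via first variation and the weighted Green's identity (\Cref{lem:weighted_greens,prop:generalFT}); the same algebraic contradiction for (i), constants-only argument for (ii), escaping sequence $(\ftmu,\ftp)=(y,M)$ with $M\to\infty$ for (iii), and the direct method (coercivity, Rellich--Kondrachov, weak l.s.c.\ of the Dirichlet terms, strong-$L^2$ control of the data term) for (iv). The only cosmetic differences are that the paper handles the data-fidelity integrand $(u,\lambda)\mapsto\lambda(u-y)^2-\log\lambda$ as a single continuous term along the subsequence (dominated convergence using the two-sided bound on $\ftp_n$) rather than splitting into a product argument plus Fatou for $-\log\ftp_n$, and in (i) it derives the contradiction purely algebraically without invoking $\ftp\ge c$.
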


The proofs, based on weighted Green’s identities and weak Euler–Lagrange arguments, are given in Appendix~\ref{app:theoretical_details} (see \cref{prop:generalFT,prop:general_extremes,cor:natural_neumann,prop:interior_exist}).

\subsubsection{Interpretation of Prop. 1}
Under the assumptions of \Cref{prop:summary_main}---namely, smooth positive $p(x)$ on a compact domain $\mathcal X\subset\mathbb R^d$ with homogeneous zero-flux boundaries---any stationary point $(\ftmu^*,\ftp^*)$ of the field-theoretic objective satisfies the coupled PDEs in \eqref{eq:pdes_revised}.
Moreover, no solution with finite objective value exists in the limiting case $\rho\!\to\!1$, consistent with the unregularized regime analyzed in \Cref{prop:general_extremes}.

Each equation in \eqref{eq:pdes_revised} expresses a balance between a data-fitting term (on the left) and a regularization-induced flux (on the right).
The divergence operator plays a diffusion-like role, redistributing prediction errors across the input domain rather than letting them concentrate at isolated points.
The presence of $p(x)$ in the denominators modulates this diffusion according to the local data density:
regions with higher $p(x)$ experience a weaker effective regularization, while low-density regions experience a stronger one.
Equivalently, the local smoothing strength is proportional to
\[
\text{effective regularization strength} 
\;\propto\;
\frac{\bar\rho}{\rho}\,\gamma\,p(x)^{-1}.
\]
Areas containing many data points therefore permit more functional complexity, whereas sparsely sampled regions are forced toward smoother, simpler predictions.
This adaptive weighting makes the field theory sensitive to the empirical geometry of the data---a property absent from standard, unweighted $L_2$ regularization.

The weighted Laplacians $\mathcal{L}_p\ftmu$ and $\mathcal{L}_p\ftp$ quantify the local curvature of the mean and precision functions under the data density $p(x)$.
Accordingly, the hyperparameters $\rho$ and $\gamma$ directly determine how much curvature the model can sustain.
Recall that $\rho \in (0,1)$ controls the overall balance between likelihood fitting and regularization (larger $\rho$ emphasizes data fidelity, smaller $\rho$ enforces stronger smoothing), while $\gamma \in (0,1)$ partitions the total regularization between the mean and precision fields ($\gamma$ weighting the mean term and $\bar\gamma := 1-\gamma$ weighting the precision term).
Large $\rho$ (weak regularization) sharpens curvature and risks overconfident solutions, whereas small $\rho$ (strong regularization) flattens both functions, leading to underfitting.
Similarly, varying $\gamma$ trades off smoothness between the mean and variance functions.
These coupled PDEs therefore formalize the intuitive ``phase diagram'' of \Cref{fig:cartoonphases}, in which distinct regimes of $(\rho,\gamma)$ correspond to qualitatively different equilibrium configurations of the fields.

The derivation of \eqref{eq:pdes_revised} and the associated boundary conditions follows from the weighted Green’s identity in \Cref{lem:weighted_greens} and the Euler--Lagrange system proved in \Cref{prop:generalFT}.
Extreme and degenerate cases are analyzed in \Cref{prop:general_extremes}, which demonstrate the absence or unboundedness of solutions when either $\rho$ or $\gamma$ lies at the boundary of $[0,1]$.
Finally, \Cref{cor:two_sided_needed_general} shows that both mean and variance regularization terms must be strictly positive ($\gamma\in(0,1)$) to ensure well-posedness.
Together, these results provide the analytic basis for the phase-transition structure %
schematized in \Cref{fig:cartoonphases}.

These limiting cases align with the intuition conveyed earlier and apply equally in the neural network setting.  
Assuming valid stationary solutions exist for $\rho,\gamma\in(0,1)$, one expects the system to exhibit either sharp transitions or smooth cross-overs between the behaviors described in the extremes as the regularization strengths vary.  
Empirically, Section~\ref{sec:experiments} shows that the resulting phase diagrams resemble \Cref{fig:cartoonphases}.  
A complete analytical justification for the boundary types, their shapes, and precise placement within the $(\rho,\gamma)$ plane is left for future work.

While the field theory is deterministic, its nonconvex objective can admit multiple distinct minimizers. 
This multiplicity reflects structural ambiguity in the learned predictor functions and provides a natural interpretation of \emph{epistemic uncertainty}---not as stochasticity in the model itself, but as sensitivity to initialization and optimization. 
In particular, the existence results in \Cref{prop:generalFT,prop:interior_exist} do not guarantee uniqueness of minimizers, so different runs may converge to qualitatively distinct solutions. 
This perspective motivates the ensemble-based approximation introduced in \Cref{sec:bft_sampling}, in which variability across local minimizers approximates the posterior spread of the Bayesian field theory (\cref{sec:bft_main}).

\subsection{Numerically Solving the FT}
\label{sec:deterministic_discretization}

We describe the numerical procedure used to approximate minimizers of the deterministic field theory (FT) objective introduced in~\cref{sec:ft_insights}.  
Because the Euler--Lagrange equations in~\eqref{eq:pdes_revised} rarely admit closed-form solutions, we work with a discrete approximation of the continuous functional~$\mathbf S_{\rho,\gamma}[\ftmu,\ftp]$.  
In practice we restrict attention to one-dimensional domains~$\mathcal X\subset \mathbb R$.

\subsubsection{Uniform lattice discretization}

Let  
$\mathsf L^{(D)}=\{x^{(D)}_i\}_{i=1}^{D}$
denote a uniform grid on $\mathcal X$ with spacing $h\approx |\mathcal X|/D$.  
For boundary points, we pad one extra point on each side so that centered finite differences can be used at all interior indices; one-sided differences at the padded nodes enforce homogeneous Neumann boundary conditions.
We define the discrete vectors
\[
    \ftmud^{(D)} = (\ftmu(x^{(D)}_i))_{i=1}^D,\qquad
    \ftpd^{(D)} = (\ftp(x^{(D)}_i))_{i=1}^D,\qquad
    \vec y^{(D)} = (y(x^{(D)}_i))_{i=1}^D,
\]
and let $\nabla_h$ be the standard centered finite-difference gradient.
The discrete FT objective is
\begin{align}
    \mathsf S^{(D)}_{\rho,\gamma}(\ftmud,\ftpd)
    =
    \frac{1}{D}\sum_{i=1}^{D}
    \Bigg\{
    \rho\!\left[
    \frac12\,\ftpd_i\,(y_i-\ftmud_i)^2
    - \frac12\log \ftpd_i
    \right]
    +
    \bar\rho\!\left[
    \gamma\,\|\nabla_h\ftmud\|_i^2+
    \bar\gamma\,\|\nabla_h\ftpd\|_i^2
    \right]
    \Bigg\}.
\label{eq:disc_ft}
\end{align}
We minimize~\eqref{eq:disc_ft} via gradient descent to obtain lattice approximations of $\ftmu$ and $\ftp$.

\subsubsection{Consistency with the continuous FT}

Because $\mathsf L^{(D)}$ is a uniform grid rather than a random sample from $p(x)$, convergence of the discrete objective to $\mathbf S_{\rho,\gamma}$ follows from deterministic quadrature rather than Monte--Carlo averaging.  
If $\ftmu,\ftp\in H^1(\mathcal X)$ and $p$ is continuous and bounded above and below, then Riemann sum approximations yield
\[
    \lim_{D\to\infty}
    \mathsf S^{(D)}_{\rho,\gamma}\bigl(\Pi_D\ftmu,\Pi_D\ftp\bigr)
    =
    \mathbf S_{\rho,\gamma}\bigl[\ftmu,\ftp\bigr],
\]
where $\Pi_D$ denotes projection of the continuous fields onto the grid.  
Convergence of the discrete gradient terms follows from standard finite-difference consistency on uniform meshes~\citep{fornberg_generation_1988, brenner_mathematical_2008}.
Violations of the regularity assumptions (e.g.\ if $\nabla\ftp$ is unbounded on a set of nonzero measure) may lead to instability or divergence of the discrete minimizer.

\subsubsection{Relation to the general finite-element setting}

The uniform grid considered here represents the one-dimensional analogue of the general mesh $\mathcal G_h$ in Appendix~\cref{app:bft_discrete_appendix}.  
In higher dimensions, $\mathcal G_h$ is a shape-regular collection of elements supporting finite-element approximations of the weighted Laplacian $\mathcal L_p f=-\nabla\!\cdot(p\nabla f)$.  
Uniform grids in 1D can be viewed as a special case of such meshes with $p$ incorporated solely through quadrature weights.  
Thus, standard FEM convergence theory applies directly~\citep{brenner_mathematical_2008}.

\section{Bayesian Reformulation of the Field Theory}
\label{sec:bft_main}

The deterministic field theory (FT) introduced in \cref{sec:field_theory} can be extended to a fully probabilistic formulation by placing priors directly on the mean and log-precision fields $(\ftmu, \ftp)$. 
This yields a \emph{Bayesian Field Theory} (BFT) in which the FT energy functional appears as a log-posterior, the deterministic solution arises as a maximum a posteriori (MAP) estimate, and posterior samples correspond to stochastic field realizations that quantify uncertainty.

This function-space viewpoint is closely related to classical Bayesian treatments of heteroskedastic regression. 
Prior statistical work places smoothness priors on spline-based mean and variance functions \citep{yau_estimation_2003, yuan_doubly_2004}, while \citet[Section~3.7.1]{lemm_bayesian_2000} analyzes homoskedastic Gaussian regression as a field theory with Gaussian priors defined by differential operators. 
The BFT developed here provides a continuous analogue adapted to the heteroskedastic setting and offers a principled route to uncertainty quantification that complements weight-space Bayesian neural network approaches.

We write the negative log-posterior (up to a constant) as
\begin{equation}
\Phi(\ftmu; \ftp)
=
\underbrace{
  -\log \pi(\ftmu,\ftp)\vphantom{\int_{\mathcal X}}
}_{\text{Prior}}
\;
-\;
\underbrace{
  \rho \!\int_{\mathcal X}\! p(x)\log\hat{p}(y \sep x)\,dx
}_{\text{Likelihood term}},
\qquad
\hat{p}(y\sep x)
= \mathcal{N}\!\big(y\sep\ftmu(x),\ftp(x)^{-1}\big)
\label{eq:bft_general}
\end{equation}
where $\pi(\ftmu,\ftp)$ encodes smoothness priors over the predictor functions.
Here $\rho$ acts as a relative weighting (or inverse temperature) on the likelihood term, and the overall scaling between likelihood and prior is arbitrary up to a constant factor.
All integrals are taken over the input domain $\mathcal{X}$ with respect to the normalized density $p(x)$ and can thus be interpreted as expectations under the input distribution.

Here $\ftmu(x)$ denotes the predictive mean function and $\ftp(x)>0$ the predictive \emph{precision} (i.e., inverse-variance). 
We also define $\hat\eta(x)=\log\hat\Lambda(x)$ as the \emph{log-precision} field, which is used in the subsequent parameterizations and enforces positivity after exponentiation. 
This notation ensures that the Gaussian likelihood term $\tfrac{1}{2}\hat\Lambda(y-\hat\mu)^2-\tfrac{1}{2}\log\hat\Lambda$ matches the standard negative log-likelihood of $\mathcal N(y\!\mid\!\hat\mu,\hat\Lambda^{-1})$ up to an additive constant.

\subsection{MAP-Equivalent Prior}
\label{sec:bft_additive}
As before, we write $\bar\rho = 1 - \rho$ and $\bar\gamma = 1 - \gamma$.
To recover the deterministic FT exactly, we parameterize
$\ftp = e^{\hat{\eta}}$,
where $\hat{\eta}$ is the \emph{model} log-precision function, and choose priors that yield the same gradient penalties as the FT functional.

The prior distributions are chosen to impose smoothness on the predictor functions, mirroring the geometry of the deterministic FT regularizers. 
We place a Gaussian field prior on the mean function to enforce smoothness and a log–convex prior on the log–precision to ensure positivity and stability of the noise function:
\begin{subequations}\label{eq:bft_priors_main}
\begin{align}
-\log \pi(\ftmu)
&= \tfrac{\gamma}{2}\!\int_{\mathcal X}\! p(x)\,\|\nabla\ftmu(x)\|_2^2\,dx,
\label{eq:bft_mu_prior_main}\\[0.3em]
-\log \pi(\hat\eta)
&= \tfrac{\bar\gamma}{2}\!\int_{\mathcal X}\! p(x)\,e^{2\hat\eta(x)}\|\nabla\hat\eta(x)\|_2^2\,dx,
\label{eq:bft_eta_prior_main}
\end{align}
\end{subequations}
where $\hat\eta := \log \ftp$ ensures $\ftp > 0$.

Since $\nabla\ftp = \nabla(e^{\hat\eta}) = e^{\hat\eta}\nabla\hat\eta$, we have $\|\nabla\ftp\|^2 = e^{2\hat\eta}\|\nabla\hat\eta\|^2$, so the penalty in~\eqref{eq:bft_eta_prior_main} is exactly the FT Dirichlet energy $\tfrac{\bar\gamma}{2}\!\int p\,\|\nabla\ftp\|_2^2\,dx$ expressed in the log–precision parameterization. (We adopt the same homogeneous weighted Neumann boundaries as in the FT derivation and omit them here for brevity; see \Cref{cor:natural_neumann} for details.)

Combining these priors with the likelihood yields the full posterior energy functional,
whose stationary conditions coincide with those of the deterministic FT:
\begin{equation}
\Phi_{\text{MAP}}(\ftmu,\hat\eta)
=\int_{\mathcal X} p\,\Big\{\rho\big[\tfrac{1}{2}e^{\hat\eta}(y-\ftmu)^2-\tfrac{1}{2}\hat\eta\big]
+\tfrac{\bar\rho}{2}\big[\gamma\|\nabla\hat\mu\|^2+\bar\gamma\,e^{2\hat\eta}\|\nabla\hat\eta\|^2\big]\Big\}.
\end{equation}
For brevity, we omit explicit dependence on $x$ where unambiguous.
This functional coincides with $\mathbf{S}_{\rho,\gamma}$ up to constants, so minimizing $\Phi_{\text{MAP}}$ is precisely the FT optimization problem.  
The posterior mode thus satisfies the same stationary equations as \cref{prop:generalFT}, confirming the equivalence $\text{MAP}\equiv\text{FT}$. 
(Up to the conventional $\tfrac{1}{2}$ scaling in Gaussian log--densities; see Appendix~\ref{app:bft_appendix} for discussion.)

Our focus in this section is primarily analytical rather than computational: we study the structure of the Bayesian field theory (BFT) and its direct relationship to the deterministic FT, rather than performing full posterior sampling.  
Nevertheless, the Bayesian formulation clarifies how uncertainty arises in function space and provides a principled foundation for later ensemble-based approximations (see \cref{sec:bft_sampling}).

\subsubsection{Connection to Gaussian processes}

Before turning to computational approximations, it is helpful to relate the Bayesian field theory (BFT) to Gaussian processes (GPs), since both describe random functions and both encode smoothness through quadratic penalties.

A GP specifies its behavior directly through a covariance kernel $k(x,x')$.
In contrast, a Gaussian field encodes smoothness implicitly through a linear differential operator~$\mathcal L$ that penalizes roughness, much like a
classical spline penalty in reproducing-kernel Hilbert spaces~\citep{kimeldorf_wahba_bayes_1970,Wahba90}.  
The covariance structure of the field is then determined by the inverse of this operator---specifically, by the Green’s function solving $\mathcal L G = \delta$. 
This operator–kernel relationship, which plays a central role in modern SPDE-based Gaussian field models \citep{lindgren_explicit_2011}, provides the bridge between the field-theoretic and GP viewpoints.  
A formal derivation is given in Appendix~\ref{app:bft_gp_equivalence}.

In our setting, the relevant operator is the weighted Laplacian $\mathcal L_p f = -\nabla\!\cdot(p\nabla f)$.  
With homogeneous Neumann boundary conditions, this operator leaves constant functions unchanged, and the corresponding prior is therefore ``intrinsic'' (improper) up to an additive constant.  
This is a standard phenomenon in intrinsic Gaussian Markov random fields \citep{rue2005gaussian,rue_approximate_2009} and does not cause practical difficulty: the likelihood (or centering) fixes the overall offset of $\hat\mu$, yielding a proper posterior.  
Equivalently, the associated RKHS is the quotient space $H^1(\mathcal X)/\{\text{constants}\}$.

For example, a prior of the form
\[
    p(\hat\mu)\propto 
    \exp\!\left[-\tfrac{\gamma}{2}\!\int_{\mathcal X} p(x)\,\|\nabla\hat\mu(x)\|^2
    \, dx\right]
\]
corresponds to a Gaussian field whose precision is $\gamma\mathcal L_p$, and hence to a GP whose covariance is $(\gamma\mathcal L_p)^{-1}$. 
In this view, the Dirichlet energy simply plays the role of a smoothness penalty: in the GP formulation it arises from the covariance kernel, and in the field-theoretic formulation it arises from the corresponding precision operator. 
These two descriptions are mathematically equivalent~\citep{kimeldorf_wahba_bayes_1970,Wahba90,lindgren_explicit_2011}.

\subsection{Sampling and Approximation}
\label{sec:bft_sampling}
In principle, posterior samples of the continuous functions can be generated using
stochastic-gradient Langevin dynamics (SGLD), which treats the MAP functional
as an energy landscape and simulates noisy gradient descent \citep{Welling2011BayesianLV}:
\[
z_{t+1}=z_t-\epsilon_t\,\widehat{\nabla}\Phi_{\text{MAP}}(z_t)
+\sqrt{2\epsilon_t}\,\xi_t,
\quad
\xi_t\!\sim\!\mathcal N(0,I),\ 
\sum_t\epsilon_t=\infty,\ \sum_t\epsilon_t^2<\infty.
\]
Here $z_t$ denotes the concatenated discretization of the functions $(\ftmu,\hat{\eta})$ on a finite lattice or mesh (i.e., $\mathsf{L}^{(D)}$), so each SGLD iterate is a lattice-based approximation of the continuous predictor functions.  
Sampling and optimization are therefore performed in a discretized representation of the BFT, whose precise construction is given in \cref{sec:bft_discretization}.
Posterior samples $\{(\ftmu^{(m)},\hat{\eta}^{(m)})\}_{m=1}^M$ represent discrete function realizations drawn from this approximate posterior.  
Their dispersion across $\ftmu^{(m)}$ captures epistemic uncertainty, while the Monte Carlo mean $\mathbb{E}[e^{-\hat{\eta}^{(m)}}]$ estimates the expected aleatoric variance function.

Rather than drawing full functional samples, we approximate the posterior by training an ensemble of independently initialized FT models, each converging to a distinct local MAP solution.
The variability across the ensemble provides a Monte Carlo approximation to the Bayesian posterior 
$p(\ftmu,\hat{\eta}\sep\mathcal{D})$.
This ensemble view operationalizes the BFT: the deterministic FT gives the MAP equations, while multiple FT realizations collectively capture epistemic uncertainty through their dispersion in predictive means and noise functions.

The behavior of the BFT can be visualized by overlaying these independent FT realizations, which reveal the spread of predicted means and variances across ensemble members (\cref{fig:multi-ens}). 
Rather than integrating out epistemic variation, we directly display the spread of predicted means $\ftmu^{(m)}(x)$ and variances $e^{-\hat{\eta}^{(m)}(x)}$ across ensemble members.
This representation highlights the posterior support and qualitative variability of solutions, 
which together approximate the epistemic uncertainty of the BFT.

\subsubsection{Discretization and practical approximation of the continuum BFT}
\label{sec:bft_discretization}

As in the deterministic FT (\cref{sec:deterministic_discretization}), the continuous predictor fields $(\ftmu,\hat{\eta})$ are evaluated on a finite lattice $\mathsf{L}^{(D)}=\{x_i^{(D)}\}_{i=1}^D \subset \mathcal{X}$ to obtain discrete representations $(\ftmu^{(D)},\hat{\eta}^{(D)})$.  
We assume that the empirical measure of the lattice converges weakly to the data distribution,
\[
    \frac{1}{D}\sum_{i=1}^D \delta_{x_i^{(D)}} \;\Rightarrow\; p(x)\,dx,
\]
so that weighted sums over $\mathsf{L}^{(D)}$ provide Monte--Carlo approximations to $p$--integrals.

In principle, one obtains finite-dimensional analogues of the continuous Gaussian field priors in \cref{eq:bft_mu_prior_main,eq:bft_eta_prior_main} by replacing spatial derivatives with centered finite-difference operators and integrals with weighted sums $\sum_i p(x_i^{(D)})\,(\cdot)$.  
As in the deterministic discretization, one ghost node is added on each side of the domain and reflected to enforce homogeneous Neumann boundary conditions, allowing centered differences to be used at every interior lattice point.  
The resulting discrete priors are Gaussian Markov random fields (GMRFs): multivariate Gaussian distributions with sparse precision matrices encoding local conditional independences on the lattice.  
These precision matrices provide consistent finite-difference approximations of the weighted elliptic operators associated with the continuous priors, including the weighted Laplacian $\mathcal{L}_p f = -\nabla\!\cdot(p\nabla f)$.

Under standard assumptions on regularity, boundary conditions, and mesh refinement, such GMRF priors converge weakly, in the sense of finite-dimensional distributions, to their continuous Gaussian field counterparts as $D \to \infty$ \citep{lindgren_explicit_2011,lindgren_spde_2022}.  
Further details on the discrete operators and their continuum limits are provided in Appendix~\cref{app:bft_discrete_appendix}.  
Thus the lattice-based construction above yields a principled finite-dimensional approximation of the continuum Bayesian field theory.

In practice, however, the numerical implementation used in \cref{sec:experiments} adopts a simpler and more computationally tractable approximation.  
Rather than sampling from or explicitly forming the GMRF prior, we solve an ensemble of deterministic field theory optimizations, each initialized with a different random seed.  
Each run produces a solution of the deterministic FT on the lattice grid, and the resulting ensemble of solutions provides an empirical approximation to the posterior variability that would be induced by the full BFT model.  
This ensemble-based procedure captures meaningful epistemic variability while avoiding the computational cost of full GMRF-based inference.

\subsection{Predictive Decomposition}
\label{sec:bft_predictive}

Because both latent functions $(\ftmu,\hat{\eta})$ are random under the posterior, the BFT admits a hierarchical uncertainty decomposition that distinguishes variability in the predictive mean from variability in the noise function.
This hierarchy allows the model to represent not only uncertainty about predictions, but also uncertainty about how predictable each region of the input space is.
Formally, the Bayesian predictive distribution marginalizes over both functions,
\begin{equation}
p(y\sep x,\mathcal D)
 =\iint p(y\sep x,\ftmu,\hat{\eta})\,p(\ftmu,\hat{\eta}\sep\mathcal D)\,d\ftmu\,d\hat{\eta},
\end{equation}
with predictive variance
\begin{equation}
\mathrm{Var}[y\sep x,\mathcal D]
 =\mathbb{E}[e^{-\hat{\eta}(x)}\sep\mathcal D]
  +\mathrm{Var}[\ftmu(x)\sep\mathcal D],
\end{equation}
where the first term integrates over uncertainty in the noise function (aleatoric) and the second reflects uncertainty in the mean function (epistemic).
This decomposition holds generally for any joint posterior $p(\ftmu,\hat{\eta}\sep\mathcal D)$ and does not depend on the specific choice of priors, so long as $\ftmu$ and $\hat{\eta}$ are treated as random functions with finite second moments.

Although the predictive mean depends directly on $\ftmu(x)$, we write the expectation $\mathbb{E}_{\ftmu,\hat{\eta}}[\ftmu(x)]$ since the posterior $p(\ftmu,\hat{\eta}\sep\mathcal D)$ couples both functions through the likelihood. 
In practice, this reduces to a marginal expectation over $\ftmu$ once $\hat{\eta}$ is integrated out.

These expectations can be estimated from a finite ensemble of MAP or SGLD function realizations. 
Let $\{(\ftmu^{(m)}, \hat{\eta}^{(m)})\}_{m=1}^M$ denote $M$ independent samples.
Then the pointwise predictive mean and uncertainty components can be approximated as
\[
\ftmu^*(x) = \tfrac{1}{M}\sum_{m=1}^{M}\ftmu^{(m)}(x),\qquad
\hat{\sigma}_{\text{epi}}^2(x) = \mathrm{Var}_m[\ftmu^{(m)}(x)],\qquad
\hat{\sigma}_{\text{ale}}^2(x) = \tfrac{1}{M}\sum_{m=1}^{M} e^{-\hat{\eta}^{(m)}(x)}.
\]
The total predictive uncertainty is given by 
$\hat{\sigma}_{\text{tot}}(x) = (\hat{\sigma}_{\text{epi}}^2(x) + 
\hat{\sigma}_{\text{ale}}^2(x))^{1/2}$.
These Monte Carlo estimators provide a practical implementation of the Bayesian predictive decomposition, linking the theoretical formulation directly to ensemble-based experiments.

\subsection{Statistical Interpretation and Operator Geometry}
\label{sec:bft_statistical_interpretation}

The Bayesian FT induces Gaussian field priors whose precision operator is the weighted Laplacian $\mathcal L_p f = -\nabla\!\cdot(p\nabla f)$, as introduced in \cref{lem:weighted_greens}.  
This places the model squarely within the frameworks of operator–based Gaussian processes and reproducing–kernel Hilbert spaces (RKHS) \citep{kimeldorf_wahba_bayes_1970, Wahba90, lindgren_explicit_2011}. 
In this view, the Dirichlet energy $\int p\,\|\nabla f\|^2$ plays the role of a smoothness penalty: in the GP formulation it arises from the covariance kernel $(\mathcal L_p)^{-1}$, while in the field–theoretic formulation it appears as the corresponding precision operator. 
These two descriptions are mathematically equivalent and provide a unified interpretation of the FT as a Gaussian prior over functions with
weighted Sobolev structure.

This viewpoint clarifies the role of the additive FT geometry. 
Penalizing $\|\nabla \ftp\|^2$ enforces smoothness in the absolute noise level, while the log–precision parameterization $\hat\eta = \log \ftp$ ensures positivity and preserves the form of the Gaussian likelihood. 
When $\ftp$ varies smoothly and is bounded away from zero, these geometries differ only by a spatially varying rescaling through $\nabla\hat{\eta} = \nabla\ftp / \ftp$, yielding similar stationary behavior within the stable region of the regularization space.
The BFT thus recovers the deterministic FT at the posterior mode while also providing a principled probabilistic interpretation of the regularizers and the associated uncertainty decomposition.

This operator-based interpretation connects the FT to classical statistical treatments of heteroskedastic regression, in which smoothness penalties on mean and variance functions arise from Gaussian priors \citep{yuan_doubly_2004,lemm_bayesian_2000}. 
The Bayesian formulation adopted here provides the continuum analogue: a Gaussian prior defined through a differential operator, yielding a flexible nonparametric model whose posterior mode corresponds exactly to the deterministic FT and whose ensemble-based approximations provide a practical route to epistemic uncertainty estimation.

\begin{figure}[htbp]
\centering

\begin{subfigure}[t]{0.495\textwidth}
    \centering
    \includegraphics[width=\textwidth]{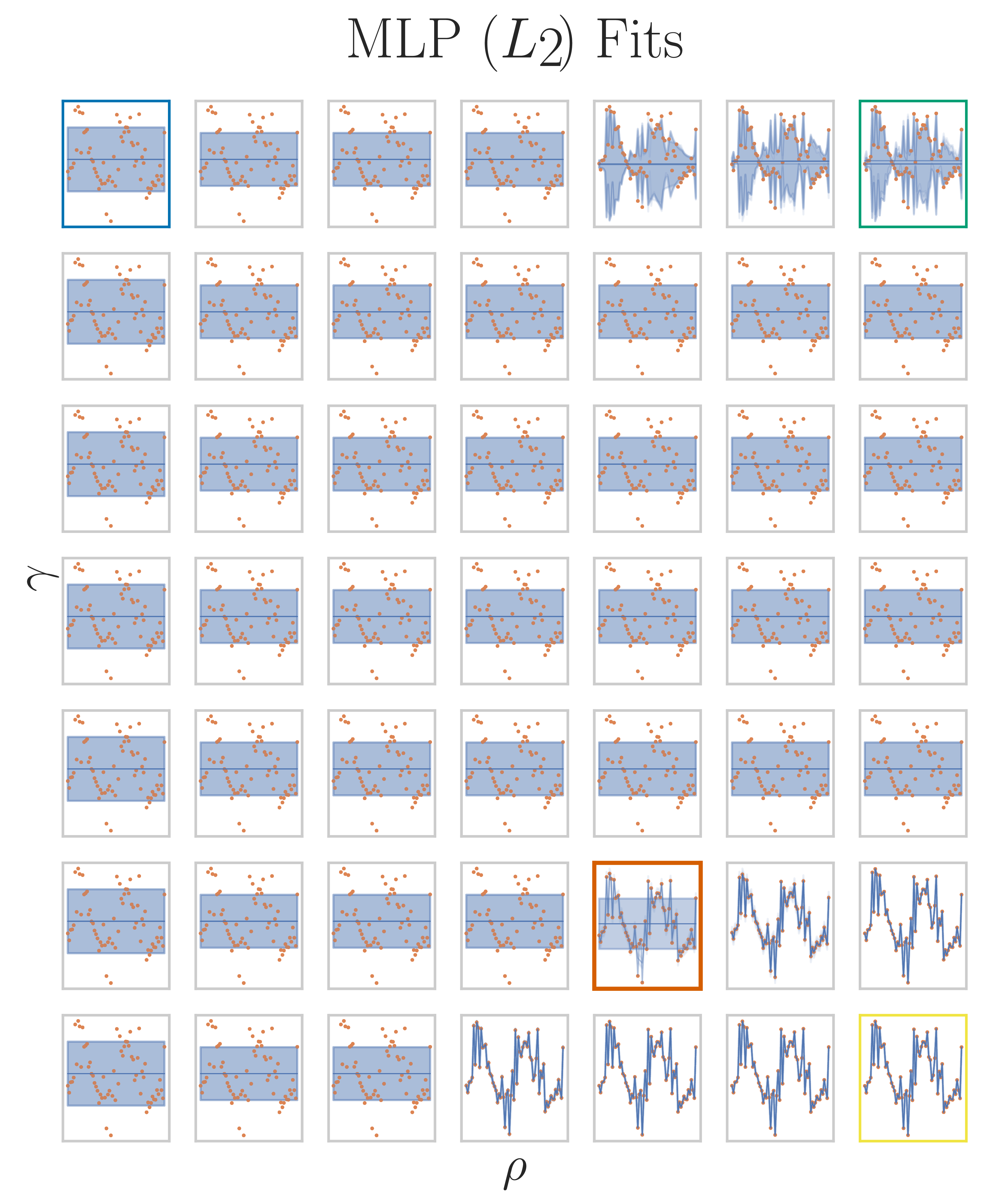}
    \caption{}
    \label{fig:ensemble_mlp_l2}
\end{subfigure}
\hfill
\begin{subfigure}[t]{0.495\textwidth}
    \centering
    \includegraphics[width=\textwidth]{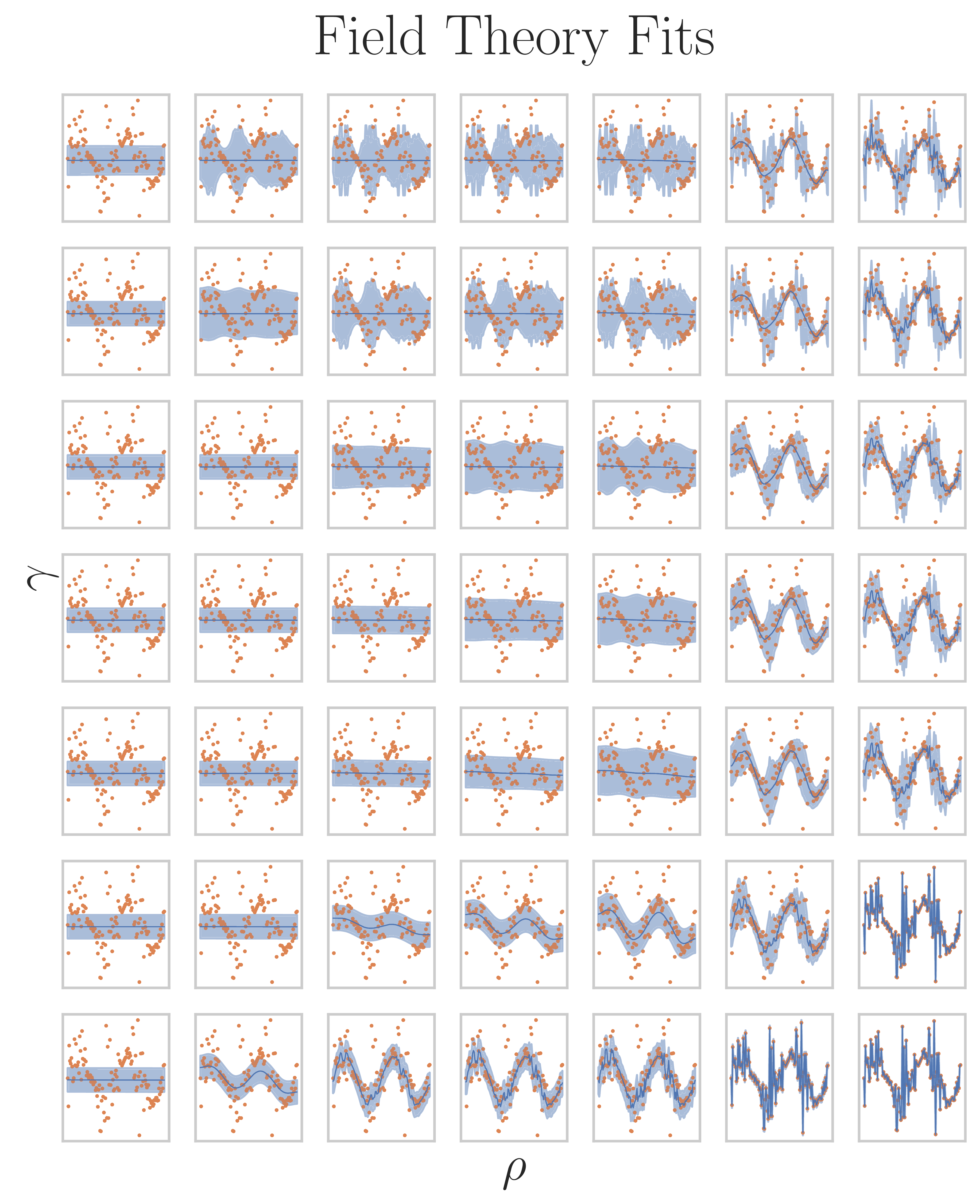}
\caption{}
    \label{fig:ensemble_ft}
\end{subfigure}

\vspace{1em}

\begin{subfigure}[t]{0.21\textwidth}
    \centering
    \includegraphics[width=\textwidth]{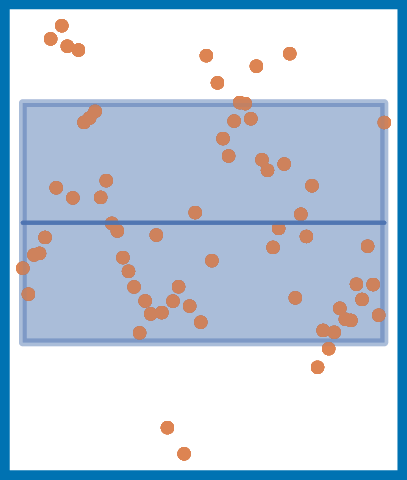}
    \caption{}
    \label{fig:ensemble_c}
\end{subfigure}
\hfill
\begin{subfigure}[t]{0.21\textwidth}
    \centering
    \includegraphics[width=\textwidth]{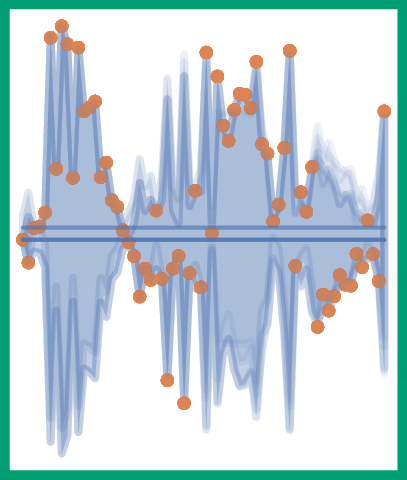}
    \caption{}
    \label{fig:ensemble_d}
\end{subfigure}
\hfill
\begin{subfigure}[t]{0.21\textwidth}
    \centering
    \includegraphics[width=\textwidth]{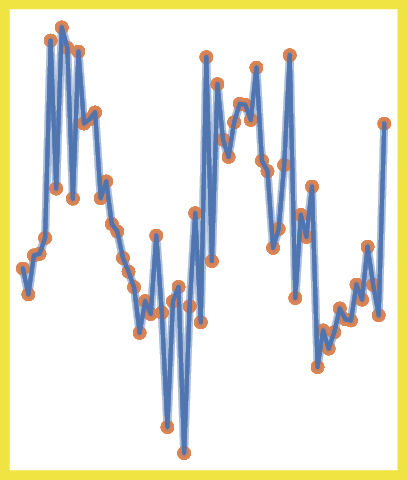}
    \caption{}
    \label{fig:ensemble_e}
\end{subfigure}
\hfill
\begin{subfigure}[t]{0.21\textwidth}
    \centering
    \includegraphics[width=\textwidth]{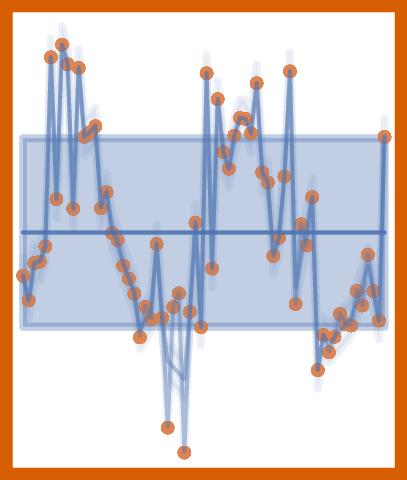}
    \caption{}
    \label{fig:ensemble_f}
\end{subfigure}

\caption{
Ensemble fits from two modeling approaches. 
Training data are shown in orange; the ensemble mean (blue) and its pointwise $\pm 1$\,s.d.\ band (shaded) are overlaid for six independent runs. 
Panels~(\subref{fig:ensemble_mlp_l2}) and (\subref{fig:ensemble_ft}) show a neural implementation and its FT counterpart, respectively. 
Panels~(\subref{fig:ensemble_c})--(\subref{fig:ensemble_f}) illustrate representative neural network fits in different overfitting and underfitting regimes, with panel~(\subref{fig:ensemble_f}) displaying phase coexistence in $(\rho,\gamma)$ space.
}

\label{fig:multi-ens}
\end{figure}

\section{Experiments}
\label{sec:experiments}

The primary goal of our experiments is to visualize phase transitions in two-dimensional phase diagrams.
We show that the qualitative structure of these phase diagrams is independent of any specific neural network architecture by demonstrating close agreement with the field-theoretic solutions.
This analysis also yields a practical procedure for selecting well-suited $(\rho,\gamma)$ regularization strengths, reducing a two-dimensional hyperparameter search to one dimension.
Our main experiments use fully connected networks with $(\rho,\gamma)$–$L_2$ regularization, and we additionally assess the variability of model fits across multiple runs.

\subsection{Field Theory and Neural Networks}
\label{sec:ft2nn}

The field-theoretic formulation developed in \cref{sec:field_theory} describes the behavior of overparameterized mean--variance regression models in a  function-space limit, abstracting away architectural details of specific predictors.  
In practice, however, we implement these ideas using fully connected neural networks.  
Although neural networks do not span the same function space as the nonparametric fields  considered by the FT, modern overparameterized architectures are sufficiently expressive  to approximate the relevant solution classes and to exhibit the same characteristic  phase transitions predicted by the theory.

Our goal is therefore not to enforce an exact architectural correspondence, but to verify  empirically that standard neural networks trained with simple $(\rho,\gamma)$--weighted  $L_2$ regularization follow the qualitative regimes identified by the FT.  
We apply these penalties separately to the mean and precision networks, mirroring the allocation of smoothness in the field theory, and we observe sharp transitions between underfitting, stable, and overfitting regimes across datasets.  
The resulting neural-network phase diagrams closely match those obtained from numerically solving the FT, demonstrating that the field theory captures the coarse-grained behavior of practical heteroskedastic regressors without requiring architectural modifications.

\subsection{Modeling Choices} 
We chose $\nnmu, \nnLambda$ to be fully-connected networks with three hidden layers of 128 nodes and leaky ReLU activation functions.  
The first half of training was only spent on fitting $\nnmu$, while in the second half of training, both $\nnmu$ and $\nnLambda$ were jointly learned. 
This improves stability, since the precision is a dependent on the mean $\nnmu$, and is similar in spirit to ideas presented in \citet{skafte_reliable_2019}. 
Complete training details can be found in Appendix~\ref{app:training}. 

\subsection{Datasets}
We study regularization effects on several one-dimensional simulated datasets and on standardized versions of the \emph{Concrete} \citep{yeh_i-cheng_concrete_2007}, \emph{Housing} \citep{harrison_hedonic_1978}, \emph{Power} \citep{tufekci_prediction_2014}, and \emph{Yacht} \citep{gerritsma_geometry_1981} regression datasets from the UCI 
Repository~\citep{kelly_uci_nodate}, along with a scalar variable from the ClimSim dataset \citep{yu_climsim_2023}. 
We fit neural networks to both simulated and real data, and additionally solve the FT on the simulated datasets.  
Dataset details appear in Appendix~\ref{app:datasets}.  
We show results for the \emph{Sine} dataset and the four UCI datasets, with additional simulated results in Appendix~\ref{app:nn-training}.

\begin{figure}[htbp]
\centering
\includegraphics[width=\textwidth]{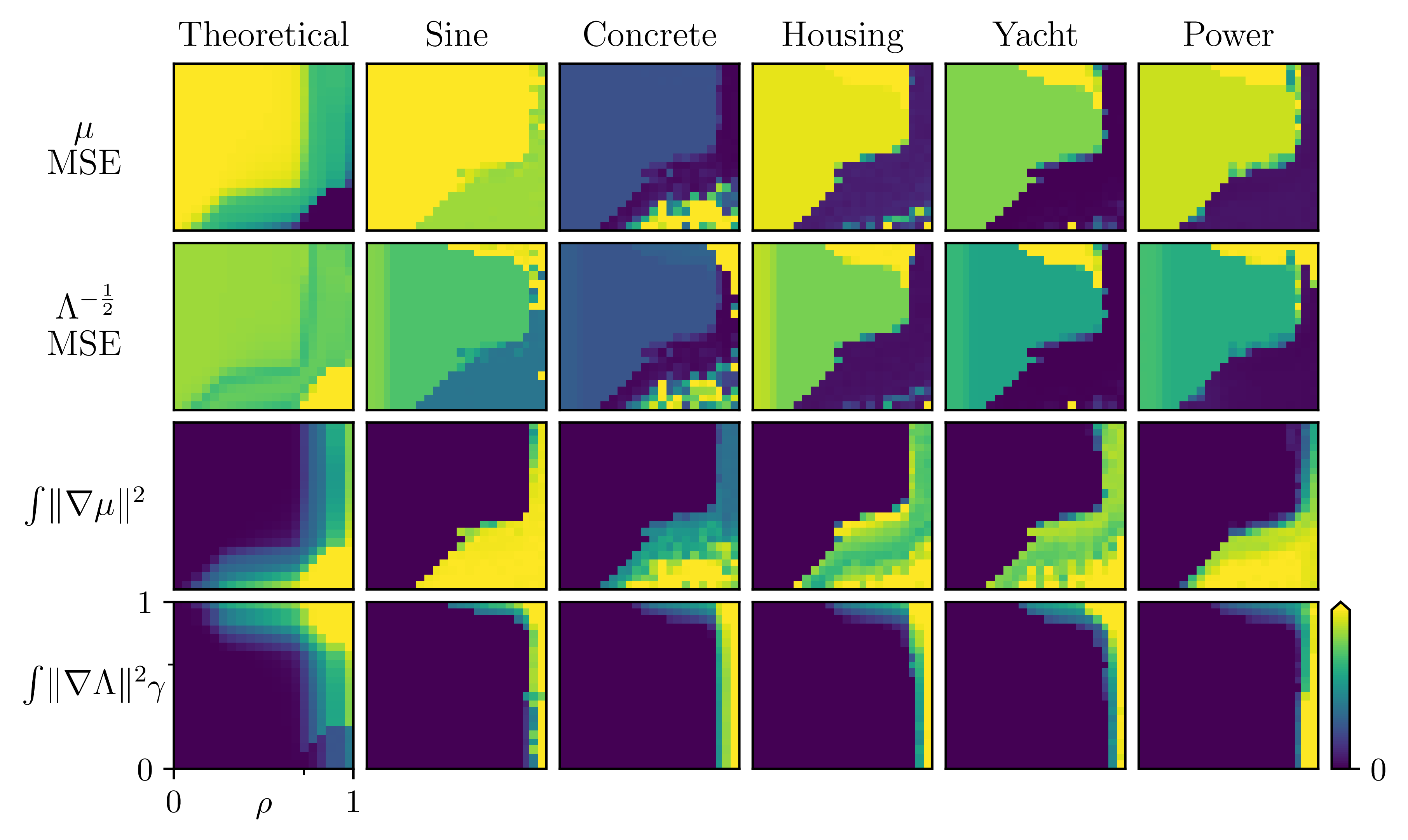}
\caption{
Array plot of evaluation metrics (\emph{rows}) across datasets or fitting methods (\emph{columns}) on the $(\rho,\gamma)$ regularization grid. 
The \emph{leftmost} column shows FT solutions; remaining columns show neural-network fits on held-out data. 
Each heatmap averages six runs. Ticks mark $\rho=0.5$ and $\gamma=0.5$ in the lower-left panel. 
Both axes use a logit parameterization of $\rho,\gamma\in(0,1)$ to highlight limiting behaviors near $0$ and $1$. 
The FT captures the same transition structure observed in the empirical diagrams across datasets.
}
\label{fig:summary_mean}
\end{figure}

\subsection{Qualitative Analysis}

Our qualitative analysis aims at understanding architecture-independent aspects of mean-variance regression upon varying the regularization strength on the mean and variance functions,  resulting in the observation of phase transitions. 

\subsubsection{Observables}
\label{sec:observables}
We evaluate both the calibration and expressiveness of the learned models.
For calibration, we compute mean squared error (MSE) for the predicted mean, $\nnmu(x_i)$, and for the predicted standard deviation, $\Lambda^{-1/2}(x_i)$, comparing the latter to the absolute residuals $|y_i - \nnmu(x_i)|$.
Well-fit models exhibit low errors for both measures, and we report $\Lambda^{-1/2}$-MSE due to its connection to variance-calibration metrics \citep{skafte_reliable_2019, levi_evaluating_2022}.

To assess expressiveness, we compute the Dirichlet energy for the FT solutions and its discrete analogue, the geometric complexity \citep{dherin_why_2022}, for neural networks.
The Dirichlet energy of a function $f$ is $\int_{\mathcal X} p(x)|\nabla f(x)|_2^2dx$, while geometric complexity is $N^{-1}\sum_{i=1}^N |\nabla f(x_i)|_2^2$.
Both quantify the variability of the learned functions, with larger values indicating more expressive (less regularized) behavior and directly corresponding to the quantities penalized in the FT formulation.

\subsubsection{Plot Interpretation}
We present summaries of the fitted models in grids with $\rho$ on the $x$-axis and $\gamma$ on the $y$-axis in \cref{fig:summary_mean,fig:summary_sd}. 
The far right column ($\rho=1$) corresponds to MLE solutions. 
The main focus is on qualitative traits of fits under different levels of regularization and how they behave in a relative sense, rather than a focus on absolute values. 
\cref{fig:diag_slice_a} show the summary statistics along the slice where $\rho = 1-\gamma$. 
Zero on these plots corresponds to the upper left corner while one corresponds to the lower right corner. 
We provide model fits arranged in grids of the same orientation for the field theory and neural networks on the \emph{Sine} dataset in \cref{fig:multi-ens}.

\begin{figure}[htbp]
\centering
\includegraphics[width=\textwidth]{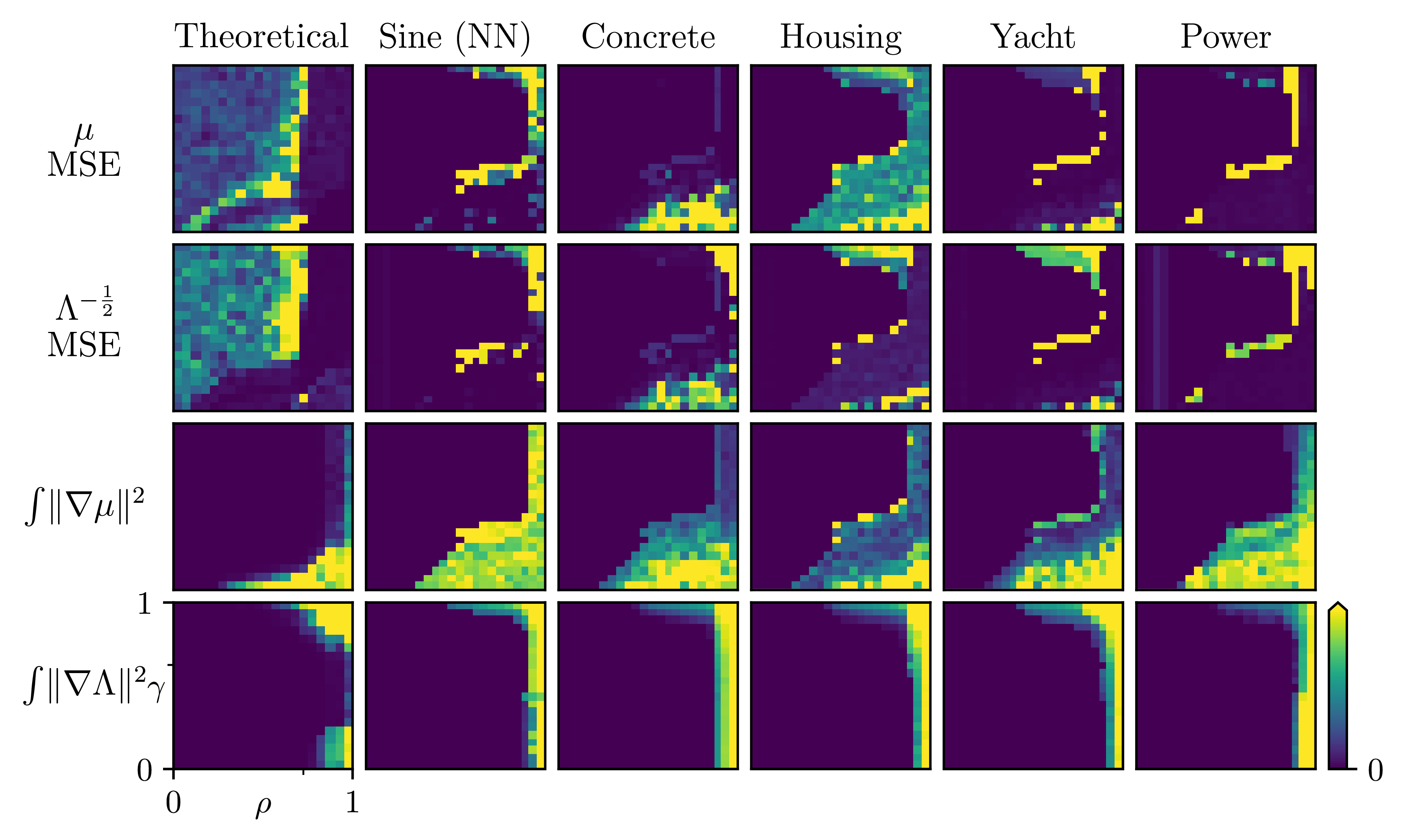}
\caption{
We compute the standard deviation over six runs for each metric in \cref{fig:summary_mean}, illustrating how variability changes across the regularization space. 
The shapes of the instability regions remain consistent across datasets and between the neural networks and the FT, as reflected in the Dirichlet energies and geometric complexities. 
These quantities show the largest disagreement in overfitting regimes, though this does not always correspond to high variability in the MSEs.
}

\label{fig:summary_sd}
\end{figure}

\subsubsection{Variability Over Runs}\label{sec:mvr-variability}
The preceding experiments focused on pointwise estimation of the mean and variance functions, capturing a single prediction and its associated \emph{aleatoric} uncertainty. 
To assess \emph{epistemic} uncertainty, we examine the variability of phase diagrams across multiple independent MVR fits. 
For the one-dimensional synthetic datasets, we visualize this variability directly by plotting individual model fits, illustrating how the consistency of learned functions changes across regions of the phase diagram.

\begin{obs}
Our metrics show sharp phase transitions upon varying $\rho, \gamma$, as in a physical system.
\end{obs}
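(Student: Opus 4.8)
The plan is to support this observation by combining the analytic boundary results of \Cref{prop:summary_main} with the empirical heatmaps of \Cref{fig:summary_mean,fig:summary_sd}, using the standard statistical-physics dictionary in which the action $\mathbf S_{\rho,\gamma}$ plays the role of a free-energy functional, $(\rho,\gamma)$ are control parameters, and the reported metrics (mean MSE, $\Lambda^{-1/2}$-MSE, Dirichlet energy / geometric complexity) serve as order parameters. A \emph{sharp transition} then means a locus in the $(\rho,\gamma)$ plane across which one of these order parameters jumps or develops a diverging derivative, i.e.\ a non-analyticity of the minimal action $\mathbf S^\ast(\rho,\gamma) := \inf_{(\ftmu,\ftp)}\mathbf S_{\rho,\gamma}[\ftmu,\ftp]$ or of the associated minimizers.

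First I would anchor the claim at the boundary of the parameter square. By \Cref{prop:summary_main}(i),(iii), the infimum of $\mathbf S_{\rho,\gamma}$ is $-\infty$ whenever $\rho=1$ or $\gamma\in\{0,1\}$, whereas part~(iv) yields a finite minimizer for $(\rho,\gamma)\in(0,1)^2$ under the bounded-precision assumption. Hence $\mathbf S^\ast$ is finite in the open square and $-\infty$ on part of its boundary, and the complexity of the minimizers---measured by the weighted Dirichlet energies $\int p\,\|\nabla\ftmu^\ast\|^2$ and $\int p\,\|\nabla\ftp^\ast\|^2$ that the metrics track---must diverge as $\gamma\to0^+$, $\gamma\to1^-$, or $\rho\to1^-$. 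This already establishes genuinely singular behavior of the order parameters along the edges $\gamma=0$, $\gamma=1$, $\rho=1$, corresponding to mean collapse, variance collapse, and interpolation---the regimes $O_\mu$, $O_\Lambda$ in \Cref{fig:cartoonphases}. The Neumann-degeneracy of \Cref{prop:summary_main}(ii) at $\rho=0$ likewise marks the underfitting edge $U_\Lambda$, where the order parameters collapse to their trivial constant-fit values.

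Next I would argue that these boundary singularities organize the interior. Because the Euler--Lagrange system \eqref{eq:pdes_revised} depends continuously on $(\rho,\gamma)$ away from the edges while its solution set changes discontinuously at them, and because the nonconvex functional admits multiple local minimizers (as noted after \Cref{prop:interior_exist}), one expects the basin structure selected by gradient descent to reorganize abruptly along curves interpolating between the degenerate edges---the solid and dotted curves of \Cref{fig:cartoonphases}. I would then bring in the empirical evidence: in \Cref{fig:summary_mean} the metric heatmaps, plotted on a logit scale in $(\rho,\gamma)$, show near-piecewise-constant plateaus separated by thin transition layers whose location is stable across the \emph{Sine} and four UCI datasets and, crucially, coincides between the neural-network fits and the independently computed FT solutions; \Cref{fig:summary_sd} shows that the run-to-run standard deviation of the metrics spikes precisely on these layers---the finite-size analogue of a diverging susceptibility at a critical point. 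The dataset- and architecture-independence of the transition locus is the universality signature that justifies the phrase ``as in a physical system.''

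The main obstacle is that the interior sharpness cannot be derived rigorously from the results stated above: \Cref{prop:summary_main} controls only the boundary of $(0,1)^2$, and on a bounded lattice every metric is strictly speaking a smooth function of $(\rho,\gamma)$, so what we call a ``sharp transition'' is a steep crossover that sharpens with problem size rather than a proven discontinuity of a thermodynamic-limit order parameter. A fully rigorous statement would require (a) a continuum existence/regularity theory valid up to the edges, (b) a bifurcation analysis of \eqref{eq:pdes_revised} tracking how solution branches appear and disappear across $(\rho,\gamma)$, and (c) a finite-size-scaling argument for the discretized objective. Absent these, the observation is \emph{supported}---not proven---by the combination of the edge degeneracies in \Cref{prop:summary_main}, the matching FT and neural-network phase diagrams, and the susceptibility-like variance spikes; establishing the interior transitions rigorously is exactly the open problem flagged in \Cref{sec:ft_insights}.
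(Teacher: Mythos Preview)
Your proposal is considerably more elaborate than the paper's own justification, which treats this as a purely empirical observation. The paper's support consists of a short paragraph pointing to \Cref{fig:summary_mean} and \Cref{fig:diag_slice_a}: it simply notes that the heatmaps and the minor-diagonal slice display sharp changes in the metrics, that the five regions of \Cref{fig:cartoonphases} are collectively visible (though not every region appears in every metric), and that the neural-network boundaries are in fact \emph{sharper} than those of the numerically solved FT. No appeal is made to \Cref{prop:summary_main}, to order parameters, or to susceptibility arguments; the observation is read directly off the figures.

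Your route is genuinely different and, in a sense, more ambitious: you try to \emph{derive} the sharpness from the boundary degeneracies of \Cref{prop:summary_main} and then corroborate it empirically, casting the metrics as order parameters and the variance spikes of \Cref{fig:summary_sd} as a finite-size susceptibility. This buys a more principled story for \emph{why} transitions should exist, but it also imports claims the paper does not make at this point: the paper uses \Cref{fig:summary_sd} for Observations~\ref{obs:variability} and~\ref{obs:instability}, not here, and it explicitly notes that the MSE-variability pattern does \emph{not} align cleanly with the overfitting regions, so the ``spikes precisely on these layers'' reading overstates the evidence. You also omit \Cref{fig:diag_slice_a}, which is the paper's clearest one-dimensional visualization of the sharp drop-and-rise in error along $\rho=1-\gamma$. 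Finally, your own caveat is well placed: the interior sharpness is not a theorem, and the paper treats it accordingly---as an observed phenomenon, not a derived one.
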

\cref{fig:summary_mean} and \cref{fig:diag_slice_a} show a sharp transition, both leading to worsening and improving performance when moving along the minor diagonal. 
In totality, across all metrics, the five regions are apparent.
But not all of the regions in \cref{fig:cartoonphases} appear in the heatmaps of each metric. 
For example, region $O_\Lambda$ does not always appear in the metrics related to the mean.  
When using neural networks to approximate $\mu$ and $\Lambda$, there are sharper boundaries between phases than in the FT's numerical solutions. 
The boundary between $U_\mu$ and $O_\mu$ is sharply observed in the plots of $\int \|\nabla \mu(x)\|_2^2 \, dx$. 
However, in terms of $\mu$-MSE, a smoother transition (i.e., region $S$) is visible.

\begin{obs}
    The FT insights and observed phases are consistent with both the numerically solved FT and the neural-network fits. 
    Thus, our conclusions are not tied to  a specific architecture or dataset.
\end{obs}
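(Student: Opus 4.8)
The plan is to substantiate this observation in the manner of Observation~1: by pointing to the array plots and diagonal slices and checking, first, that the numerically solved FT reproduces the limiting behaviors predicted by \cref{prop:summary_main}, and second, that the neural-network fits reproduce the same behaviors consistently across the \emph{Sine}, UCI, and ClimSim datasets. Architecture- and dataset-independence then follows because identical qualitative structure appears in a model with no architecture (the FT) and in networks trained on qualitatively different data.

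First I would verify the three hard limits of \cref{prop:summary_main} in the empirical diagrams. Part~(i) ($\rho\to1$, no finite stationary solution) should manifest as diverging geometric complexity or Dirichlet energy together with vanishing residuals in the $\rho=1$ column of \cref{fig:summary_mean,fig:summary_sd}; part~(ii) ($\rho\to0$, constant solutions) as flat, data-independent fits and near-zero complexity along the left edge; part~(iii) ($\gamma\to0$ or $\gamma\to1$, unbounded objective) as runaway behavior of the precision channel near $\gamma=1$ and of the mean channel near $\gamma=0$. Confirming that these signatures appear in the FT column and in the neural-network columns alike establishes the first half of the claim.

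Second, away from the limits, I would compare the coarse topology of the diagrams. For each metric row I would check that the FT heatmap and the neural-network heatmaps partition $(\rho,\gamma)$ into the same five labeled regions of \cref{fig:cartoonphases}, arranged in the same relative order and separated by transitions of the same type---sharp in the complexity measures, smoother in the MSEs, as already observed in Observation~1. The diagonal slice $\rho=1-\gamma$ in \cref{fig:diag_slice_a} provides a one-dimensional cross-section through $U_\mu$, $S$, and $O_\mu$ along which both solvers should exhibit the same ordering of transitions, and the overlaid ensemble fits in \cref{fig:multi-ens} give a direct visual check on \emph{Sine} that FT and neural means occupy the same regimes.

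Finally I would be explicit about scope. The assertion is consistency of qualitative structure---existence, relative placement, and transition type of the five phases---rather than pointwise numerical agreement, since the FT is solved in a continuum limit on a uniform lattice whereas the network lies in a different, architecture-dependent function class; in particular \cref{fig:summary_sd} shows the largest FT--NN disagreement localized to the overfitting regimes, consistent with the non-uniqueness of minimizers remarked after \cref{prop:summary_main}. The main obstacle is precisely that ``consistency'' here is inherently qualitative: there is no canonical quantitative similarity measure between two phase diagrams, so the argument rests on visual inspection of heatmaps and slices rather than a single decisive statistic. Making it quantitative---for instance via a registration-based overlap score between the FT and neural heatmaps, reported across all datasets and metrics---would strengthen the observation but lies beyond what the present figures directly support.
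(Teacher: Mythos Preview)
Your proposal is correct and takes essentially the same approach as the paper: both argue by visual inspection of the heatmaps, diagonal slices, and ensemble fits, confirming that the limiting behaviors predicted by \cref{prop:summary_main} and the five-region topology of \cref{fig:cartoonphases} appear consistently in the FT column and in the neural-network columns across datasets. The paper's supporting text is slightly less systematic than yours---it singles out the vertical-slice invariance of $U_\Lambda$ and $O_\mu$ and the asymmetry between the mean and precision smoothness regions rather than checking all four parts of \cref{prop:summary_main}---but the mode of argument is the same qualitative comparison you outline.
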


In line with the theoretical predictions, phases $U_{\Lambda}$ and $O_{\mu}$ exhibit consistent behavior across $\gamma$-values (vertical slices in \cref{fig:summary_mean}).  Across all datasets we considered, the qualitative structure  of the phase diagrams remains the same: the same phase types appear, the same ordering of regimes is observed, and the transitions occur along similarly shaped boundaries.  
Representative fitted models are shown in \cref{fig:multi-ens}.

While the overall phase structure is shared across datasets, the precise locations of the transitions vary.  
Different datasets and input dimensionalities shift the $(\rho,\gamma)$ values at which the boundaries occur, but the geometric shape and ordering of regions remain stable.

In the right-hand columns $(\rho \!\to\! 1)$ the mean function nearly interpolates the data, and similar behavior is seen in the lower rows $(\gamma \!\to\! 0)$.  
Across all metrics, the regions evolve with regularization strength in a comparable manner on all datasets.  Notably, the region of small $\int \|\nabla \Lambda(x)\|_2^2 \, dx$ covers a larger portion of the diagram than the corresponding region for $\int \|\nabla \mu(x)\|_2^2 \, dx$, indicating that the precision function remains smoother than the mean under comparable levels of regularization.

\begin{obs}\label{obs:variability}
The neural network phase diagrams reveal different amounts of variability in model fits across the regularization space.
\end{obs}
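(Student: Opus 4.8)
The statement is an empirical observation rather than a mathematical proposition, so the plan is to establish it through a controlled ensemble study. First I would fix a family of one-dimensional synthetic datasets (e.g.\ the \emph{Sine} dataset) together with the UCI benchmarks, and for each $(\rho,\gamma)$ on a logit-spaced grid train $R$ independently initialized copies of the mean--precision network using the two-stage schedule described in the Modeling Choices section. Holding everything except the random seed fixed isolates the variability attributable to initialization and optimization, which is precisely the quantity the BFT interprets as epistemic spread (\cref{sec:bft_sampling}).

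Second, for each grid cell I would compute, over the $R$ runs, the standard deviation of each observable from \cref{sec:observables}: the $\nnmu$-MSE, the $\Lambda^{-1/2}$-MSE, and the geometric complexity of both networks. Rendering these as heatmaps over $(\rho,\gamma)$, as in \cref{fig:summary_sd}, makes the spatial non-uniformity of the variability directly visible. Guided by the phase structure of \cref{prop:summary_main} and \cref{fig:cartoonphases}, the expectation is that run-to-run variance is small deep inside the underfitting regions $U_\Lambda, U_\mu$ (where the regularizer dominates and forces nearly constant fields) and inside the stable region $S$, but large near the $U_\mu/O_\mu$ and $S/O_\Lambda$ boundaries, where the nonconvex objective admits several qualitatively distinct local minimizers.

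Third, for the 1D datasets I would overlay the $R$ individual fits (predicted mean and $\pm$ s.d.\ band) cell by cell, as in \cref{fig:multi-ens}, so that the \emph{type} of variability---smooth jitter versus genuine bimodality or phase coexistence---can be read off visually; panels exhibiting coexistence (e.g.\ \cref{fig:ensemble_f}) would serve as the sharpest evidence that heightened variance is structural rather than numerical noise. Finally, I would cross-check the shape of the high-variability regions against the field-theoretic ensemble, since agreement there argues the pattern is architecture-independent and tied to the energy landscape rather than to SGD idiosyncrasies.

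The main obstacle is disentangling genuine multistability from ordinary optimization stochasticity and finite-sample metric noise: a region can show elevated standard deviation simply because the metric values are large there (e.g.\ overfitting inflates geometric complexity), not because the runs disagree qualitatively. I would address this by reporting a coefficient-of-variation alongside the raw standard deviation, increasing $R$ until the heatmaps stabilize, and comparing the neural and FT variability maps to confirm that the non-uniformity reflects the underlying phase geometry rather than a training artifact.
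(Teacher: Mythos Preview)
Your plan matches the paper's approach: train several independently seeded networks at each $(\rho,\gamma)$ cell, report per-cell standard deviations of the observables as heatmaps (\cref{fig:summary_sd}), and overlay the individual fits on the 1D data (\cref{fig:multi-ens}) to visualize the type of variability. The only notable mismatch is in your a priori expectations: the paper actually finds \emph{elevated} variability in the heavily regularized far-left region---attributed to random initialization dominating when the data term is suppressed---and finds that $O_\mu$ has high geometric-complexity variability but low MSE variability, so the pattern is more metric-dependent than your boundary-localized prediction suggests; your methodology would still establish the observation, just with a revised interpretation.
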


This behavior hints at variability in the fitting procedure and can be considered a sign of the need to measure epistemic uncertainty. 
The standard deviations over the metrics displayed in \cref{fig:summary_mean} are shown in \cref{fig:summary_sd}. 
The Dirichlet energies/geometric complexities show that there is the most variability in the overfitting regions $O_\mu$ and parts of $O_\Lambda$. 
This indicates that the functions themselves vary across runs. 
Actual fits of the \emph{Sine} dataset are displayed in \cref{fig:multi-ens}. 
However, when turning to quality of fits, the MSEs show a different pattern of regions of instability, and $O_\mu$ has low variability in terms of actual performance.

Note that in the region of high regularization (\emph{far left column}) we see greater variability than in the moderately regularized regions in the upper middle. 
We posit that in the highly regularized regions we are essentially seeing the variability coming from the random initialization of the model weights. 
Meanwhile in the central region we see that the mean and variance functions are afforded enough ``flexibility'' to adapt to the global mean and standard deviation, but not enough flexibility to fit to the data. 
Thus there is much less spread between the different ensemble members in this region. 

\begin{obs}\label{obs:instability}
The neural network phase diagrams exhibit regions of instability: for certain $(\rho,\gamma)$ values, independently trained MLPs produce qualitatively different fits, whereas the FT solutions are highly consistent across runs.
\end{obs}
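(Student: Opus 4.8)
Because \cref{obs:instability} concerns the variability of the \emph{optimizer} rather than of the population objective, the argument is necessarily empirical, and the plan is to establish it through the ensemble protocol already used for \cref{fig:summary_sd,fig:multi-ens}. First I would fix a logit-spaced grid of regularization pairs $(\rho,\gamma)\in(0,1)^2$ and a dataset on which individual fits can be inspected directly (the one-dimensional \emph{Sine} set), supplemented by the UCI sets. At each grid point I would train $K$ independently seeded networks $(\nnmu,\nnLambda)$ under the $(\rho,\gamma)$-weighted $L_2$ objective~\eqref{rhogamma}, and independently solve the discretized field theory~\eqref{eq:disc_ft} $K$ times from different random lattice initializations. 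All runs share the data and the training schedule, so the only source of variation is the initialization (plus SGD noise for the networks).

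\textbf{Quantifying instability.} Next I would summarize each ensemble by the across-run standard deviations of the observables of \cref{sec:observables}: the mean-MSE, the predicted-standard-deviation MSE, and the expressiveness measure (geometric complexity for the networks, Dirichlet energy for the field theory). To capture genuinely \emph{qualitative} disagreement rather than numerical jitter, I would additionally record a pointwise functional spread such as $\tfrac{1}{D}\sum_i \var_m[\ftmu^{(m)}(x_i)]$ together with the analogous quantity for $\ftp^{(m)}$. Rendering these as heatmaps over the $(\rho,\gamma)$ plane (\cref{fig:summary_sd}) should expose the instability regions: for the networks, elevated variability concentrates in the overfitting regimes $O_\mu$ and parts of $O_\Lambda$, where many near-degenerate interpolants coexist, and, separately, in the strongly regularized corner, whose variability merely reflects weight-initialization noise around an essentially constant fit and which I would flag as benign.

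\textbf{Comparison with the field theory.} I would then plot the same standard-deviation heatmaps for the discretized FT and overlay individual FT solutions (\cref{fig:multi-ens}): these should be essentially flat, with the curves nearly coinciding, in sharp contrast to the networks, for which overlaying runs at a point near the $O_\mu$/$S$ boundary should display outright \emph{phase coexistence}---some runs interpolating the data while others remain in the stable regime. The interpretation I would give is a loss-landscape one: the overparameterized network objective~\eqref{rhogamma} is high-dimensional and strongly nonconvex, with several basins near the overfitting boundary corresponding to qualitatively different signal-versus-noise attributions, so differently seeded trajectories land in different basins; the discretized FT objective~\eqref{eq:disc_ft} is a much lower-dimensional and structurally simpler functional (a weighted Dirichlet energy plus a Gaussian negative log-likelihood on a fixed grid), and although \cref{prop:summary_main} guarantees only \emph{existence}---not uniqueness---of stationary points, gradient descent on it is in practice dominated by a single basin away from edge cases, which is precisely the sense in which the field theory captures the architecture-independent, coarse-grained behavior.

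\textbf{Main obstacle.} The hard part will be making ``qualitatively different fits'' operationally precise, so that the identified instability regions are robust to the choice of observable and to the grid and lattice resolution, and cleanly separating the genuine multi-basin instability near the overfitting boundaries from the trivial initialization-noise variability in the heavily regularized corner. A secondary caveat is that the FT-consistency half of the claim is inherently numerical: since uniqueness of FT minimizers is not established, one can only assert empirical consistency for the discretizations and datasets studied, which is exactly why the statement is framed as an \emph{observation} rather than a theorem.
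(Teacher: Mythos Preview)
Your proposal is correct and follows essentially the same approach as the paper: training an ensemble of independently initialized networks and FT discretizations at each $(\rho,\gamma)$ grid point, quantifying run-to-run variability via the standard-deviation heatmaps of \cref{fig:summary_sd} and the overlaid fits of \cref{fig:multi-ens}, and interpreting the contrast through a loss-landscape argument (multiple basins for the overparameterized networks versus an empirically single attractor for the FT), with the explicit caveat that FT uniqueness is not established analytically. If anything, your protocol is slightly more thorough (the additional pointwise functional-spread metric and the care in separating genuine multi-basin instability from initialization-noise variability in the heavily regularized corner), and your only minor inaccuracy is that the paper uses full-batch gradient descent for most datasets, so SGD noise is not an additional source of variation there.
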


\cref{fig:multi-ens} illustrates this phenomenon.  
Even with identical $(\rho,\gamma)$ values and full-batch gradient descent, different random initializations of the MLP parameters can lead to distinct fitting behaviors---some runs overfit while others underfit the data (see the outlying curve in
\cref{fig:ensemble_mlp_l2}).
This variability is quantified in \cref{fig:summary_sd}, which reports the standard deviation of MSEs and Dirichlet energies across runs and reveals pockets of substantial instability in the neural-network landscape.

In contrast, the field-theoretic fits (\cref{fig:ensemble_ft}) show almost no variation across runs. 
Although we also initialize the discretized fields $\mu$ and $\Lambda$ randomly and optimize them by gradient descent, the FT energy appears to have a much smoother and more strongly regularized landscape: empirically, all runs converge to essentially the same solution for a fixed $(\rho,\gamma)$. 
We do not claim uniqueness of the FT minimizer analytically, but in practice the FT optimization exhibits a single stable attractor, in sharp contrast to the multiple effective basins observed for the neural networks.

\begin{figure}[htbp]
\centering
\begin{subfigure}{\textwidth}
   \includegraphics[width=\linewidth]{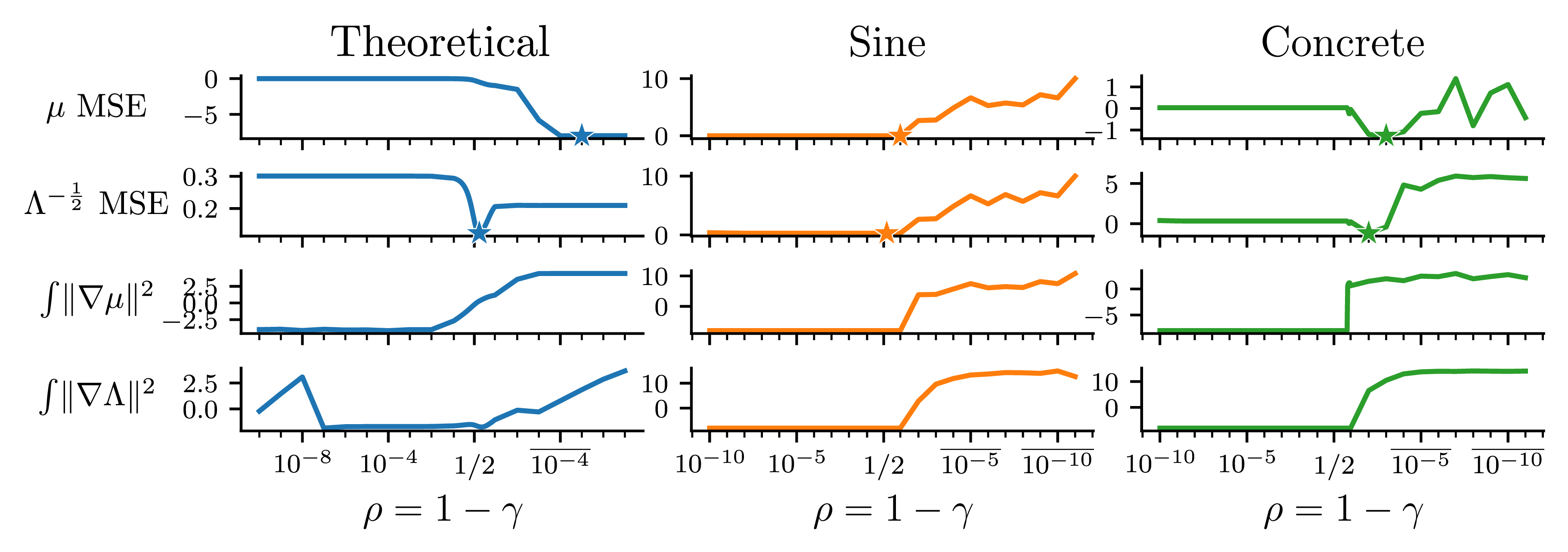}
\end{subfigure}
\begin{subfigure}{\textwidth}
   \includegraphics[width=\linewidth]{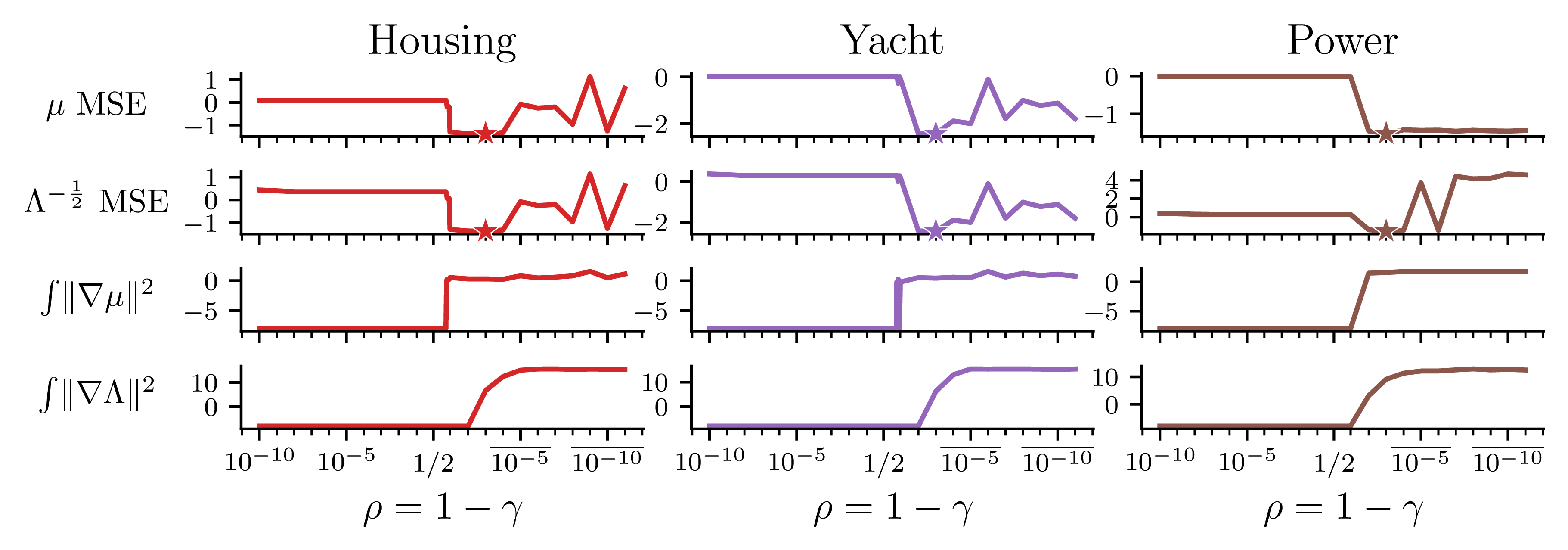}
\end{subfigure}

\caption{
Test metrics across six runs along the $\rho = 1-\gamma$ diagonal. 
Stars denote the minimum MSE for each dataset. 
All metrics are plotted on a $\log_{10}$ scale, and $\rho$ is shown on a logit scale to highlight behavior near the boundaries. 
Errors drop sharply near the transition into the $S$ phase and then increase again as $\rho$ moves past this region, consistent with the qualitative structure in \cref{fig:cartoonphases}.
}

\label{fig:diag_slice_a}
\end{figure}
\subsection{Quantitative Analysis}
Our quantitative analysis aims to demonstrate the practical implications of our qualitative investigations that result in better calibration properties.

\begin{obs}
We can search along $\rho = 1-\gamma$ to find a well-calibrated $(\rho, \gamma)$-pair from region $S$.
\end{obs}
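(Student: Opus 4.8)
The plan for justifying this observation is to combine the phase structure of \cref{prop:summary_main} with the qualitative behavior of the calibration observables along the minor diagonal, showing that the one-parameter family $\{(\rho,1-\rho):\rho\in(0,1)\}$ necessarily passes through the stable region $S$. A one-dimensional grid search over $\rho$, with $\gamma$ tied to $1-\rho$, then returns a $(\rho,\gamma)$ pair in $S$, which by the discussion in \cref{sec:pitfall} is the regime that yields well-calibrated fits.

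First I would identify the two endpoints of the diagonal with opposite pathologies. As $\rho\to 1$ we have simultaneously $\rho\to 1$ and $\gamma=1-\rho\to 0$, the doubly degenerate corner of \cref{prop:summary_main}: item~(i) (pure MLE admits no finite stationary solution) and item~(iii) ($\gamma=0$ renders $\mathbf S_{\rho,\gamma}$ unbounded below) both apply, and both drive the mean toward interpolation, i.e.\ the overfitting regime $O_\mu$ in which $\mu$-MSE is small but $\int_{\mathcal X} p(x)\|\nabla\ftmu(x)\|_2^2\,dx$ and the $\ftp^{-1/2}$-MSE diverge. As $\rho\to 0$ we are in the regime of \cref{prop:summary_main}(ii): the likelihood term vanishes and the minimizer collapses to constant fields (region $U_\Lambda$), so both Dirichlet energies vanish while $\mu$-MSE is large (the fit is the global mean). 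Hence the two ends of the diagonal sit in qualitatively opposite degenerate phases, an over-flexible mean at one end and over-regularized fields at the other.

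Second, I would interpolate between these endpoints using continuity of the induced observables along the diagonal. For $\rho\in(0,1)$ a minimizer of $\mathbf S_{\rho,1-\rho}$ exists by \cref{prop:summary_main}(iv), and a stability argument for minimizers of the strictly regularized functional under perturbation of $\rho$ (modulo the non-uniqueness of minimizers noted in \cref{sec:ft_insights}) shows that $\mu$-MSE, $\ftp^{-1/2}$-MSE, and the two Dirichlet energies depend on $\rho$ continuously away from the finitely many sharp transition values seen empirically. Combined with the opposite-endpoint behavior, an intermediate-value-type argument produces a sub-interval of $\rho$ on which the mean field already captures the dominant trend ($\mu$-MSE has dropped well below its $\rho\to 0$ level) but has not yet begun to interpolate (its Dirichlet energy remains bounded) --- exactly the characterization of region $S$ in \cref{sec:pitfall}. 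This manifests as the U-shaped validation-MSE profile along the diagonal in \cref{fig:diag_slice_a}, whose minimizer therefore lies in $S$; the practical procedure is then to grid the diagonal, evaluate a held-out calibration metric at each $\rho$, and select the minimizer.

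The main obstacle is that this argument is structural rather than fully rigorous: it tacitly assumes the diagonal genuinely enters $S$ rather than jumping directly from an underfitting region to an overfitting region through a single sharp transition that bypasses $S$. Excluding this scenario would require a quantitative lower bound on the width of $S$ along the minor diagonal in terms of the data field $y$ and the domain $\mathcal X$ --- precisely the sort of boundary placement that \cref{sec:field_theory} explicitly defers to future work. The observation is therefore supported at two complementary levels: (a) the coupled Euler--Lagrange structure of \eqref{eq:pdes_revised}, which makes a single interior sweet spot along the diagonal the generic outcome once both channels are strictly regularized; and (b) the experiments on the \emph{Sine} and UCI datasets in \cref{fig:summary_mean,fig:diag_slice_a}, which confirm that the diagonal does cross $S$ and that the minimum-MSE point along it falls inside the stable regime.
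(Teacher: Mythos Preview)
Your proposal is correct in spirit and, if anything, more carefully argued than the paper's own justification. The paper treats this as an empirical observation: it appeals to the qualitative phase diagram (\cref{fig:cartoonphases}) to assert that the minor diagonal should cross region~$S$, and then validates this directly via the diagonal slice experiments in \cref{fig:diag_slice_a} and the baseline comparison in \cref{tab:baseline_sub}. No formal endpoint or continuity argument is spelled out; the claim is supported essentially by the shape of the error curves (poor, then good, then poor again) along $\rho=1-\gamma$.

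Your route differs in that you actually anchor the two ends of the diagonal in the limiting cases of \cref{prop:summary_main} --- combining items~(i)/(iii) at $\rho\to1$ and item~(ii) at $\rho\to0$ --- and then invoke continuity of the observables together with an intermediate-value heuristic to argue that the diagonal must pass through $S$. This buys a clearer structural explanation for \emph{why} the diagonal works, and you are also explicit about the residual gap (no quantitative lower bound on the width of $S$ along the diagonal), which the paper simply leaves implicit by deferring boundary placement to future work. Both accounts ultimately lean on the same empirical evidence to close the argument, so your version is best read as a more developed variant of the paper's reasoning rather than a genuinely independent proof.
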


\begin{table*}[htp!]
\centering
\small
\caption{Comparison of our mean-variance regression model (Ours) with diagonal regularization search and $\beta$-NLL \citep{seitzer_pitfalls_2022}.
Details on our MVR selection criteria can be found in Appendix~\ref{app:diag-crit}.
We report the average and standard deviations of $\mu$- and $\Lambda^{-\frac{1}{2}}$-MSE across six runs on test data.}

\begin{tabular}{ @{}l|*{6}{c}@{} }
\toprule
{Metric} &
{Sine} &
{Concrete} &
{Housing} &
{Power} &
{Yacht} &
{Solar Flux} \\
\midrule
$\mu$-MSE & & & & & & \\
\hfill Ours  & 0.80 ± 0.00 & \textbf{0.11} ± 0.02 & 1.22 ± 0.00 & \textbf{0.04} ± 0.01 & \textbf{0.01} ± 0.01 & \textbf{0.29} ± 0.00 \\
\hfill $\beta$-NLL    & \textbf{0.69} ± 0.05 & 0.55 ± 0.30 & \textbf{0.32} ± 0.05 & 0.09 ± 0.01 & \textbf{0.01} ± 0.01 & 0.38 ± 0.00 \\
\midrule
$\Lambda^{-\frac{1}{2}}$-MSE & & & & & & \\
\hfill Ours   & 0.80 ± 0.00 & \textbf{0.30} ± 0.51 & \textbf{0.76} ± 0.00 & \textbf{0.03} ± 0.01 & \textbf{0.01} ± 0.01 & \textbf{0.12} ± 0.00 \\
\hfill $\beta$-NLL     & \textbf{0.52} ± 0.07 & 1.09 ± 0.20 & 0.88 ± 0.03 & 0.31 ± 0.37 & 1.33 ± 0.02 & 0.32 ± 0.00 \\
\bottomrule
\end{tabular}
\label{tab:baseline_sub}
\end{table*}

Our FT indicates that a slice across the minor diagonal of the phase diagram should always cross the $S$ region (see \cref{fig:cartoonphases}).
\cref{fig:diag_slice_a} shows that by searching along this diagonal, we indeed find a combination of regularization strengths where both $\nnmu$ and $\nnLambda$ generalize well to held-out test data. 
This implies that there is no need to search all of the two-dimensional space, but only a single slice which reduces the 
number of models to fit from $\bigO(N^2)$ to $\bigO(N)$, where $N$ is the number of $\rho$ and $\gamma$ values that are tested. 
This finding is consistent with the suggestion from \citet{sluijterman_optimal_2024} to have stronger regularization on the variance than the mean. 
This corresponds to searching across a horizontal slice in the lower portion of our phase diagram and is generally consistent with where we posit the well-behaved $S$ region tends to lie. 

\cref{fig:diag_slice_a} shows that along the minor diagonal the performance is initially poor, improves, and then drops off again. 
These shifts from strong to weak performance are sharp. 
The regularization pairings that result in optimal performance with respect to $\mu$- and $\Lambda^{-1/2}$-MSE are near each other along this diagonal for the real-world test data. 
As the theory predicts, the performance becomes highly variable as we approach the MLE solutions and the FT fails to converge in this region. 
In practice, we propose searching along this line to find the $(\rho, \gamma)$-combinations that minimize the $\mu$- and $\Lambda^{-\frac{1}{2}}$-MSEs and averaging the regularization strengths to fit a model.  
We compare models chosen by our diagonal line search to two heteroskedastic modeling baselines in Appendix~\ref{app:baseline_comp} on the synthetic and UCI datasets as well as a scalar quantity from the ClimSim dataset \citep{yu_climsim_2023}. 
We present a subset of the results below in Table \ref{tab:baseline_sub}. 
In most cases the model chosen via the diagonal line search was competitive or better than the baselines.

\section{Conclusion}

Neural mean--variance models are known to exhibit significant training instabilities 
\citep{nix_estimating_1994,nix_learning_1994}. 
By developing a field-theoretic perspective grounded in statistical physics, we derived a 
continuum variational formulation of the learning problem that isolates structural causes of 
these pathologies \citep{lemm_bayesian_2000,ringel_applications_2025}. 
We refer to this continuum formulation as the field theory, and it yields explicit, 
architecture-independent insights into the behavior of deep heteroskedastic regression. 
It also helps clarify why these models often require carefully tuned regularization and why 
they tend to exhibit transitions between qualitatively different regimes.

Building on this continuum formulation, we introduced a numerical discretization of the field 
theory and demonstrated close qualitative agreement with neural network solutions across both 
synthetic and real-world datasets. 
Across repeated fits, we observed two central challenges: inconsistent behavior across model 
runs and inconsistent behavior across regularization strengths. 
The field theory clarifies these effects and motivates a more principled strategy for tuning 
regularization, reducing a two-dimensional search to an effectively one-dimensional problem. 
We also found that independently trained neural networks often undergo their transitions at 
different points in the regularization space. 
This produces an effect reminiscent of phase coexistence, where ensemble variability captures 
structure that is not visible in any single fit. 
The Bayesian field-theoretic perspective clarifies part of this connection to epistemic 
uncertainty, although a full characterization within a complete Bayesian framework remains to 
be developed. 
The Bayesian formulation also links the continuum model to classical statistical frameworks, 
including Gaussian process priors, spline-based smoothing, and penalized likelihood methods 
\citep{Wahba90,lindgren_explicit_2011}, which arise as special cases under particular choices 
of regularizers.

\subsection{Limitations}

The field-theoretic formulation captures several important aspects of neural behavior, but it 
also has limitations. 
By replacing the discrete neural objective with a population-level variational problem, it 
abstracts away optimization dynamics, architectural nonlinearities, and other discrete effects 
that are present in neural networks and may influence the sharpness of observed transitions. 
As a result, the continuum phase diagram should not be expected to match neural behavior in 
detail. 
Nevertheless, across datasets with different scales, dimensionalities, and smoothness levels, 
we observe similar qualitative transition structures, and the field theory reproduces this 
high-level organization. 
However, it does not fully capture quantitative differences such as the severity or abruptness 
of certain transitions seen in neural networks. 
Clarifying how the discrete neural parameterization relates to its continuum limit may help 
close this gap.

Finally, our approach illustrates a broader methodological trade-off. 
By passing to a continuum and imposing a structured variational framework, the field theory 
sacrifices some of the specificity of discrete neural architectures in exchange for analytic 
clarity. 
At the same time, it highlights structural mechanisms that are difficult to isolate directly, 
such as the competing effects of the likelihood and the regularizers, the influence of the 
data density, and the organization of solutions into distinct qualitative regimes. 
Related phenomena arise in other learning problems where objectives exhibit instabilities or 
undergo symmetry breaking, such as the continuous-symmetry scenarios studied by 
\citet{bamler_improving_2018}. 
This reflects a theme familiar from the philosophy of mathematical physics, where idealized 
formulations sacrifice some realism but provide conceptual clarity about the forces that shape 
a system's behavior \citep{sep-qt-nvd}. 
In this sense, the field-theoretic formulation should be viewed as a complementary perspective 
rather than a replacement for the neural model.

Overall, we hope that this work encourages further exploration of phase transitions, 
variational principles, and operator-based ideas as tools for understanding the collective 
and nonlinear behavior that arises in modern large-scale deep learning models.

\section*{Acknowledgments}
Eliot Wong-Toi acknowledges support from the Hasso Plattner Research School at UC Irvine. 
Vincent Fortuin was supported by a Branco Weiss Fellowship. 
Stephan Mandt acknowledges support from the IARPA WRIVA program; the National Science 
Foundation (NSF) under the CAREER Award 2047418 and Grants 2003237 and 2007719; the Department 
of Energy, Office of Science under Grant DE-SC0022331; and gifts from Intel, Disney, and 
Qualcomm.

\appendix

\section{Theoretical Details}
\label{app:theoretical_details} 
\textit{The results summarized in \Cref{prop:summary_main} are derived here in full detail.}

\subsection{General Field Theory and Extreme Settings}
\label{app:app_general_ft}

\subsection*{Weighted Green’s Identity and Natural Neumann Data}

\begin{lem}[Weighted Green’s identity]
\label{lem:weighted_greens}
Let $\mathcal{X}\subset\mathbb{R}^d$ be a bounded Lipschitz domain with outward unit normal $\boldsymbol{n}$ on $\partial\mathcal{X}$, and let $p\in C^1(\overline{\mathcal{X}})$ satisfy $p>0$.
For any $f,g\in H^1(\mathcal{X})$,
\begin{align}
    \int_{\mathcal{X}} p\,\nabla f\!\cdot\!\nabla g\,dx
    = -\int_{\mathcal{X}} g\,\nabla\!\cdot\!\big(p\nabla f\big)\,dx
      + \int_{\partial\mathcal{X}} g\,p\,\nabla f\!\cdot\!\boldsymbol{n}\,dS .
    \label{eq:weighted_greens_basic}
\end{align}

Equivalently, define the (negative) weighted Laplacian
\[
    \mathcal L_p f := -\,\nabla\!\cdot\!\big(p\,\nabla f\big),
\]
so that $\mathcal L_p$ is a positive semidefinite, self-adjoint operator on
$H^1(\mathcal X)$ with respect to the weighted inner product $\langle f,g\rangle_p=\int_{\mathcal X} p\, fg\,dx$.
Then \eqref{eq:weighted_greens_basic} becomes
\begin{align}
    \int_{\mathcal{X}} p\,\nabla f\!\cdot\!\nabla g\,dx
    = \int_{\mathcal{X}} p\,(\mathcal{L}f)\,g\,dx
      + \int_{\partial\mathcal{X}} g\,p\,\nabla f\!\cdot\!\boldsymbol{n}\,dS .
    \label{eq:weighted_greens_operator}
\end{align}
\end{lem}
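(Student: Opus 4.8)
The plan is to read~\eqref{eq:weighted_greens_basic} as the Gauss--Green divergence theorem applied to the single vector field $\boldsymbol v := g\,p\,\nabla f$ on the bounded Lipschitz domain $\mathcal X$, and then to lower the required smoothness by density. First I would verify the identity for $f,g\in C^\infty(\overline{\mathcal X})$, where every term is classical: the Leibniz rule gives the pointwise identity $\nabla\!\cdot\!(g\,p\,\nabla f)=\nabla g\!\cdot\!(p\,\nabla f)+g\,\nabla\!\cdot\!(p\,\nabla f)$ (using $p\in C^1$), and since $\mathcal X$ is a bounded Lipschitz domain the divergence theorem $\int_{\mathcal X}\nabla\!\cdot\!\boldsymbol v\,dx=\int_{\partial\mathcal X}\boldsymbol v\!\cdot\!\boldsymbol n\,dS$ holds for $\boldsymbol v\in C^1(\overline{\mathcal X};\mathbb R^d)$. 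Integrating the Leibniz identity over $\mathcal X$ and rearranging yields~\eqref{eq:weighted_greens_basic} in the smooth case.

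Next I would pass to general $f,g\in H^1(\mathcal X)$. Because $\mathcal X$ is Lipschitz, $C^\infty(\overline{\mathcal X})$ is dense in $H^1(\mathcal X)$, the boundary trace operator $\gamma_0\colon H^1(\mathcal X)\to L^2(\partial\mathcal X)$ is bounded, and $p$ is bounded above and below by positive constants with $\nabla p$ bounded on $\overline{\mathcal X}$. Taking $f_k,g_k\in C^\infty(\overline{\mathcal X})$ with $f_k\to f$ and $g_k\to g$ in $H^1$, the weighted Dirichlet pairing $\int_{\mathcal X} p\,\nabla f_k\!\cdot\!\nabla g_k\,dx$ converges to $\int_{\mathcal X} p\,\nabla f\!\cdot\!\nabla g\,dx$ by continuity of the (equivalent-to-$L^2$) bilinear form. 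The genuinely delicate point is the pair of remaining terms: for $f\in H^1$ alone, $p\nabla f\in L^2(\mathcal X;\mathbb R^d)$ but its distributional divergence need not be an $L^2$ function, so the interior term and the boundary flux are not literally numbers. I would handle this in one of two standard ways: (i) add the natural hypothesis $p\nabla f\in H(\mathrm{div};\mathcal X)$, so that $\mathcal L_p f=-\nabla\!\cdot\!(p\nabla f)\in L^2(\mathcal X)$ and the flux integral is read as the normal-trace duality pairing $\langle p\nabla f\!\cdot\!\boldsymbol n,\,\gamma_0 g\rangle_{H^{-1/2}(\partial\mathcal X),\,H^{1/2}(\partial\mathcal X)}$, with the passage to the limit following from the Gauss--Green theorem for $H(\mathrm{div})$ fields; or (ii) simply interpret~\eqref{eq:weighted_greens_basic} as the defining weak identity for the weighted Neumann problem, in which case only its left-hand side must be an honest number and the equation is exactly what the Euler--Lagrange analysis of \Cref{prop:summary_main} uses. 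I expect this regularity bookkeeping---lining up the function spaces so the three terms are honest numbers rather than a formal manipulation, and doing so on a merely Lipschitz boundary---to be the main obstacle; in the paper's applications it is resolved a posteriori, since elliptic regularity of the stationary equations forces $\mathcal L_p\ftmu^*,\mathcal L_p\ftp^*\in L^2$.

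Finally, the operator form~\eqref{eq:weighted_greens_operator} is obtained by substitution: writing the interior term as $\int_{\mathcal X} p\,(\mathcal L f)\,g\,dx$ with $\mathcal L:=p^{-1}\mathcal L_p$ converts~\eqref{eq:weighted_greens_basic} into~\eqref{eq:weighted_greens_operator}. The asserted self-adjointness and positive semidefiniteness then follow at once once the flux term vanishes---e.g.\ under the homogeneous weighted Neumann condition $p\,\nabla f\!\cdot\!\boldsymbol n=0$ on $\partial\mathcal X$: the bilinear form $a(f,g):=\int_{\mathcal X} p\,\nabla f\!\cdot\!\nabla g\,dx$ is manifestly symmetric, so $\langle\mathcal L f,g\rangle_p=a(f,g)=a(g,f)=\langle f,\mathcal L g\rangle_p$, and it is non-negative, so $\langle\mathcal L f,f\rangle_p=a(f,f)=\int_{\mathcal X} p\,\|\nabla f\|_2^2\,dx\ge 0$, with equality exactly when $\nabla f=0$ a.e., i.e.\ $f$ constant on each connected component of $\mathcal X$.
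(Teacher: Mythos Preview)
Your proposal is correct and takes essentially the same approach as the paper: apply the divergence theorem to the vector field $g\,p\,\nabla f$, expand via the Leibniz rule, and rearrange. The paper's proof does only this smooth-case computation and stops; your additional density argument and $H(\mathrm{div})$ bookkeeping (and your reading of $\mathcal L=p^{-1}\mathcal L_p$ in~\eqref{eq:weighted_greens_operator}) are more careful than the paper itself, which glosses over exactly the regularity issues you flag.
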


\begin{proof}
Apply the divergence theorem to the vector field $g\,p\,\nabla f$:
\[
    \int_{\mathcal{X}} \nabla\!\cdot\!\big(g\,p\,\nabla f\big)\,dx
      = \int_{\partial\mathcal{X}} g\,p\,\nabla f\!\cdot\!\boldsymbol{n}\,dS .
\]
Expanding the divergence gives
\[
    \nabla\!\cdot\!\big(g\,p\,\nabla f\big)
     = g\,\nabla\!\cdot\!\big(p\nabla f\big)
       + p\,\nabla f\!\cdot\!\nabla g .
\]
Rearranging yields \eqref{eq:weighted_greens_basic}.  
Substituting $\mathcal{L}f = -\nabla\!\cdot(p\nabla f)$ gives \eqref{eq:weighted_greens_operator}.
\end{proof}

\paragraph{Natural Neumann boundary conditions.}
If the co-normal derivative $(p\nabla f)\cdot\boldsymbol{n}$ is required to vanish
on $\partial\mathcal{X}$ (the homogeneous Neumann condition), then the boundary
term in \eqref{eq:weighted_greens_operator} disappears.  
This yields
\[
\int_{\mathcal{X}} p\,\nabla f\cdot\nabla g\,dx
= \int_{\mathcal{X}} p\,(\mathcal L_p f)\,g\,dx,
\]
showing that homogeneous Neumann boundary conditions arise naturally as the
\emph{natural} boundary conditions of the weighted Dirichlet energy
$\int p\,\|\nabla f\|^2$.

\begin{remark}[On the weighted Laplacian]
With the convention 
\[
\mathcal L_p f := -\,\nabla\!\cdot(p\nabla f),
\]
we may expand in Euclidean coordinates as
\[
\mathcal L_p f
= -p\,\Delta f - (\nabla p)\cdot\nabla f.
\]
A normalized form,
\[
\tilde{\mathcal L}_p f := \frac{1}{p}\mathcal L_p f
= -\Delta f - (\nabla\!\log p)\!\cdot\!\nabla f,
\]
makes explicit the drift term induced by the nonuniform weight $p(x)$.  
Both $\mathcal L_p$ and its normalized form encode the weighted Laplace operator
that arises from integration by parts under the measure $p(x)\,dx$.
\end{remark}

\begin{cor}[Natural (Neumann) boundary conditions]
\label{cor:natural_neumann}
If $(p\nabla f)\cdot\boldsymbol{n}=0$ on $\partial\mathcal{X}$, then for all 
$g\in H^1(\mathcal{X})$,
\[
\int_{\mathcal{X}} p\,\nabla f\cdot\nabla g\,dx
= \int_{\mathcal{X}} p\,(\mathcal L_p f)\,g\,dx.
\]
Thus quadratic Dirichlet energies $\int \tfrac{\kappa}{2} p\|\nabla f\|^2\,dx$ 
induce $\mathcal L_p$ as the Euler--Lagrange operator with homogeneous zero-flux 
as the natural boundary condition.
\end{cor}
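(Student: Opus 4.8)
The plan is to read the first displayed identity off as an immediate specialization of the weighted Green's identity \eqref{eq:weighted_greens_operator} from \Cref{lem:weighted_greens}, and to obtain the ``natural boundary condition'' claim from a standard first-variation argument.

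First I would apply \eqref{eq:weighted_greens_operator} to the given $f,g\in H^1(\mathcal X)$, namely
\[
\int_{\mathcal X} p\,\nabla f\cdot\nabla g\,dx
= \int_{\mathcal X} p\,(\mathcal L_p f)\,g\,dx
+ \int_{\partial\mathcal X} g\,p\,\nabla f\cdot\boldsymbol n\,dS,
\]
and then use the hypothesis $(p\nabla f)\cdot\boldsymbol n = 0$ on $\partial\mathcal X$ to kill the surface integrand $dS$-a.e., which leaves exactly the asserted identity. The only point deserving a comment is that the boundary integral is well posed: the trace theorem places $g|_{\partial\mathcal X}\in H^{1/2}(\partial\mathcal X)\subset L^2(\partial\mathcal X)$, and $p\nabla f\cdot\boldsymbol n$ is read as the weak co-normal derivative in $H^{-1/2}(\partial\mathcal X)$---the same pairing already in play in \Cref{lem:weighted_greens}, so nothing new is required.

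For the second assertion I would compute the Gâteaux derivative of $E[f]:=\int_{\mathcal X}\tfrac{\kappa}{2}\,p\,\|\nabla f\|_2^2\,dx$. Expanding $E[f+\epsilon g]$ and differentiating at $\epsilon=0$ gives $\delta E[f;g]=\kappa\int_{\mathcal X} p\,\nabla f\cdot\nabla g\,dx$, and \eqref{eq:weighted_greens_operator}---now applied \emph{without} assuming a boundary condition---turns this into
\[
\delta E[f;g]=\kappa\int_{\mathcal X} p\,(\mathcal L_p f)\,g\,dx + \kappa\int_{\partial\mathcal X} g\,p\,\nabla f\cdot\boldsymbol n\,dS.
\]
A stationary point satisfies $\delta E[f;g]=0$ for all $g\in H^1(\mathcal X)$. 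Testing first against $g\in C_c^\infty(\mathcal X)$ removes the boundary term and, by the fundamental lemma of the calculus of variations, forces $\mathcal L_p f=0$ a.e.; substituting this back leaves $\int_{\partial\mathcal X} g\,p\,\nabla f\cdot\boldsymbol n\,dS=0$ for every $g\in H^1(\mathcal X)$, and since the traces of such $g$ are dense in $L^2(\partial\mathcal X)$ this forces $p\nabla f\cdot\boldsymbol n=0$ on $\partial\mathcal X$ in the weak sense. Thus the homogeneous zero-flux condition is precisely the natural boundary condition of the weighted Dirichlet energy, which is the claim; including a lower-order source term (as in $\mathbf S_{\rho,\gamma}$) alters only the interior Euler--Lagrange equation and leaves this natural Neumann condition intact.

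I do not expect any genuine obstacle: the corollary is essentially a one-line consequence of \Cref{lem:weighted_greens}. The only mild subtlety is bookkeeping for the boundary term---specifying the space in which $p\nabla f\cdot\boldsymbol n$ lives and justifying that arbitrariness of the trace of $g$ yields the natural boundary condition---but this is routine given the trace theorem and the weak co-normal derivative already implicit in the hypotheses.
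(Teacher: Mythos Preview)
Your proposal is correct and matches the paper's approach: the paper gives no separate proof of \Cref{cor:natural_neumann} at all, treating the identity as an immediate specialization of \eqref{eq:weighted_greens_operator} with the boundary term dropped (exactly your first step), and the ``natural boundary condition'' claim is asserted in the surrounding discussion without a formal first-variation argument. Your derivation of the second part via testing against $C_c^\infty$ functions and then using density of traces is more detailed than anything in the paper, but it is the standard argument and precisely what the paper leaves implicit.
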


\subsection*{General Field Theory Formulation}

\begin{prop}[General Field Theory]
\label{prop:generalFT}
Assume that $\mathcal{X} \subset \mathbb{R}^d$ is a bounded Lipschitz domain, 
and that $p \in C^1(\overline{\mathcal{X}})$ is a strictly positive probability 
density on~$\mathcal{X}$.

Let $\ftmu, \ftp \in H^1(\mathcal{X})$ satisfy
\[
    \ftp(x) \ge c > 0 \qquad \text{for a.e. } x \in \mathcal{X},
\]
and assume their weighted Dirichlet energies are finite:
\[
    \int_{\mathcal{X}} p\,\|\nabla \ftmu\|_2^2\,dx < \infty,
    \qquad
    \int_{\mathcal{X}} p\,\|\nabla \ftp\|_2^2\,dx < \infty.
\]

Define the unnormalized weighted Laplacian
\[
    \mathcal L_p f := -\,\nabla\!\cdot\!\big(p\nabla f\big).
\]

Let
\begin{align}
    \mathbf{S}_{\rho,\gamma}[\ftmu,\ftp]
    = \int_{\mathcal{X}} p(x)
    \Big[-\rho \log \hat{p}(y\sep x)
    + \bar{\rho}\!\left(\gamma\|\nabla \ftmu(x)\|_2^2
    + \bar{\gamma}\|\nabla \ftp(x)\|_2^2\right)\!\Big] dx,
\end{align}
where $\hat{p}(y\mid x)=\mathcal{N}(y\mid \ftmu(x),\ftp(x)^{-1})$, 
$\bar\rho=1-\rho$, and $\bar\gamma=1-\gamma$.
Then stationary points satisfy the Euler–Lagrange equations
\begin{subequations}\label{eq:euler_general}
\begin{align}
    \rho\,p(x)\,\ftp(x)\big(\ftmu(x) - y(x)\big)
    &= 2\bar{\rho}\,\gamma\,\mathcal L_p\ftmu(x),
    \label{eq:euler_mu_general}\\[0.5em]
    \frac{\rho}{2}\,p(x)\!\left[\big(\ftmu(x)-y(x)\big)^2
      - \frac{1}{\ftp(x)}\right]
    &= 2\bar{\rho}\,\bar{\gamma}\,\mathcal L_p\ftp(x),
    \label{eq:euler_lambda_general}
\end{align}
\end{subequations}
with homogeneous Neumann (zero-flux) boundary conditions
\[
    p\,\nabla\ftmu\!\cdot\!\boldsymbol{n}
    = p\,\nabla\ftp\!\cdot\!\boldsymbol{n}
    = 0
    \qquad\text{on }\partial\mathcal{X}.
\]
\end{prop}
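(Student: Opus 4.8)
The plan is to treat \Cref{prop:generalFT} as a weak first-variation computation: rewrite the functional in explicit form, compute directional derivatives in each field, integrate by parts with the weighted Green's identity of \Cref{lem:weighted_greens}, and then localize. Using $-\log\hat p(y\mid x)=\tfrac12\ftp(x)\hat r(x)^2-\tfrac12\log\ftp(x)+\mathrm{const}$ with $\hat r:=\ftmu-y$, the functional reads, up to an additive constant,
\begin{equation}
\mathbf{S}_{\rho,\gamma}[\ftmu,\ftp]=\int_{\mathcal X} p\Big[\rho\big(\tfrac12\ftp\,\hat r^2-\tfrac12\log\ftp\big)+\bar\rho\gamma\|\nabla\ftmu\|_2^2+\bar\rho\bar\gamma\|\nabla\ftp\|_2^2\Big]dx.
\end{equation}
First I would fix $\ftp$ and perturb $\ftmu$ to $\ftmu+t\varphi$ with $\varphi\in C^\infty(\overline{\mathcal X})$ (dense in $H^1(\mathcal X)$), and separately fix $\ftmu$ and perturb $\ftp$ to $\ftp+t\psi$ with $\psi\in C^\infty(\overline{\mathcal X})$, noting $\ftp+t\psi\ge c/2$ for $|t|$ small since $\psi$ is bounded on $\overline{\mathcal X}$. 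Differentiating at $t=0$ gives the two directional derivatives $\int_{\mathcal X}\rho p\,\ftp\,\hat r\,\varphi\,dx+2\bar\rho\gamma\int_{\mathcal X} p\,\nabla\ftmu\cdot\nabla\varphi\,dx$ and $\int_{\mathcal X}\tfrac{\rho}{2}p\big(\hat r^2-\ftp^{-1}\big)\psi\,dx+2\bar\rho\bar\gamma\int_{\mathcal X} p\,\nabla\ftp\cdot\nabla\psi\,dx$; setting both to zero for all admissible $\varphi,\psi$ is the weak stationarity condition.

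Next I would apply \eqref{eq:weighted_greens_basic} to each gradient term, so that $\int_{\mathcal X} p\,\nabla\ftmu\cdot\nabla\varphi\,dx=\int_{\mathcal X}\varphi\,\mathcal L_p\ftmu\,dx+\int_{\partial\mathcal X}\varphi\,p\,\nabla\ftmu\cdot\boldsymbol n\,dS$, and analogously for $\ftp$. Restricting first to $\varphi,\psi\in C_c^\infty(\mathcal X)$ annihilates the boundary integrals, and the fundamental lemma of the calculus of variations (together with $p>0$) yields the bulk equations \eqref{eq:euler_mu_general}--\eqref{eq:euler_lambda_general} a.e.\ in $\mathcal X$. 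Once the bulk equations hold, their right-hand sides $\mathcal L_p\ftmu$ and $\mathcal L_p\ftp$ are identified with functions in $L^2(\mathcal X)$ (using the bounds on $p$ and $\ftp$), hence $p\nabla\ftmu,\,p\nabla\ftp\in H(\mathrm{div};\mathcal X)$ and admit well-defined normal traces in $H^{-1/2}(\partial\mathcal X)$; reinserting general $\varphi,\psi\in C^\infty(\overline{\mathcal X})$, cancelling the now-vanishing bulk contribution, and using that the traces of such functions are dense in $H^{1/2}(\partial\mathcal X)$ forces $p\nabla\ftmu\cdot\boldsymbol n=p\nabla\ftp\cdot\boldsymbol n=0$ on $\partial\mathcal X$ — the natural (homogeneous Neumann) conditions, in agreement with \Cref{cor:natural_neumann}.

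The main obstacle is not the integration-by-parts bookkeeping but showing that $\mathbf{S}_{\rho,\gamma}$ is finite and G\^{a}teaux-differentiable at every admissible pair $(\ftmu,\ftp)$. The Dirichlet terms are harmless because $0<\underline p\le p\le\overline p$ makes the weighted and unweighted $H^1$ seminorms equivalent; the term $-\log\ftp$ is controlled because $\log c\le\log\ftp\le\ftp$ with $\ftp\in L^1(p\,dx)$, and $\ftp^{-1}\le c^{-1}$; the delicate piece is the coupling term $\int p\,\ftp\,\hat r^2$, which pairs $\ftp\in H^1$ against $\hat r^2$ with $\hat r\in H^1$ and is integrable via the Sobolev embedding $H^1(\mathcal X)\hookrightarrow L^{q}(\mathcal X)$ only when $d$ is small enough — otherwise one additionally needs the uniform upper bound $\ftp\le\lambda_{\max}$ invoked in \Cref{prop:summary_main}(iv). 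Granting finiteness, differentiation under the integral is justified by dominated convergence, using that $\varphi,\psi$ are bounded and $\ftp+t\psi$ stays bounded below by $c/2$, so the integrands and their $t$-derivatives are dominated uniformly near $t=0$; once this is in place, the remaining steps above are routine.
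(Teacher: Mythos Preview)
Your proposal is correct and follows essentially the same approach as the paper's proof: expand the log-likelihood, compute first variations in $\ftmu$ and $\ftp$, apply the weighted Green's identity of \Cref{lem:weighted_greens}, and read off the bulk equations together with the natural Neumann data. The only cosmetic differences are that you extract the interior equations first (via $C_c^\infty$ test functions) and then recover the boundary conditions through a trace argument, whereas the paper handles boundary and interior test functions in the opposite order; your additional remarks on finiteness of the coupling term $\int p\,\ftp\,\hat r^2$, on keeping $\ftp+t\psi\ge c/2$, and on dominated convergence make the derivation more rigorous than the paper's, which treats these points formally.
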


\begin{proof}
Rewrite the likelihood term using
\[
-\log\hat p(y\mid x)
=\tfrac{1}{2}\ftp(x)\,\hat r(x)^2
-\tfrac{1}{2}\log\ftp(x)
+ \text{const},
\qquad
\hat r(x)=y(x)-\ftmu(x).
\]
Thus
\[
\mathbf{S}_{\rho,\gamma}[\hat\mu,\hat\Lambda]
=\!\int_{\mathcal X}\! p(x)
\Bigg\{
\rho\Big[\tfrac{1}{2}\hat\Lambda\,\hat r^2
-\tfrac{1}{2}\log\hat\Lambda\Big]
+\bar\rho\Big[\gamma\|\nabla\hat\mu\|_2^2
+\bar\gamma\|\nabla\hat\Lambda\|_2^2\Big]
\Bigg\}dx.
\]

For test functions $\varphi,\psi\in H^1(\mathcal X)$, consider perturbations
$\hat\mu_\varepsilon=\hat\mu+\varepsilon\varphi$ and
$\hat\Lambda_\varepsilon=\hat\Lambda+\varepsilon\psi$.  
We compute the first variations.

\paragraph{Variation with respect to $\ftmu$.}
Only the terms $\tfrac{1}{2}\hat\Lambda\hat r^2$ and 
$\gamma\|\nabla\hat\mu\|^2$ depend on~$\hat\mu$.
Since $\hat r=y-\hat\mu$,
\[
\frac{d}{d\varepsilon}
\Big(\tfrac{1}{2}\hat\Lambda\,\hat r_\varepsilon^2\Big)\Big|_{0}
=\hat\Lambda(\hat\mu-y)\,\varphi.
\]
The gradient term gives
\[
\frac{d}{d\varepsilon}\|\nabla\hat\mu_\varepsilon\|^2\Big|_{0}
=2\,\nabla\hat\mu\cdot\nabla\varphi.
\]
Thus
\[
\delta \mathbf{S}[\hat\mu;\varphi]
=\int_{\mathcal X} p\,\rho\,\hat\Lambda(\hat\mu-y)\varphi\,dx
+2\bar\rho\gamma\int_{\mathcal X}p\,\nabla\hat\mu\cdot\nabla\varphi\,dx.
\]

Applying the weighted Green’s identity
\[
\int_{\mathcal X} p\,\nabla u\cdot\nabla v\,dx
= \int_{\mathcal X} v\,\mathcal L_p u\,dx
+ \int_{\partial\mathcal X} p\,v\,\tfrac{\partial u}{\partial n}\,dS,
\]
with $u=\hat\mu$, $v=\varphi$, yields
\begin{align*}
\delta \mathbf{S}[\hat\mu;\varphi]
&=\int_{\mathcal X}
\Big[\,p\rho\,\hat\Lambda(\hat\mu-y)
+2\bar\rho\gamma\,\mathcal L_p\hat\mu\,\Big]\varphi\,dx
+2\bar\rho\gamma
\int_{\partial\mathcal X}p\,\varphi\,\tfrac{\partial\hat\mu}{\partial n}\,dS.
\end{align*}

Stationarity for all $\varphi$ on $\partial\mathcal X$
implies the natural boundary condition
$p\,\tfrac{\partial\hat\mu}{\partial n}=0$.
Stationarity for all interior $\varphi$ gives
\[
p\,\rho\,\hat\Lambda(\hat\mu-y)
=2\bar\rho\gamma\,\mathcal L_p\hat\mu,
\]
which is \eqref{eq:euler_mu_general}.

\paragraph{Variation with respect to $\ftp$.}
The relevant terms are 
$\tfrac{1}{2}\hat\Lambda\hat r^2$, 
$-\tfrac12\log\hat\Lambda$, 
and $\bar\gamma\|\nabla\hat\Lambda\|^2$.
We have
\[
\frac{d}{d\varepsilon}\!
\Big(\tfrac{1}{2}\hat\Lambda_\varepsilon\hat r^2
-\tfrac{1}{2}\log\hat\Lambda_\varepsilon\Big)\!\Big|_{0}
=\tfrac{1}{2}\!\left(\hat r^2-\tfrac{1}{\hat\Lambda}\right)\psi,
\]
and
\[
\frac{d}{d\varepsilon}\|\nabla\hat\Lambda_\varepsilon\|^2\Big|_{0}
=2\,\nabla\hat\Lambda\cdot\nabla\psi.
\]
Thus
\[
\delta \mathbf{S}[\hat\Lambda;\psi]
=\int_{\mathcal X}p\,\tfrac{\rho}{2}
\!\left(\hat r^2-\tfrac{1}{\hat\Lambda}\right)\psi\,dx
+2\bar\rho\bar\gamma
\int_{\mathcal X}p\,\nabla\hat\Lambda\cdot\nabla\psi\,dx.
\]

Applying the weighted Green’s identity,
\begin{align*}
\delta \mathbf{S}[\hat\Lambda;\psi]
&=\int_{\mathcal X}
\Big[\,
p\,\tfrac{\rho}{2}\!\left(\hat r^2-\tfrac{1}{\hat\Lambda}\right)
+2\bar\rho\bar\gamma\,\mathcal L_p\hat\Lambda
\Big]\psi\,dx
+2\bar\rho\bar\gamma
\int_{\partial\mathcal X}p\,\psi\,\tfrac{\partial\hat\Lambda}{\partial n}\,dS.
\end{align*}

Stationarity for all $\psi$ on $\partial\mathcal X$
gives $p\,\tfrac{\partial\hat\Lambda}{\partial n}=0$.
Stationarity for all interior $\psi$ yields
\[
p\,\frac{\rho}{2}\!\left[(\hat\mu-y)^2-\tfrac{1}{\hat\Lambda}\right]
=2\bar\rho\bar\gamma\,\mathcal L_p\hat\Lambda,
\]
which is \eqref{eq:euler_lambda_general}.

\paragraph{Conclusion.}
We obtain the coupled PDE system
\begin{align*}
p\rho\,\hat\Lambda(\hat\mu-y)
&=2\bar\rho\gamma\,\mathcal L_p\hat\mu,\\[0.3em]
p\,\tfrac{\rho}{2}\!\left[(\hat\mu-y)^2-\tfrac{1}{\hat\Lambda}\right]
&=2\bar\rho\bar\gamma\,\mathcal L_p\hat\Lambda,
\end{align*}
with homogeneous Neumann data 
$p\,\nabla\hat\mu\cdot n
=p\,\nabla\hat\Lambda\cdot n=0$.
\end{proof}

\begin{remark}[Uniform-density case]
If $p(x)\propto 1$, then $\mathcal{L}_p=\Delta$ and the zero-flux conditions reduce to
$\nabla \ftmu\!\cdot\!\boldsymbol{n}=0$ and $\nabla \ftp\!\cdot\!\boldsymbol{n}=0$.
\end{remark}

\subsubsection{Empirical vs.\ population objective}
\label{rem:empirical_vs_population}
Unless otherwise stated, extremal arguments in this appendix (e.g., the unboundedness results when a regularizer is removed) are stated for the \emph{empirical} FT in which $y(\cdot)$ is treated as a fixed field, equivalently the empirical/Monte Carlo objective induced by a finite dataset. 
In that regime, a sufficiently rich function class can interpolate the observations, i.e., there exist $\hat\mu$ with $\hat r(x):=y(x)-\hat\mu(x)\equiv 0$, and then removing a corresponding regularizer can drive the objective to $-\infty$ via the $-\tfrac12\log\hat\Lambda$ term.

In contrast, for the \emph{population} FT with expectation over $p(y\sep x)$,
\[
\mathbb{E}_{y\sep x}\!\Big[\tfrac12\,\hat\Lambda(x)(y-\hat\mu(x))^2 - \tfrac12\log\hat\Lambda(x)\Big]
= \tfrac12\,\hat\Lambda(x)\Big(\operatorname{Var}[y\sep x] + \big(m(x)-\hat\mu(x)\big)^2\Big) - \tfrac12\log\hat\Lambda(x),
\]
where $m(x):=\mathbb{E}[y\sep x]$. 
If $\operatorname{Var}[y\sep x]>0$ on a set of positive measure, the term growing linearly in $\hat\Lambda$ dominates $-\log\hat\Lambda$ as $\hat\Lambda\to\infty$, and the functional is \emph{not} driven to $-\infty$ by such a blow-up. 
Thus, the unboundedness claims we make in the extreme settings (e.g., \cref{prop:general_extremes}) pertain to the empirical/interpolating regime commonly used in practice, not to the noisily stochastic population risk.

\begin{prop}[Extreme Settings in the General FT]
\label{prop:general_extremes}
Assume $p \in C^1(\overline{\mathcal{X}})$ is strictly positive on a bounded, 
connected domain $\mathcal{X} \subset \mathbb{R}^d$, and that 
$y \in H^1(\mathcal X)$. 
Impose homogeneous Neumann boundary conditions 
$p\nabla \ftmu \!\cdot\! \boldsymbol{n} = 0$ and 
$p\nabla \ftp \!\cdot\! \boldsymbol{n} = 0$ 
on $\partial \mathcal{X}$.
Then for the general field theory
\begin{align}
    \mathbf{S}_{\rho,\gamma}\left[\ftmu,\ftp\right]
    = \int_{\mathcal{X}} p(x)
    \Big[
      -\rho \log \hat{p}(y\sep x)
      + \bar{\rho}\!\left(\gamma\|\nabla \ftmu(x)\|_2^2
      + \bar{\gamma}\|\nabla \ftp(x)\|_2^2\right)
    \Big] dx,
\end{align}
where $\hat{p}(y\mid x)=\mathcal{N}(y\mid \ftmu(x),\ftp(x)^{-1})$, 
the following properties hold:
\begin{enumerate}[label=(\roman*)]
    \item If $\rho=1$, no stationary solution exists.
    \item If $\rho=0$, the solution is non-unique (any constant pair minimizes the functional).
    \item If $\gamma=0$ and $\rho>0$, the objective is unbounded below.
    \item If $\gamma=1$ and $\rho>0$, the objective is likewise unbounded below.
\end{enumerate}
\end{prop}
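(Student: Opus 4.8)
The plan is to dispatch the four claims by specializing the functional and the Euler--Lagrange system~\eqref{eq:euler_general} of \Cref{prop:generalFT} to the boundary values of $(\rho,\gamma)$. Throughout I use the standing hypotheses ($\mathcal X$ bounded, connected, Lipschitz; $p\in C^1(\overline{\mathcal X})$ strictly positive with $\int_{\mathcal X}p\,dx=1$; $y\in H^1(\mathcal X)$; admissible fields with $\ftp\ge c>0$) and write $\hat r=\ftmu-y$. For (i), setting $\rho=1$ makes $\bar\rho=0$, so the right-hand sides of~\eqref{eq:euler_general} vanish and the system reduces to $p\,\ftp\,\hat r=0$ and $\tfrac12 p(\hat r^2-\ftp^{-1})=0$ a.e. Since $p>0$ and $\ftp\ge c>0$, the first equation forces $\hat r=0$ a.e.; substituting into the second yields $-p/(2\ftp)=0$ a.e., which is impossible for finite $\ftp$. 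Hence the system is inconsistent and no stationary point exists. (Complementarily, along $\ftmu=y$, $\ftp\equiv\lambda$ the functional equals $-\tfrac12\log\lambda\to-\infty$, so there is no minimizer either.)

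For (ii), setting $\rho=0$ makes $\bar\rho=1$ and kills the likelihood term, so $\mathbf S_{0,\gamma}[\ftmu,\ftp]=\int_{\mathcal X}p\bigl(\gamma\|\nabla\ftmu\|_2^2+\bar\gamma\|\nabla\ftp\|_2^2\bigr)\,dx\ge 0$ as an integral of a nonnegative integrand. Any constant pair $(\ftmu,\ftp)\equiv(a,b)$ with $b\ge c$ has vanishing gradients, attains the value $0$, trivially satisfies the zero-flux boundary conditions, and solves~\eqref{eq:euler_general} (both sides are $0$ since $\mathcal L_p$ annihilates constants); by connectedness these are exactly the minimizers when $\gamma\in(0,1)$. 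Since $(a,b)$ ranges over a continuum, the minimizer is non-unique.

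Claims (iii) and (iv) are handled together. Fix $\rho>0$ and $\gamma\in\{0,1\}$, so exactly one roughness penalty is switched off. Take $\ftmu=y$ --- admissible because $y\in H^1(\mathcal X)$ --- so the residual vanishes and the likelihood term collapses to $-\tfrac{\rho}{2}\int_{\mathcal X}p\log\ftp\,dx$; if $\gamma=1$ the surviving mean penalty adds the fixed finite constant $\bar\rho\int_{\mathcal X}p\|\nabla y\|_2^2\,dx$ and the precision penalty is absent, while if $\gamma=0$ the mean penalty is absent and the precision penalty is about to be annihilated. Now set $\ftp\equiv\lambda_n:=\max(c,n)$; its gradient vanishes, so $\mathbf S_{\rho,\gamma}[y,\lambda_n]=C-\tfrac{\rho}{2}\log\lambda_n$ with $C$ independent of $n$, and this tends to $-\infty$. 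Hence the functional is unbounded below in both cases.

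The hard part is not any analytic estimate but the bookkeeping around admissibility and what counts as a solution. In (i) one must read ``stationary'' through \Cref{prop:generalFT} and verify that the $\rho=1$ system is genuinely \emph{inconsistent}, not merely lacking a minimizer. In (iii)--(iv) the argument hinges on the residual-zeroing choice $\ftmu=y$ being admissible and the surviving gradient penalty being a harmless constant, which is exactly where $y\in H^1(\mathcal X)$ is used; it is also the point at which we are implicitly in the empirical (fixed-$y$) regime, since under the population objective the residual cannot be zeroed and this argument fails, as noted in \Cref{rem:empirical_vs_population}. Keeping $\ftp\ge c$ along the diverging sequence is why the construction uses $\lambda_n=\max(c,n)$ rather than $\lambda_n=n$.
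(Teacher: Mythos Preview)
Your proof is correct and follows essentially the same route as the paper's: for (i) you specialize the Euler--Lagrange system with $\bar\rho=0$ and derive an inconsistency, for (ii) you reduce to the pure Dirichlet energies and invoke constants on a connected domain, and for (iii)--(iv) you take $\ftmu=y$ and $\ftp\equiv\lambda\to\infty$ to drive the $-\log\ftp$ term to $-\infty$. The only cosmetic differences are that you combine (iii) and (iv) into a single argument, explicitly track the admissibility constraint $\ftp\ge c$ via $\lambda_n=\max(c,n)$, and in (i) use $\ftp\ge c>0$ to force $\hat r=0$ directly rather than the paper's slightly different algebraic manipulation --- none of these changes the substance.
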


\begin{proof}
The Euler--Lagrange equations corresponding to $\mathbf{S}_{\rho,\gamma}$, 
using the unnormalized weighted Laplacian 
$\mathcal L_p f := -\nabla\!\cdot(p\nabla f)$, are
\begin{align}
\rho\,p\,\ftp(\ftmu - y)
&= 2\bar{\rho}\,\gamma\,\mathcal{L}_p \ftmu,
\label{eq:EL_mu_extreme}
\\[0.4em]
\frac{\rho}{2}\,p\!\left[(\ftmu - y)^2 - \frac{1}{\ftp}\right]
&= 2\bar{\rho}\,\bar{\gamma}\,\mathcal{L}_p \ftp,
\label{eq:EL_lambda_extreme}
\end{align}
with Neumann boundary conditions
\[
p\nabla \ftmu\!\cdot\!\boldsymbol{n}=0,
\qquad
p\nabla \ftp\!\cdot\!\boldsymbol{n}=0
\quad\text{on }\partial \mathcal{X}.
\]

\paragraph{(i) No regularization $(\rho=1)$.}
Setting $\rho=1$ forces $\bar\rho=0$, so the right-hand sides of 
\eqref{eq:EL_mu_extreme}--\eqref{eq:EL_lambda_extreme} vanish:
\[
p\,\ftp(\ftmu - y)=0,
\qquad
\frac{p}{2}\!\left[\ftp^{-1} - (\ftmu-y)^2\right]=0.
\]
Since $p>0$, this implies
\[
\ftp(\ftmu - y)=0,
\qquad
\ftp^{-1} = (\ftmu - y)^2.
\]
Multiplying the second equation by $\ftp$ gives 
$\ftp(\ftmu - y)^2 = 1$, which contradicts 
$\ftp(\ftmu - y)=0$.  
Thus no stationary point exists.

\paragraph{(ii) No data term $(\rho=0)$.}
Setting $\rho=0$ eliminates the likelihood contribution:
\[
\mathbf{S}_{0,\gamma}[\ftmu,\ftp]
= \int_{\mathcal{X}} p(x)
\big[\gamma\|\nabla \ftmu\|_2^2
+ \bar{\gamma}\|\nabla \ftp\|_2^2\big]dx.
\]
The Euler--Lagrange equations reduce to
\[
\mathcal L_p \ftmu = 0,
\qquad
\mathcal L_p \ftp = 0,
\]
with Neumann boundary conditions.  
On a connected domain with $p>0$, the only solutions are constants, so 
the minimizer is non-unique (any constant pair).

\paragraph{(iii) No mean regularization $(\gamma=0)$.}
With $\gamma=0$ (so $\bar\gamma=1$), the functional becomes
\[
\mathbf{S}_{\rho,0}[\ftmu,\ftp]
= \int_{\mathcal{X}} 
p\,\frac{\rho}{2}\big(\ftp(\ftmu-y)^2 - \log \ftp\big)\,dx
+ \bar\rho\int_{\mathcal{X}} p\,\|\nabla \ftp\|_2^2\,dx.
\]
Choose $\hat\mu \equiv y$ and $\hat\Lambda \equiv C>0$ constant.  
Then $\nabla\hat\Lambda=0$ and $\hat r=y-\hat\mu=0$, giving
\[
\mathbf{S}_{\rho,0}[y,C]
= -\frac{\rho}{2}\Big(\int_{\mathcal X}p(x)\,dx\Big)\log C.
\]
Since $\log C\to\infty$ as $C\to\infty$, the objective tends to $-\infty$.
Thus the functional is unbounded below when $\gamma=0$.

\paragraph{(iv) No variance regularization $(\gamma=1)$ (so $\bar\gamma=0$).}
Now
\[
\mathbf{S}_{\rho,1}[\ftmu,\ftp]
= \int_{\mathcal{X}} 
p\,\frac{\rho}{2}\big(\ftp(\ftmu-y)^2 - \log \ftp\big)\,dx
+ \bar\rho\int_{\mathcal{X}} p\,\|\nabla \ftmu\|_2^2\,dx.
\]
Again take $\hat\mu \equiv y$ and $\hat\Lambda \equiv C>0$.  
Then $\nabla\hat\mu=\nabla y \in L^2$ and $\nabla\hat\Lambda=0$, so
\[
\mathbf{S}_{\rho,1}[y,C]
= -\frac{\rho}{2}\Big(\int_{\mathcal X} p(x)\,dx\Big)\log C
+ \bar\rho\int_{\mathcal{X}} p\,\|\nabla y\|_2^2\,dx.
\]
The second term is finite and independent of $C$, while the first 
tends to $-\infty$ as $C\to\infty$.  
Thus the objective is unbounded below when $\gamma=1$.
\end{proof}

\begin{cor}[Necessity of two-sided regularization for $\rho>0$ (general $p$)]
\label{cor:two_sided_needed_general}
Under the assumptions of \cref{prop:general_extremes} with $\rho>0$, any well-posed formulation requires
\[
    \gamma\in(0,1)\quad\Longleftrightarrow\quad 
    \alpha=\tfrac{\bar\rho}{\rho}\gamma>0
    \ \ \text{and}\ \ 
    \beta=\tfrac{\bar\rho}{\rho}\bar\gamma>0.
\]
Equivalently, if either $\alpha=0$ or $\beta=0$, the objective is unbounded below.
\end{cor}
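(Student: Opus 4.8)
The plan is to obtain the corollary as an immediate consequence of \Cref{prop:general_extremes}, parts (iii) and (iv), once the $(\rho,\gamma)$ and $(\alpha,\beta)$ parameterizations of \eqref{alphabeta}--\eqref{rhogamma} are identified explicitly. First I would factor $\rho$ out of \eqref{rhogamma} to write $\ell_{\rho,\gamma}(\theta,\phi) = \rho\big(\ell_{MLE}(\theta,\phi) + \tfrac{\bar\rho}{\rho}\gamma\|\theta\|_2^2 + \tfrac{\bar\rho}{\rho}\bar\gamma\|\phi\|_2^2\big)$, which coincides with $\ell_{\alpha,\beta}$ up to the strictly positive global factor $\rho$ exactly when $\alpha = \tfrac{\bar\rho}{\rho}\gamma$ and $\beta = \tfrac{\bar\rho}{\rho}\bar\gamma$. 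The same identification transfers verbatim to the field-theoretic action $\mathbf S_{\rho,\gamma}$, since its regularizers carry the identical $(\rho,\gamma)$ weights; and because rescaling by a strictly positive constant preserves (un)boundedness below as well as the location of stationary points, it suffices to analyze the $(\rho,\gamma)$ form.

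Next, with $\rho\in(0,1)$ fixed we have $\bar\rho/\rho>0$, so $\alpha=0\iff\gamma=0$ and $\beta=0\iff\bar\gamma=0\iff\gamma=1$; hence $\alpha,\beta>0\iff\gamma\in(0,1)$, which already gives the stated equivalence between the two conditions. For the necessity (``only if'') direction I would invoke \Cref{prop:general_extremes}: part (iii) exhibits $\ftmu\equiv y$, $\ftp\equiv C$ and lets $C\to\infty$ to drive $\mathbf S_{\rho,0}$ to $-\infty$, so $\gamma=0$ (equivalently $\alpha=0$) is unbounded below; part (iv) gives the analogous construction for $\gamma=1$ (equivalently $\beta=0$), where the extra Dirichlet term $\bar\rho\int p\,\|\nabla y\|_2^2$ is finite and $C$-independent. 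An energy unbounded below has no minimizer, so neither boundary case is well-posed, which is precisely the contrapositive form of the claim.

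For the converse I would either stop at the observation that $\gamma\in(0,1)$ removes exactly the two obstructions just identified, or, for a genuine existence statement, cite \Cref{prop:summary_main}(iv) / \Cref{prop:interior_exist}, which under the uniform precision bound $0<c\le\ftp\le\lambda_{\max}$ yields at least one solution in $H^1(\mathcal X)^2$ for every $\rho,\gamma\in(0,1)$. I do not anticipate a genuine obstacle here; the only points needing care are (a) carrying the positive factor $\rho$ through the reparameterization so that ``unbounded below'' and ``well-posed'' transfer faithfully between the $(\alpha,\beta)$ and $(\rho,\gamma)$ descriptions, and (b) separating out the degenerate case $\rho=1$, where $\bar\rho=0$ forces $\alpha=\beta=0$ for trivial reasons and which is instead covered by \Cref{prop:general_extremes}(i) (no stationary solution), so that the corollary is correctly read as a statement about the regime $\rho\in(0,1)$.
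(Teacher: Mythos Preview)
Your proposal is correct and follows essentially the same route as the paper: both arguments reduce the corollary to parts (iii) and (iv) of \Cref{prop:general_extremes}, using the explicit construction $\ftmu\equiv y$, $\ftp\equiv C\to\infty$ to drive the functional to $-\infty$. Your version is more thorough than the paper's in that you spell out the algebraic identification $\alpha=\tfrac{\bar\rho}{\rho}\gamma$, $\beta=\tfrac{\bar\rho}{\rho}\bar\gamma$ via factoring $\rho$ out of \eqref{rhogamma}, and you handle the edge case $\rho=1$ explicitly (where $\bar\rho=0$ forces $\alpha=\beta=0$ regardless of $\gamma$); the paper simply asserts the correspondence and leaves the necessity direction implicit in the statement.
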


\begin{proof}
If $\alpha=0$ (i.e., $\gamma=0$), part (iii) of \cref{prop:general_extremes} shows that the functional $\mathbf{S}_{\rho,0}$ is unbounded below: choosing $\hat\mu \equiv y$ and $\hat\Lambda \equiv C$ with $C\to\infty$ nulls the residual term while the $-\log \hat\Lambda$ contribution drives $\mathbf{S}_{\rho,0}[y,C]\to -\infty$. 
Similarly, if $\beta=0$ (i.e., $\bar\gamma=0$), part (iv) shows that the same construction yields $\mathbf{S}_{\rho,1}[y,C]\to -\infty$. 
Thus in either case ($\alpha=0$ or $\beta=0$) the objective is unbounded below, proving the necessity of $\alpha,\beta>0$ for well-posedness when $\rho>0$.
\end{proof}

\begin{prop}[Existence of a minimizer for interior regularization weights]
\label{prop:interior_exist}
Let $\mathcal X \subset \mathbb R^d$ be a bounded, connected Lipschitz domain,
and let $p \in C^1(\overline{\mathcal X})$ be strictly positive on
$\overline{\mathcal X}$. Fix $\rho,\gamma \in (0,1)$ with
$\bar\rho = 1-\rho$ and $\bar\gamma = 1-\gamma$. Assume that the observed data field satisfies $y \in H^1(\mathcal X)$ (and hence
$y \in L^2(\mathcal X)$).

Fix constants $0 < \lambda_{\min} < \lambda_{\max} < \infty$, and define the
admissible set
\[
\mathcal{A}
:= \Big\{ (\ftmu, \ftp) \in H^1(\mathcal{X}) \times H^1(\mathcal{X}) \,:\,
\lambda_{\min} \leq \ftp(x) \leq \lambda_{\max} \ \text{a.e.\ in }\mathcal X
\Big\}.
\]
Then the variational objective
\[
\mathbf{S}_{\rho,\gamma}[\ftmu,\ftp]
= \int_{\mathcal{X}} p(x)
\Big[-\rho \log \hat{p}(y\sep x)
+ \bar{\rho}\big(\gamma\|\nabla \ftmu(x)\|_2^2
+ \bar{\gamma}\|\nabla \ftp(x)\|_2^2 \big)\Big] dx,
\]
where $\hat{p}(y \sep x) = \mathcal{N}(y \sep \ftmu(x), \ftp(x)^{-1})$, admits
at least one minimizer $(\ftmu^*, \ftp^*) \in \mathcal{A}$.
\end{prop}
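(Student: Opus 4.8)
The plan is to prove existence by the direct method in the calculus of variations. First I would rewrite $\mathbf S_{\rho,\gamma}$ in the equivalent Gaussian form used in \cref{prop:generalFT}, so that the integrand becomes $\rho\big[\tfrac12\ftp\,\hat r^2 - \tfrac12\log\ftp\big] + \bar\rho\big[\gamma\|\nabla\ftmu\|_2^2 + \bar\gamma\|\nabla\ftp\|_2^2\big]$ with $\hat r = \ftmu - y$. On the admissible set $\mathcal A$ the term $-\tfrac12\log\ftp$ is bounded below by $-\tfrac12\log\lambda_{\max}$, the residual and both weighted Dirichlet terms are nonnegative, and $p$ is bounded above on $\overline{\mathcal X}$; hence $\mathbf S_{\rho,\gamma}$ is bounded below on $\mathcal A$ and $I := \inf_{\mathcal A}\mathbf S_{\rho,\gamma}$ is finite.

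Next I would take a minimizing sequence $(\ftmu_n,\ftp_n)\subset\mathcal A$ with $\mathbf S_{\rho,\gamma}[\ftmu_n,\ftp_n]\to I$ and extract uniform $H^1$ bounds. Since every term of the functional is nonnegative except the $-\tfrac12\log\ftp$ term, which is bounded below by a constant on $\mathcal A$, boundedness of the total from above forces each individual piece to be uniformly bounded. Using $p\ge p_{\min}>0$, this yields $\|\nabla\ftmu_n\|_{L^2}$ and $\|\nabla\ftp_n\|_{L^2}$ uniformly bounded; the pointwise constraint $\lambda_{\min}\le\ftp_n\le\lambda_{\max}$ gives $\|\ftp_n\|_{L^2}$ bounded; and the residual bound $\int_{\mathcal X} p\,\ftp_n(\ftmu_n-y)^2\,dx\le C$ together with $\ftp_n\ge\lambda_{\min}>0$ gives $\|\ftmu_n-y\|_{L^2}$ bounded, hence $\|\ftmu_n\|_{L^2}$ bounded because $y\in L^2(\mathcal X)$. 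I expect this last step to be the main obstacle: the Dirichlet energy alone controls $\ftmu_n$ only modulo additive constants, so one genuinely needs the data-fidelity term, and it is precisely here that the hypotheses $\lambda_{\min}>0$ and $p>0$ are used (this is the role of the uniform precision bounds imposed in $\mathcal A$).

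Then I would pass, along a subsequence, to weak limits $\ftmu_n\rightharpoonup\ftmu^*$ and $\ftp_n\rightharpoonup\ftp^*$ in $H^1(\mathcal X)$. Since $\mathcal X$ is a bounded Lipschitz domain, the Rellich--Kondrachov embedding $H^1(\mathcal X)\hookrightarrow L^2(\mathcal X)$ is compact, so these converge strongly in $L^2(\mathcal X)$ and, after a further subsequence, pointwise a.e.\ in $\mathcal X$. The a.e.\ bounds $\lambda_{\min}\le\ftp_n\le\lambda_{\max}$ then pass to the limit $\ftp^*$, so $(\ftmu^*,\ftp^*)\in\mathcal A$; i.e.\ $\mathcal A$ is sequentially weakly closed in $H^1(\mathcal X)^2$.

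Finally I would verify sequential weak lower semicontinuity of $\mathbf S_{\rho,\gamma}$ term by term and conclude. The weighted Dirichlet energies $f\mapsto\int_{\mathcal X} p\,\|\nabla f\|_2^2\,dx$ are convex and norm-continuous on $H^1$, hence weakly lower semicontinuous. The $-\tfrac12\log\ftp_n$ contribution converges by dominated convergence, since $\ftp_n\to\ftp^*$ a.e.\ with values confined to $[\lambda_{\min},\lambda_{\max}]$, where $-\log$ is bounded and continuous. For the residual term, $\ftp_n(\ftmu_n-y)^2\to\ftp^*(\ftmu^*-y)^2$ a.e.\ (using a.e.\ convergence of both $\ftmu_n$ and $\ftp_n$) and the integrand is nonnegative, so Fatou's lemma gives $\int p\,\ftp^*(\ftmu^*-y)^2\,dx\le\liminf\int p\,\ftp_n(\ftmu_n-y)^2\,dx$. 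Combining these four estimates with the positive weights $\rho,\bar\rho,\gamma,\bar\gamma$ yields $\mathbf S_{\rho,\gamma}[\ftmu^*,\ftp^*]\le\liminf\mathbf S_{\rho,\gamma}[\ftmu_n,\ftp_n]=I$, and since $(\ftmu^*,\ftp^*)\in\mathcal A$ this pair attains the infimum, which is the claimed minimizer.
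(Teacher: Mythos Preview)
Your proposal is correct and follows essentially the same route as the paper: the direct method with (i) a lower bound on $\mathbf S_{\rho,\gamma}$ over $\mathcal A$ and uniform $H^1$ bounds on a minimizing sequence (using $\ftp_n\ge\lambda_{\min}$ and $p\ge p_{\min}$ to control $\|\ftmu_n-y\|_{L^2}$), (ii) weak $H^1$ compactness plus Rellich--Kondrachov for strong $L^2$ and a.e.\ convergence, and (iii) weak lower semicontinuity of the Dirichlet energies together with a limit argument for the data term. The only cosmetic difference is that the paper shows the data term actually \emph{converges} along the subsequence (via strong $L^2$ convergence of $\ftmu_n$ and boundedness of $\ftp_n$), whereas you use Fatou's lemma to obtain the $\liminf$ inequality; since only the $\liminf$ is needed, your version is equally valid and arguably cleaner.
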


\begin{proof}
We apply the direct method of the calculus of variations
(see~\citet[Sec.~8.2]{evans_partial_2010}). Because $p$ is continuous and strictly
positive on the bounded domain $\mathcal X$, it is bounded above and below by
positive constants, so the weighted and unweighted $L^2$ and $H^1$ norms are
equivalent.

\paragraph{1. Coercivity.}
Writing $(\hat\mu,\hat\Lambda)$ for a generic admissible pair, the functional
can be expressed as
\[
\mathbf{S}_{\rho,\gamma}[\hat\mu, \hat\Lambda]
= \int_{\mathcal{X}} p(x) \Big[
\frac{\rho}{2} \big( \hat\Lambda(x)(\hat\mu(x) - y(x))^2
- \log \hat\Lambda(x) \big)
+ \bar\rho \big( \gamma \|\nabla \hat\mu(x)\|_2^2
+ \bar\gamma \|\nabla \hat\Lambda(x)\|_2^2 \big)
\Big] dx.
\]
Since $\hat\Lambda(x) \in [\lambda_{\min},\lambda_{\max}]$ a.e., we have, for
all $x\in\mathcal X$,
\[
\hat\Lambda(x)(\hat\mu(x)-y(x))^2 - \log \hat\Lambda(x)
\;\ge\;
\lambda_{\min}(\hat\mu(x)-y(x))^2 - \log \lambda_{\max}.
\]
Multiplying by $p(x)\frac{\rho}{2}$ and integrating over $\mathcal X$ yields
\begin{align*}
\int_{\mathcal X} p(x)\,\frac{\rho}{2}
\big( \hat\Lambda(x)&(\hat\mu(x)-y(x))^2 - \log \hat\Lambda(x) \big)\,dx\\
&\;\ge\;
\int_{\mathcal X} p(x)\,\frac{\rho}{2}
\big( \lambda_{\min}(\hat\mu(x)-y(x))^2 - \log \lambda_{\max} \big)\,dx \\[0.4em]
&=
\frac{\rho\lambda_{\min}}{2}
\int_{\mathcal X} p(x)\,(\hat\mu(x)-y(x))^2\,dx
\;-\;
\frac{\rho\log\lambda_{\max}}{2}\int_{\mathcal X} p(x)\,dx.
\end{align*}
Since $p$ is continuous and strictly positive on the bounded domain
$\overline{\mathcal X}$, there exist constants
$0 < p_{\min} \le p(x) \le p_{\max} < \infty$ for all $x\in\overline{\mathcal X}$.
Using $p(x)\ge p_{\min}$, we obtain
\[
\int_{\mathcal X} p(x)\,(\hat\mu(x)-y(x))^2\,dx
\;\ge\;
p_{\min} \int_{\mathcal X} (\hat\mu(x)-y(x))^2\,dx
= p_{\min}\,\|\hat\mu - y\|_{L^2(\mathcal X)}^2.
\]
Thus the data term is bounded below by
\[
\int_{\mathcal X} p(x)\,\frac{\rho}{2}
\big( \hat\Lambda(\hat\mu-y)^2 - \log \hat\Lambda \big)\,dx
\;\ge\;
c_1 \|\hat\mu - y\|_{L^2(\mathcal X)}^2 - C_0,
\]
for some constants $c_1>0$ and $C_0>0$ depending on
$(\rho,\lambda_{\min},\lambda_{\max},p)$.

For the gradient terms, we similarly have
\begin{align*}
\int_{\mathcal X} p(x)\,\gamma \|\nabla\hat\mu(x)\|_2^2\,dx
&\;\ge\;
\gamma p_{\min} \int_{\mathcal X} \|\nabla\hat\mu(x)\|_2^2\,dx
= \gamma p_{\min}\,\|\nabla\hat\mu\|_{L^2(\mathcal X)}^2  \quad \text{and}\\
\int_{\mathcal X} p(x)\,\bar\gamma \|\nabla\hat\Lambda(x)\|_2^2\,dx
&\;\ge\;
\bar\gamma p_{\min} \int_{\mathcal X} \|\nabla\hat\Lambda(x)\|_2^2\,dx
= \bar\gamma p_{\min}\,\|\nabla\hat\Lambda\|_{L^2(\mathcal X)}^2.
\end{align*}
Combining these estimates, we obtain
\[
\mathbf{S}_{\rho,\gamma}[\hat\mu,\hat\Lambda]
\;\ge\;
c_1 \|\hat\mu - y\|_{L^2(\mathcal X)}^2
+ c_2 \Big( \|\nabla \hat\mu\|_{L^2(\mathcal X)}^2
+ \|\nabla \hat\Lambda\|_{L^2(\mathcal X)}^2 \Big)
- C,
\]
for some constants $c_1,c_2,C>0$ depending on
$(\rho,\gamma,\lambda_{\min},\lambda_{\max},p)$ and $y$.

Since
\[
\|\hat\mu\|_{L^2(\mathcal X)} \le \|\hat\mu-y\|_{L^2(\mathcal X)} + \|y\|_{L^2(\mathcal X)}
\]
and
\[
\|\hat\Lambda\|_{L^2(\mathcal X)}^2
= \int_{\mathcal X} |\hat\Lambda(x)|^2\,dx
\le \lambda_{\max}^2 |\mathcal X|
\quad\text{for all admissible $\hat\Lambda$},
\]
we can absorb these constants to obtain
\[
\mathbf{S}_{\rho,\gamma}[\hat\mu,\hat\Lambda]
\;\ge\;
C_1\Big( \|\hat\mu\|_{H^1(\mathcal X)}^2
+ \|\hat\Lambda\|_{H^1(\mathcal X)}^2 \Big) - C_2,
\]
for some $C_1,C_2>0$, where
\[
\|\hat\mu\|_{H^1(\mathcal X)}^2
:= \|\hat\mu\|_{L^2(\mathcal X)}^2 + \|\nabla\hat\mu\|_{L^2(\mathcal X)}^2,
\quad
\|\hat\Lambda\|_{H^1(\mathcal X)}^2
:= \|\hat\Lambda\|_{L^2(\mathcal X)}^2 + \|\nabla\hat\Lambda\|_{L^2(\mathcal X)}^2.
\]
Thus $\mathbf S_{\rho,\gamma}$ is coercive on $\mathcal A$.

\paragraph{2. Minimizing sequence and compactness.}
Let $(\hat\mu_n,\hat\Lambda_n)\in\mathcal A$ be a minimizing sequence, i.e.
\[
    \lim_{n\to\infty} \mathbf S_{\rho,\gamma}[\hat\mu_n,\hat\Lambda_n]
    = \inf_{(\hat\mu,\hat\Lambda)\in\mathcal A}
    \mathbf S_{\rho,\gamma}[\hat\mu,\hat\Lambda].
\]
By coercivity (Step~1), the sequence $(\hat\mu_n,\hat\Lambda_n)$ is bounded in $H^1(\mathcal X)\times H^1(\mathcal X)$.  
Since $H^1(\mathcal X)$ is a Hilbert (hence reflexive) space, every bounded sequence has a weakly convergent subsequence.  
Passing to such a subsequence (not relabeled), there exists $(\hat\mu^*,\hat\Lambda^*)\in H^1(\mathcal X)^2$ such that
\[
    \hat\mu_n \rightharpoonup \hat\mu^*
    \quad\text{and}\quad
    \hat\Lambda_n \rightharpoonup \hat\Lambda^*
    \quad\text{weakly in } H^1(\mathcal X).
\]

Weak convergence controls functions and their gradients only in an averaged sense, which is insufficient to pass to the limit in the nonlinear term $\hat\Lambda(\hat\mu-y)^2$.  
To obtain pointwise and $L^2$ convergence, we use a standard compactness result: on any bounded Lipschitz domain, the Sobolev embedding $H^1(\mathcal X)\hookrightarrow L^2(\mathcal X)$ is \emph{compact} (Rellich--Kondrachov).  
Thus, up to a further subsequence,
\[
    \hat\mu_n \to \hat\mu^*,\qquad
    \hat\Lambda_n \to \hat\Lambda^*
    \quad\text{strongly in } L^2(\mathcal X),
\]
and in particular almost everywhere on~$\mathcal X$.

Because each $(\hat\mu_n,\hat\Lambda_n)$ lies in the admissible set $\mathcal A$, we have the pointwise bounds $\lambda_{\min}\le \hat\Lambda_n(x)\le\lambda_{\max}$ a.e.\
Strong $L^2$ convergence implies almost-everywhere convergence along the subsequence, so these bounds pass to the limit:
\[
    \lambda_{\min} \le \hat\Lambda^*(x) \le \lambda_{\max}
    \quad\text{for a.e.\ }x.
\]
Hence $(\hat\mu^*,\hat\Lambda^*)\in\mathcal A$.

\paragraph{3. Lower semicontinuity and passing to the limit.}
The gradient regularizers
\[
    \int_{\mathcal X} p(x)\,\gamma\|\nabla \hat\mu(x)\|_2^2\,dx,
    \qquad
    \int_{\mathcal X} p(x)\,\bar\gamma\|\nabla \hat\Lambda(x)\|_2^2\,dx
\]
are weakly lower semicontinuous in $H^1(\mathcal X)$, as they are convex quadratic forms in the gradients.  

For the data-fitting term, the strong $L^2$ convergence of $\hat\mu_n$ and $\hat\Lambda_n$, together with the uniform bounds $\lambda_{\min}\le \hat\Lambda_n\le\lambda_{\max}$ and continuity of the map
\[
    (u,\lambda)\mapsto \lambda(u-y)^2 - \log \lambda
    \quad
    \text{on }\mathbb R\times[\lambda_{\min},\lambda_{\max}],
\]
implies
\begin{align}
\int_{\mathcal X} p(x)\,
\Big[\hat\Lambda_n(x)(\hat\mu_n(x)-y(x))^2 - &\log \hat\Lambda_n(x)\Big] dx \\ 
&\longrightarrow\;
\int_{\mathcal X} p(x)\,
\Big[\hat\Lambda^*(x)(\hat\mu^*(x)-y(x))^2 - \log \hat\Lambda^*(x)\Big] dx.
\end{align}
Thus the data term is continuous along the minimizing subsequence.

Combining the weak lower semicontinuity of the gradient terms with the continuity of the data term yields
\[
    \mathbf S_{\rho,\gamma}[\hat\mu^*,\hat\Lambda^*]
    \;\le\;
    \liminf_{n\to\infty} \mathbf S_{\rho,\gamma}[\hat\mu_n,\hat\Lambda_n]
    =
    \inf_{(\hat\mu,\hat\Lambda)\in\mathcal A}
    \mathbf S_{\rho,\gamma}[\hat\mu,\hat\Lambda].
\]
Thus $(\hat\mu^*,\hat\Lambda^*)$ attains the infimum of $\mathbf S_{\rho,\gamma}$ over $\mathcal A$.  
This completes the existence proof via the direct method of the calculus of variations; see~\citet[Sec.~8.2, Thm.~2]{evans_partial_2010}.
\end{proof}

\begin{remark}[Interpretation of Bounded Precision Assumption]\label{rem:bounded_prec}
The restriction \( \lambda_{\min} \leq \hat\Lambda(x) \leq \lambda_{\max} \) ensures that the predicted precision (inverse variance) remains within a physically and statistically meaningful range. 
From a modeling standpoint, this prevents pathological behavior:
\begin{itemize}
    \item Allowing \( \hat\Lambda(x) \to 0 \) corresponds to arbitrarily large predictive variance, i.e., extreme uncertainty, which is typically uninformative and numerically unstable.
    \item Allowing \( \hat\Lambda(x) \to \infty \) implies vanishing predictive variance, i.e., extreme overconfidence, even in regions with limited or noisy data—this can lead to poor generalization.
\end{itemize}
It is important to note that this assumption is made purely for mathematical tractability: the existence of such bounds is sufficient for the argument, and they need not be tight. 
For instance, one may take \( \lambda_{\min} = 2^{-100} \), \( \lambda_{\max} = 2^{100} \) and the proof still holds. 
\end{remark}

\section{Bayesian Field Theory (Supplementary Derivations)}
\label{app:bft_appendix}

This section provides the explicit mathematical formulation and discretization details 
for the Bayesian reformulation of the field theory (BFT) introduced in 
\cref{sec:bft_main}.  
It formalizes the two parameterizations of the variance field, derives the MAP and 
functional gradients, and outlines the weak convergence of the corresponding 
discretized priors.

\subsection{MAP-equivalent parameterization}
\label{app:bft_appendix_additive}

Let $\bar\rho := 1 - \rho$ and $\bar\gamma := 1 - \gamma$.  
We parameterize $\ftp=e^{\hat{\eta}}$ to enforce $\ftp>0$ and choose priors
so that the deterministic FT regularizers are recovered exactly at the posterior mode,
making the MAP equations identical to those of the FT.

\subsubsection{Priors}
\begin{align}
-\log \pi(\hat\mu)
&= \tfrac{\gamma}{2}\int_{\mathcal X} p(x)\,\|\nabla\hat\mu(x)\|^2\,dx,\\
-\log \pi(\hat{\eta})
&= \tfrac{\bar\gamma}{2}\int_{\mathcal X} p(x)\,e^{2\hat{\eta}(x)}\,\|\nabla\hat{\eta}(x)\|^2\,dx.
\end{align}
Homogeneous Neumann boundary conditions
$p\nabla\hat\mu\!\cdot\!\mathbf{n}
 =p\,e^{2\hat{\eta}}\nabla\hat{\eta}\!\cdot\!\mathbf{n}=0$
ensure vanishing boundary terms.  
These priors act as Gaussian Markov random field (GMRF) and log-convex field priors, respectively, enforcing smoothness and positivity.

\subsubsection{Scaling conventions between FT and BFT}
\label{app:ft_bft_scaling}
The field--theoretic functional in \cref{eq:ft_def} omits the global 
$\tfrac{1}{2}$ factor in its Dirichlet energies for notational simplicity 
and consistency with our simulations.
In the Bayesian Field Theory (BFT) formulation, an equivalent $\tfrac{1}{2}$ 
appears in the Gaussian prior and likelihood log--densities 
(see~\cref{eq:bft_priors_main}).
This factor can be absorbed into the definition of the prior precision or covariance, 
so it merely rescales the regularization weights without changing the stationary 
equations.  
Consequently, the posterior mode and the field--theoretic stationary conditions remain identical, and the equivalence $\mathrm{MAP}\equiv\mathrm{FT}$ holds exactly.

\subsubsection{Posterior and functional gradients}
Combining likelihood and priors gives
\begin{align}
\Phi(\hat\mu,\hat\eta)
=\int p\,\Big\{
 \rho\big[\tfrac12 e^{\hat\eta}(y-\hat\mu)^2-\tfrac12\hat\eta\big]
 +\tfrac{\bar\rho}{2}\big[\gamma\|\nabla\hat\mu\|^2
 +\bar\gamma\,e^{2\hat\eta}\|\nabla\hat\eta\|^2\big]
\Big\}.
\end{align}

Its gradients (before discretization) are:
\begin{align}
\nabla_{\hat\mu}\Phi
&=\rho\,p\,e^{\hat{\eta}}(\hat\mu-y)
 -\bar\rho\,\gamma\,\nabla\!\cdot(p\nabla\hat\mu),\\
\nabla_{\hat{\eta}}\Phi
&=\tfrac{\rho\,p}{2}\!\left(e^{\hat{\eta}}(y-\hat\mu)^2-1\right)
 -\bar\rho\,\bar\gamma\,\nabla\!\cdot(p\,e^{2\hat{\eta}}\nabla\hat{\eta})
 +\bar\rho\,\bar\gamma\,p\,e^{2\hat{\eta}}\|\nabla\hat{\eta}\|^2.
\end{align}
At $\nabla\Phi=0$, these coincide with the FT Euler–Lagrange equations.

\paragraph{Note on normalization}
The prior on $\hat{\eta}$ guarantees $\hat\Lambda>0$ but leaves constant (mean) modes unpenalized.  
To ensure a proper posterior, one may add a small stabilizer term
$\frac{\varepsilon}{2}\int p(\hat{\eta}-\hat{\eta}_0)^2\,dx$, 
which normalizes the prior without affecting the stationary equations or MAP solution.

\subsection{Discretization and weak convergence}
\label{app:bft_discrete_appendix}

Let $\mathcal{G}_h$ denote a shape--regular mesh of $\mathcal{X}$ with mesh size $h = \max_{K\in\mathcal{G}_h} \mathrm{diam}(K) \to 0$.
Shape--regularity means that all elements (triangles, tetrahedra, or grid cells) have uniformly bounded aspect ratio, i.e., no element becomes arbitrarily thin or degenerate as the mesh is refined.
This ensures numerical stability of the discrete gradient and Laplacian operators.

The mesh need not be uniform.
In our setting, the node density of $\mathcal{G}_h$ is proportional to the sampling density $p(x)$, corresponding to the empirical discretization used throughout the field theory.
Regions of higher $p(x)$ therefore receive finer resolution, while maintaining shape--regularity in each local neighborhood.

Denote by $\mathbf{L}_{p,h}$ the standard finite--element (or weighted graph) discretization of the weighted Laplacian
\[
\mathcal{L}_p f = -\nabla\!\cdot(p\nabla f)
\]
with homogeneous Neumann boundary conditions (zero normal flux on boundary facets).
Let $\nabla_h$ be the discrete gradient operator and
$\mathbf{M}_{p,h}$ the (possibly lumped) $p$--weighted mass matrix.
The discrete priors then read
\[
-\log \pi(\hat{\boldsymbol{\mu}}) 
= \tfrac{\gamma}{2}\,\hat{\boldsymbol{\mu}}^\top \mathsf{L}_{p,h}\, \hat{\boldsymbol{\mu}},
\qquad
-\log \pi(\hat{\boldsymbol{\eta}}) 
= \tfrac{\bar{\gamma}}{2}\, 
  \big(\mathrm{e}^{\hat{\boldsymbol{\eta}}}\!\odot\nabla_h \hat{\boldsymbol{\eta}}\big)^\top
  \mathbf{M}_{p,h}\,
  \big(\mathrm{e}^{\hat{\boldsymbol{\eta}}}\!\odot\nabla_h \hat{\boldsymbol{\eta}}\big),
\]
where $\odot$ denotes elementwise multiplication.

Under standard finite--element assumptions
(shape--regularity, bounded domain, and $h\!\to\!0$),
these discrete priors converge weakly to their continuous counterparts.
If the node density of $\mathcal{G}_h$ tracks $p(x)$, then $\mathbf{L}_{p,h}$ and $\mathbf{M}_{p,h}$ provide consistent approximations of the weighted operators associated with $\mathcal{L}_p$.
This includes the graph--based discretizations used in practice, which can be viewed as stochastic FEM quadratures under the empirical measure $p(x)\,dx$.
For rigorous convergence results in this setting, see \citet{lindgren_explicit_2011}, who prove weak convergence of discrete GMRFs and SPDE operators to their continuous Mat\'{e}rn and elliptic limits.

\subsection{Gaussian Field–Process Equivalence}
\label{app:bft_gp_equivalence}

A Gaussian random field (GRF) $f(x)$ is a Gaussian measure over functions $f:\mathcal{X}\!\to\!\mathbb{R}$ specified by a precision (inverse-covariance) operator~$\mathcal{L}$.  
Formally, the prior density (up to normalization) is
\begin{equation}
    p(f)\;\propto\;
    \exp\!\left[-\tfrac{1}{2}\!\int_{\mathcal{X}}
    f(x)\,(\mathcal{L}f)(x)\,dx\right],
    \label{eq:grf_prior}
\end{equation}
where $\mathcal{L}$ is typically a positive-definite elliptic operator such as the (weighted) Laplacian
\(
\mathcal{L}_p f = -\nabla\!\cdot(p(x)\nabla f(x)).
\)

The Green’s function $G(x,x')$ of~$\mathcal{L}$ satisfies
\(
\mathcal{L}G(x,\!\cdot\,)=\delta(x-\!\cdot\,),
\)
and defines the covariance kernel
\begin{equation}
    k(x,x') \;=\; G(x,x')
    \;=\; \big(\mathcal{L}^{-1}\big)(x,x').
    \label{eq:green_gp}
\end{equation}
Hence a GRF with precision~$\mathcal{L}$ is equivalent to a Gaussian process $\mathcal{GP}(0,k)$ whose kernel is the inverse of~$\mathcal{L}$:
\[
f \sim \mathcal{N}(0, \mathcal{L}^{-1})
\quad\Longleftrightarrow\quad
f(x)\sim \mathcal{GP}(0,k(x,x')).
\]

For example, the Dirichlet-energy prior used throughout the field theory,
\begin{equation}
    p(\hat\mu)
    \propto
    \exp\!\Big[-\tfrac{\gamma}{2}\!\int_{\mathcal X}p(x)\,
    \|\nabla\hat\mu(x)\|^2dx\Big],
    \label{eq:dirichlet_gp_prior}
\end{equation}
is a zero-mean GRF with precision operator $\gamma\,\mathcal{L}_p$ and thus corresponds to a GP with covariance operator $(\gamma\,\mathcal{L}_p)^{-1}$.
Consequently, the Dirichlet energy acts as the negative log-density of a Gaussian process whose kernel is the Green’s function of the weighted Laplacian~$\mathcal{L}_p$:
\[
k_p(x,x') = \gamma^{-1} G_p(x,x'),
\qquad
\mathcal{L}_p G_p(x,\!\cdot\,)=\delta(x-\!\cdot\,).
\]

This operator-based Gaussian prior is consistent with the classical RKHS–Bayesian correspondence of \citet{kimeldorf_wahba_bayes_1970}, in which quadratic smoothness penalties  correspond to Gaussian priors whose covariance operators are the inverses of the associated differential operators.

This is a concrete instance of the general operator–kernel correspondence used to construct Gaussian fields from elliptic SPDEs, including the Mat\'ern  family \citep{lindgren_explicit_2011}, and it links our field-theoretic prior  directly to GP and RKHS smoothness priors.

\section{Experimental Details}
\label{app:exp_details}
\subsection{Datasets}
\label{app:datasets}

\begin{figure}
\centering
\includegraphics[width=.6\textwidth]{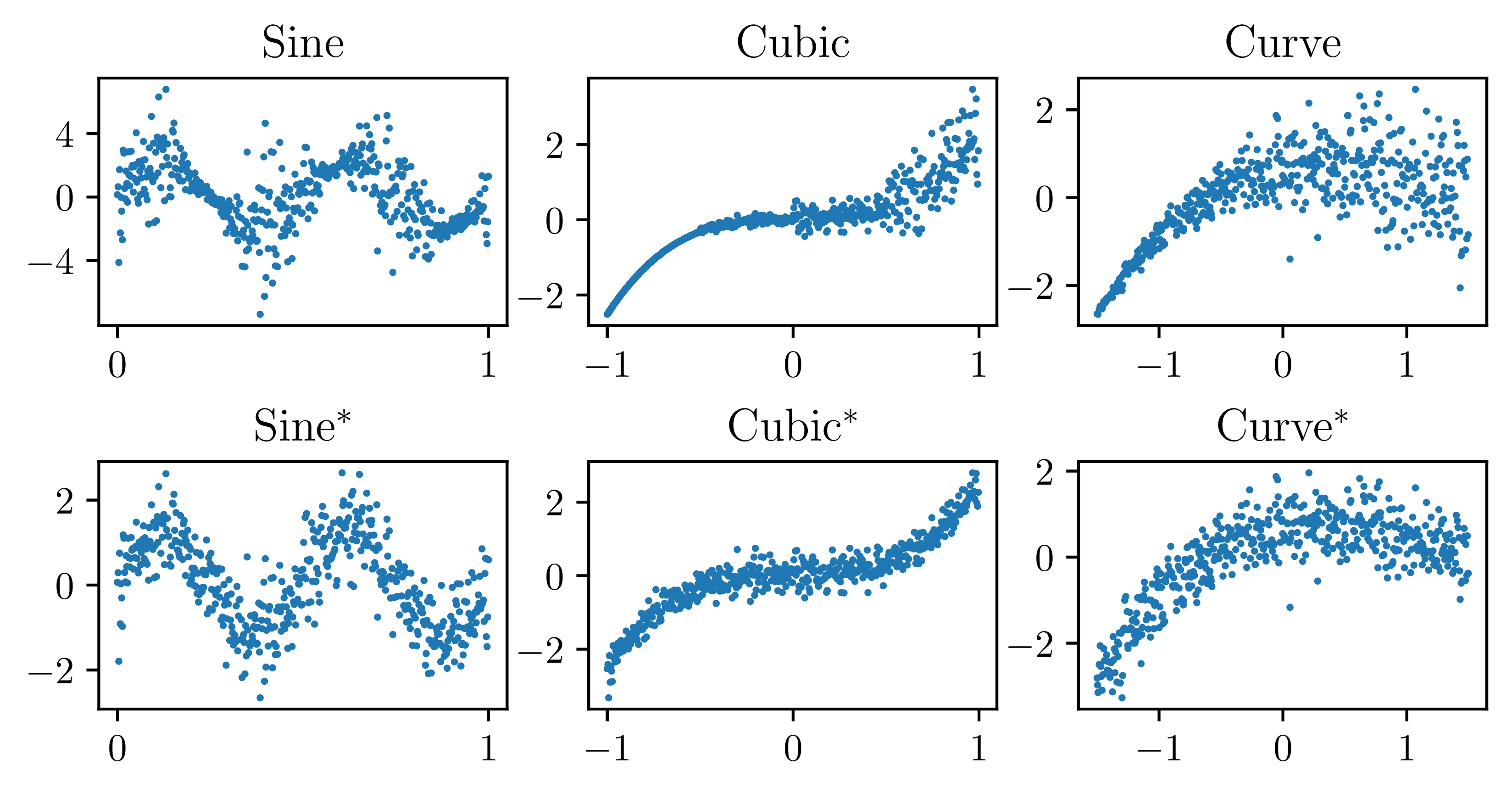}
\caption{Visualization of heteroskedastic and homoskedastic versions of simulated datasets. Specific details for the functional form of these can be found in \cref{tab:sim_data}.}
\label{fig:sim_data_vis}
\end{figure}

We chose 64 datapoints in each of the simulated datasets. The generating processes for each simulated dataset is included in \cref{tab:sim_data} and can be seen in \cref{fig:sim_data_vis}. The homoskedastic data is simulated in the same way, but with $f(x) = 1$. For testing, we simulate a new dataset of 64 datapoints with the same process. \cref{tab:uci_data} summarizes the UCI datasets. We provide a description of the ClimSim climate data in Appendix~\ref{app:climsimdesc}.
\begin{table*} %
\centering
\footnotesize
\caption{Simulated datasets. Each dataset is defined by a true $\mu$ function and then a noise function $f$. All data is generated as $\mu(x) + \epsilon(x)$ where $\epsilon(x) \sim \mathcal{N}(0, f(x)^2)$. After the datasets were generated they were scaled to have mean zero and standard deviation one. The homoskedastic versions of each dataset fix $f(x)=1$. The datasets are shown in \cref{fig:sim_data_vis}.}
\begin{tabular}{ llll }
\toprule 
Dataset  & Mean ($\mu$) & Noise Pattern ($f$) & Domain
\\
\midrule
Sine &  $\mu(x) = 2 \sin(4\pi x)$ & $f(x) = \sin(6\pi x) + 1.25$ & $x\in [0, 1]$\\
\midrule 
Cubic   &  $\mu(x) = x^3 
\; $ & $f(x) = \begin{cases}0.1 & \text{for } x < -0.5\\1 & \text{for }x \in [-0.5, 0.0) \\ 3 & \text{for } x \in [0.0, 0.5) \\ 10 & \text{for } x \ge .5 \end{cases}$&$x\in [-1, 1]$
\\
\midrule
Curve  &  $\mu(x) = x - 2x^2 + 0.5x^3$ & $f(x) = x + 1.5$ & $x \in [-1.5, 1.5]$
\\
\bottomrule
\end{tabular}
\label{tab:sim_data}
\end{table*}

\begin{table*} %
\centering
\footnotesize
\caption{UCI dataset details.}

\begin{tabular}{ lccc }
\toprule 
Dataset & Train Size & Test Size & Input Dimension 
\\
\midrule
Concrete &  687 & 343 & 8 \\
\midrule 
Housing   &  337 & 168 & 13 
\\
\midrule
Power  &  6379 & 3189 & 4 
\\
\midrule
Yacht & 204 & 102 & 6
\\
\bottomrule
\end{tabular}
\label{tab:uci_data}
\end{table*}

\subsection{Training Details}
\label{app:training}

For the neural-network experiments, we evaluated 22 values of $(\rho,\gamma)$ spaced log-uniformly on the logit scale between $10^{-10}$ and $1-10^{-5}$.  
For the field-theoretic models we used 20 values between $10^{-6}$ and $1-10^{-7}$, again logit-spaced.  
The ranges differ slightly due to numerical stability during fitting.
Along the line $\rho = 1-\gamma$, we sampled 100 values between $10^{-11}$ and $1$, using a dense log-uniform grid near the extremes and a uniform grid on $[0.1,0.9]$.
The limiting cases $\rho,\gamma \in \{0,1\}$ were excluded for stability.
The exact grids used for all sweeps are provided in the public repository.
All experiments were run on Nvidia Quadro RTX~8000 GPUs, totaling approximately 
500 GPU hours.

\subsection{Field Theory}
\label{app:FT}
For the discretized field theory we take $n_{ft}=4096$ evenly spaced points on the interval $[-1, 1]$. 
There are two datapoints placed beyond $[-1, 1]$ because the method we use to estimate the gradients requires the datapoints to have left and right neighbors. These datapoints were not included when computing our metrics. 
Of these 4096 datapoints 64 were randomly selected to be used for training neural networks $\nnmu, \nnLambda$.
The field theory results were consistent across choices of $n_{ft}\in\{256, 512, 1024, 2048, 4096\}$. 
We present results for $n_{ft}=4096$ in the main paper. 
We train for $100000$ epochs and use the Adam optimizer with a basic triangular cycle that scales initial amplitude by half each cycle on the learning rate. The minimum and maximum learning rates were 0.0005 and 0.01. The cycles were 5000 epochs long. We clip the gradients at 1000.

\subsection{Neural Network Training for Synthetic and UCI Data}
\label{app:nn-training}

Across all experiments we train the networks using Adam with a triangular cyclic learning rate schedule (min.\ 0.0001, max.\ 0.01), where each cycle reduces its amplitude by half.  
Unless otherwise noted, the cycle length is 50{,}000 epochs and gradients are clipped at 1000.
Training proceeds in two phases: we first train only the mean network $\nnmu$ for the initial portion of training, and then train both $\nnmu$ and $\nnLambda$ for the remainder.

\paragraph{Synthetic datasets.}
For all synthetic datasets except \emph{Sine}, we train for 600{,}000 epochs: the first 250{,}000 epochs train only $\nnmu$, followed by 350{,}000 epochs training both networks.  
For the \emph{Sine} dataset, we use the same setup but train for 2{,}500{,}000 epochs total.

\paragraph{UCI datasets.}
For \emph{Concrete}, \emph{Housing}, and \emph{Yacht}, we train for 500{,}000 epochs: 250{,}000 epochs on $\nnmu$ alone and 250{,}000 epochs jointly on $\nnmu$ and $\nnLambda$, using the same learning rate schedule.
For the \emph{Power} dataset, minibatching is required due to its size.  
We use a batch size of 1000 and train for 50{,}000 epochs in total, with 25{,}000 epochs devoted to $\nnmu$ alone and the remaining 25{,}000 epochs training both
networks.  
The same cyclic learning-rate schedule is used, but with a reduced cycle length of 5000.

\section{Additional Results}
\label{app:add_res}
Both FT and neural networks were fit to the heteroskedastic and homoskedastic synthetic datasets described in \cref{tab:sim_data}. The main results for these displayed as phase diagrams of various metrics can be seen in \cref{fig:synth_ft} and \cref{fig:synth_nn} respectively. We largely see the same trends as were exhibited by the real-world datasets seen in \cref{fig:summary_mean}.

\begin{figure}
\centering
\begin{subfigure}[t]{0.48\textwidth}
    \centering
    \includegraphics[width=\textwidth]{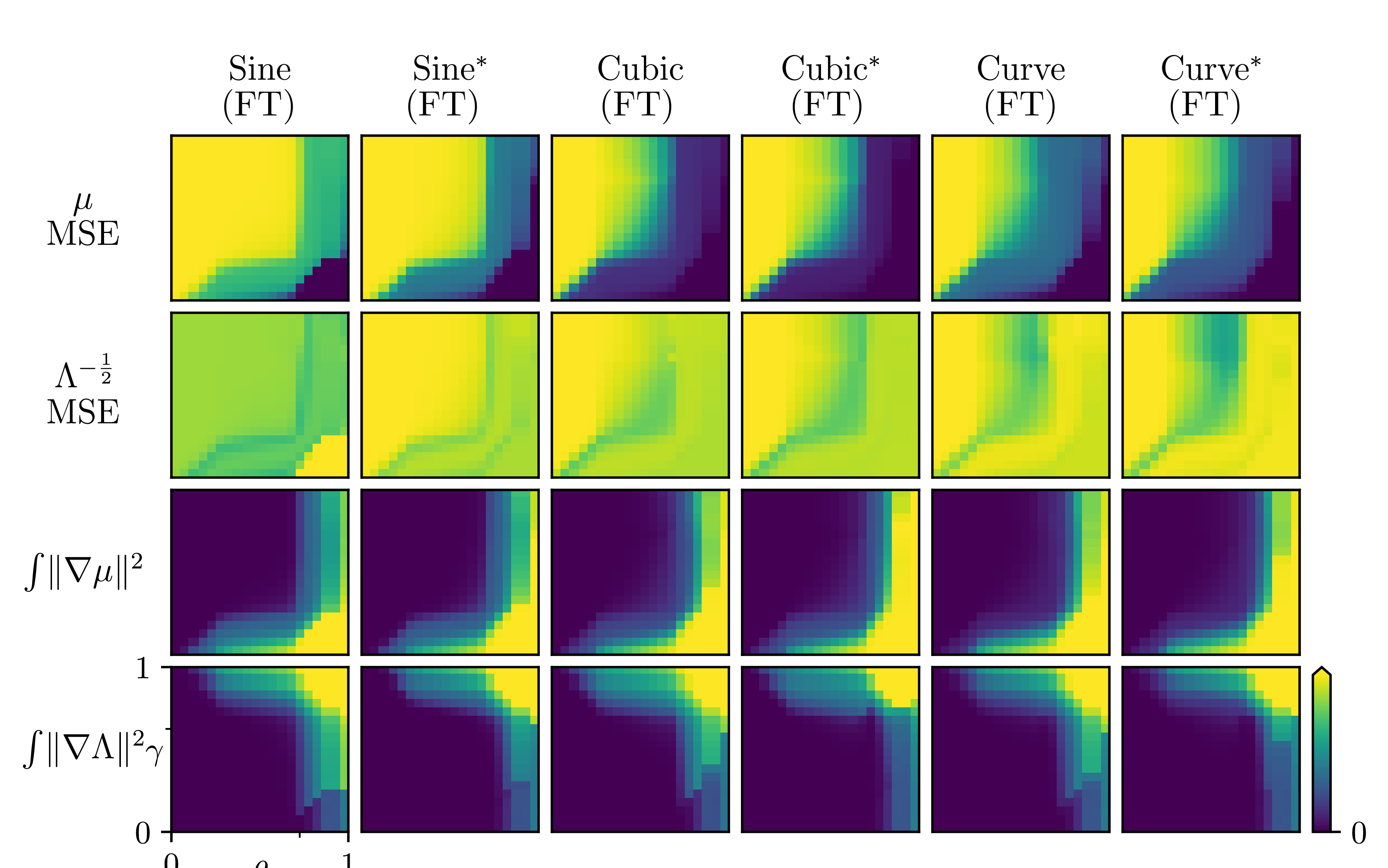}
    \caption{Field theory.}
    \label{fig:synth_ft}
\end{subfigure}
\hfill
\begin{subfigure}[t]{0.48\textwidth}
    \centering
    \includegraphics[width=\textwidth]{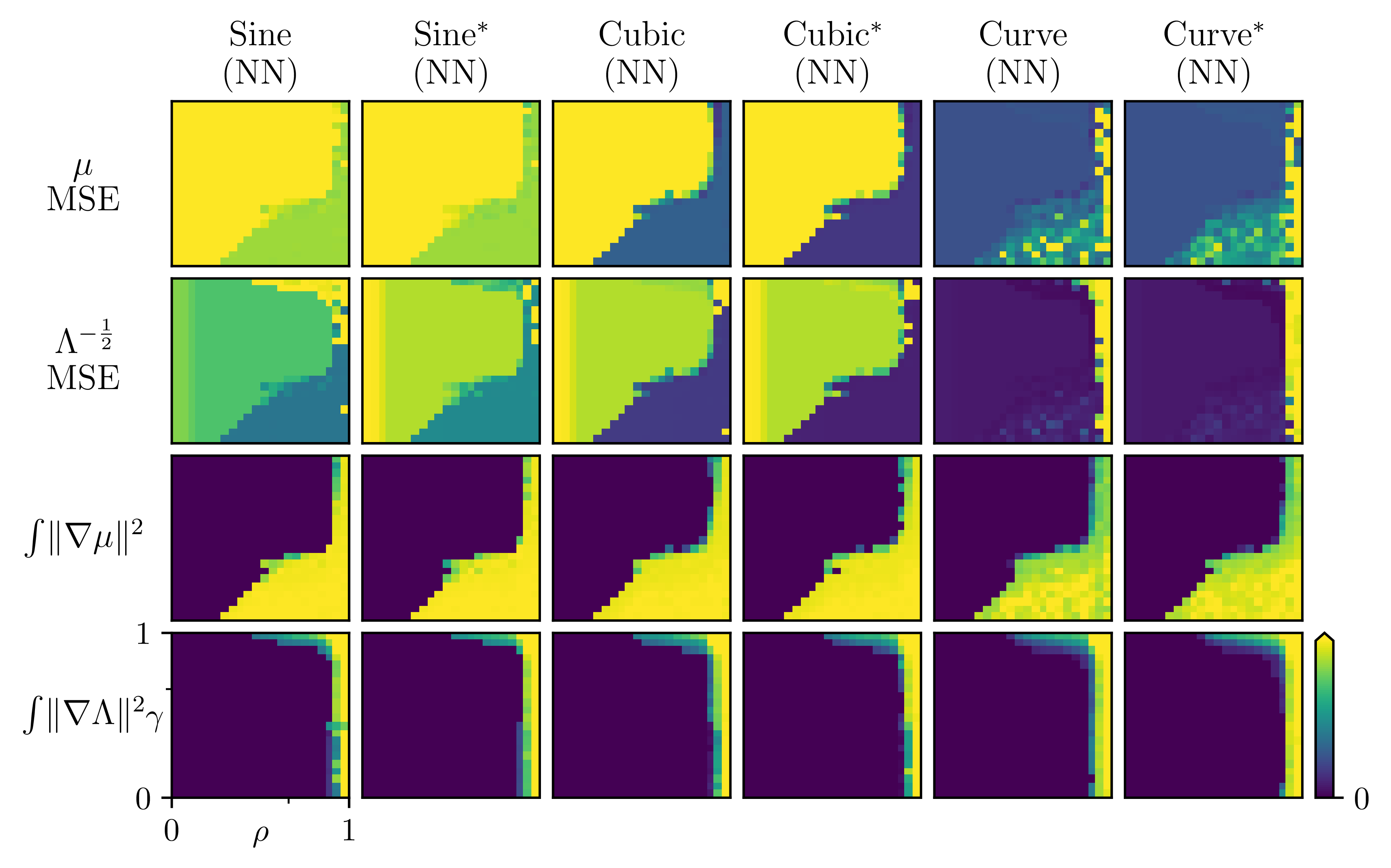}
    \caption{Neural network.}
    \label{fig:synth_nn}
\end{subfigure}

\caption{Phase diagrams for the field theory (\emph{left}) and neural networks (\emph{right}) on six synthetic datasets, in the same configuration as \cref{fig:summary_mean}. Dataset names with an $*$ denote homoskedastic counterparts.}
\label{fig:synth_ft_nn}
\end{figure}

\section{Comparison to Baselines}\label{app:baseline_comp}
We compare the performance of our diagonal $\rho=1-\gamma$ search against two baselines, $\beta$-NLL \citep{seitzer_pitfalls_2022} and an ensemble of six MLE-fit heteroskedastic regression models \citep{lakshminarayanan_simple_2017}. We use $\mu$ MSE, $\Lambda^{-\frac{1}{2}}$ MSE, and expected calibration error (ECE) to evaluate the models. In all cases lower values are better. Note that the method of ensembling multiple individual MLE-fit models from \cite{lakshminarayanan_simple_2017} could be implemented on our method or $\beta$-NLL as well.

\subsection{Model Architecture}
All (individual) models have the same architecture: fully connected neural networks with three hidden layers of 128 nodes and leaky ReLU activations for the synthetic and UCI datasets and fully connected neural networks with three hidden layers of 256 nodes for the ClimSim data \citep{yu_climsim_2023}. 
Note that both baselines model the variance while our approach models the precision (inverse-variance).
In all cases we use a softplus on the final layer of the variance/precision networks to ensure the output is positive.

For the $\beta-$NLL implementation we take $\beta=0.5$ as suggested in \cite{seitzer_pitfalls_2022}. 
The ensemble method we use fits 6 individual heteroskedastic neural networks and combines their outputs into a mixture distribution that is approximated with a normal distribution. We do not add in adversarial noise as the authors state it does not make a significant difference. We fit six $\beta-$NLL models and six MLE-ensembles.

\subsection{Diagonal selection criteria}\label{app:diag-crit}
After conducting our diagonal search we found the model that minimized $\mu$ MSE and the model that minimized $\Lambda^{-\frac{1}{2}}$ MSE on the \emph{training} data. 
In some cases these models coincided. 
We then used the model that was on the midpoint (on a logit scale) of the $\rho=1-\gamma$ line between these two models to compare.
The results are reported in \cref{tab:baselines}. 
In all cases our method is competitive with or exceeds the performance of these two baselines--particularly on real-world data. 
Note that our goal is to show that we are able to find models that model the mean and standard deviation of the data well, that is, lie in our proposed region \emph{S} of the phase diagram.   
We do not claim that this method will provide the globally optimal model. 

\subsection{Training Details}
Training for our method follows Appendices~\ref{app:training}~and~\ref{app:nn-training}. 
For the baselines, we use the same optimizer, gradient clipping, and cyclic learning-rate schedule.
On synthetic datasets we train for 600{,}000 epochs, and on the UCI datasets we train for 500{,}000 epochs (or 50{,}000 epochs with a batch size of 1000 for \emph{Power} due to its size).

\subsection{ClimSim Dataset}\label{app:climsimdesc}
The ClimSim dataset \citep{yu_climsim_2023} is a largescale climate dataset. 
Its input dimension is 124 and output dimension is 128. 
We use all 124 inputs to model a single output,
\emph{Visible direct solar flux, SOLS [$W/m^2$]}.
We train on 10,091,520 of the approximately 100 million points for training and we use a randomly selected 7,209 points to evaluate our models.

\begin{table*}%
\centering
\caption{Comparison of our deep heteroskedastic regression model (with diagonal 
regularization search; see Appendix~\ref{app:diag-crit}) against two baselines 
\citep{seitzer_pitfalls_2022, lakshminarayanan_simple_2017}.  
We report the mean $\pm$ standard deviation of ECE, $\mu$-MSE, and 
$\Lambda^{-1/2}$-MSE on test data, with the best mean in bold.  
MLE ensembles often fail to converge---typically through divergence of thepredicted standard deviation---producing $\inf$ or \texttt{nan} values and 
illustrating the instability of MLE-based heteroskedastic training.
}
\footnotesize
\begin{tabular}{ l *{3}{c} }
\toprule
{Dataset}&
{Mean-Variance} &
{$\beta$-NLL} &
{MLE Ensemble} \\
{\phantom{..........}Metric} & { Ours } &
{ \cite{seitzer_pitfalls_2022}} &
{\cite{lakshminarayanan_simple_2017}} \\
\midrule
Cubic & & &\\
\phantom{..........} ECE & \textbf{0.2380} ± 0.03 & 0.2385 ± 0.02 & 0.2411 ± 0.02 \\
\phantom{..........} $\mu$ MSE  & 0.2339 ± 0.01 & \textbf{0.1500} ± 0.01 & 1.1809 ± 1.88 \\
\phantom{..........} $\Lambda^{-\frac{1}{2}}$MSE  & 0.2397 ± 0.02 & \textbf{0.1397} ± 0.01 & inf ± nan     \\
Curve & & & \\
\phantom{..........} ECE  & 0.1804 ± 0.02 & \textbf{0.1754} ± 0.02 & 0.2432 ± 0.00 \\
\phantom{..........}  $\mu$ MSE & \textbf{0.4318} ± 0.12 & 0.4877 ± 0.16 & 1.0067 ± 0.19 \\
\phantom{..........} $\Lambda^{-\frac{1}{2}}$MSE  & 0.4655 ± 0.09 & \textbf{0.4187} ± 0.20 & inf ± nan     \\
Sine & & & \\
\phantom{..........} ECE   & 0.2499 ± 0.00 & \textbf{0.2082} ± 0.03 & 0.2313 ± 0.05 \\
\phantom{..........} $\mu$ MSE   & \textbf{0.7968} ± 0.00 & 4.4107 ± 6.90 & 0.9716 ± 0.06 \\
\phantom{..........} $\Lambda^{-\frac{1}{2}}$MSE   & \textbf{0.7968} ± 0.00 & 4.3524 ± 6.89 & inf ± nan     \\
\midrule
Concrete & & & \\
\phantom{..........} ECE  & 0.2471 ± 0.01 & 0.2552 ± 0.00           & \textbf{0.0655} ± 0.01           \\
\phantom{..........} $\mu$ MSE  & \textbf{0.1055} ± 0.02 & 0.5461 ± 0.30         & 2.2454 ± 1.74           \\
\phantom{..........} $\Lambda^{-\frac{1}{2}}$MSE  & \textbf{0.3028} ± 0.51 & 1.0867 ± 0.20 & $1.3 \times 10^5$ ± $1.2 \times 10^5$ \\
Housing & & & \\
\phantom{..........} ECE   & 0.0653 ± 0.00 & \textbf{0.2631} ± 0.01           & 0.1332 ± 0.02           \\
\phantom{..........} $\mu$ MSE   & \textbf{1.2236} ± 0.00 & 0.3175 ± 0.06      & 155.4494 ± 128.27       \\
\phantom{..........} $\Lambda^{-\frac{1}{2}}$MSE   & \textbf{0.7610} ± 0.00 & 0.8820 ± 0.03    & 218.8269 ± 195.38       \\
Power & & & \\
\phantom{..........} ECE    & 0.2233 ± 0.01 & 0.2370 ± 0.00           & \textbf{0.0285} ± 0.01           \\
\phantom{..........} $\mu$ MSE    & 0.0350 ± 0.01 & 0.1013 ± 0.01           & \textbf{0.0177} ± 0.00           \\
\phantom{..........} $\Lambda^{-\frac{1}{2}}$MSE    & 0.0343 ± 0.01 & 0.3081 ± 0.37           & \textbf{0.0091} ± 0.00           \\
Yacht & & & \\
\phantom{..........} ECE    & 0.3038 ± 0.04 & 0.2882 ± 0.02           & \textbf{0.0463} ± 0.02           \\
\phantom{..........} $\mu$ MSE    & \textbf{0.0077} ± 0.01 & 0.0137 ± 0.01        & 6.2670 ± 13.96          \\
\phantom{..........} $\Lambda^{-\frac{1}{2}}$MSE    & \textbf{0.0076} ± 0.01 & 1.3275 ± 0.02        & 8.0599 ± 19.18          \\
\midrule
Solar Flux & & & \\
\phantom{..........} ECE  & \textbf{0.1503} ± 0.00 & 0.3007 ± 0.00 & 0.1924 ± 0.04                    \\
\phantom{..........} $\mu$ MSE  & \textbf{0.2887} ± 0.00 & 0.3771 ± 0.00 & 1.0067 ± 0.19                    \\
\phantom{..........} $\Lambda^{-\frac{1}{2}}$MSE  & \textbf{0.1175} ± 0.00 & 0.3217 ± 0.00 & \hspace{0.85em} $4.6\times10^{9}$ ± $9.9\times 10^{9}$  \\
\bottomrule
\end{tabular}
\label{tab:baselines}
\end{table*}

\bibliographystyle{plainnat}  %
\bibliography{sample}

\end{document}